\newcommand*{\NEURIPS}{}
\newcommand*{\CAMREADY}{}
	\renewcommand{\cite}[1]{\citep{#1}}
	\definecolor{mydarkblue}{rgb}{0,0.08,0.5}
	\def\footnoterule{\kern-3pt \hrule width 12pc \kern 2.6pt }
	\title{\bf{Paper Title}}
	\author{%
		Author 1\\ 
		\normalsize Some Institute\\
		\normalsize\texttt{author1@email} \\
		\and
		Author 2\\
		\normalsize Some Institute\\
		\normalsize\texttt{author2@email} \\
	}	
	\date{}
	\definecolor{mydarkblue}{rgb}{0,0.08,0.5}
	\newtheorem{claim}[theorem]{Claim}
	\newtheorem{fact}[theorem]{Fact}
	\newtheorem{procedure}{Procedure}
	\newtheorem{conjecture}{Conjecture}	
	\newtheorem{hypothesis}{Hypothesis}	
	\newcommand{\qed}{\hfill\ensuremath{\blacksquare}}
	\newtheorem{lemma}{Lemma}
	\newtheorem{corollary}{Corollary}
	\newtheorem{theorem}{Theorem}
	\newtheorem{proposition}{Proposition}
	\theoremstyle{definition}
	\newtheorem{definition}{Definition}
\definecolor{darkgray}{rgb}{0.5, 0.5, 0.5}
\newcommand\darkgray[1]{{\color{darkgray}#1}}
\definecolor{green}{rgb}{0.0, 0.5, 0.0}
\definecolor{xcolor-gray}{gray}{0.95}
\def\be{\begin{equation}}
	\def\ee{\end{equation}}
\def\beas{\begin{eqnarray*}}
	\def\eeas{\end{eqnarray*}}
\def\bea{\begin{eqnarray}}
	\def\eea{\end{eqnarray}}
\newcommand{\xbf}{{\mathbf x}}
\newcommand{\ubf}{{\mathbf u}}
\newcommand{\vbf}{{\mathbf v}}
\newcommand{\K}{{\mathcal K}}
\newcommand{\Abf}{{\mathbf A}}
\newcommand{\Bbf}{{\mathbf B}}
\newcommand{\Gbf}{{\mathbf G}}
\newcommand{\Sbf}{{\mathbf S}}
\newcommand{\Mbf}{{\mathbf M}}
\newcommand{\Xbf}{{\mathbf X}}
\newcommand{\Qbf}{{\mathbf Q}}
\newcommand{\Ubf}{{\mathbf U}}
\newcommand{\A}{{\mathcal A}}
\newcommand{\B}{{\mathcal B}}
\newcommand{\CC}{{\mathcal C}}
\newcommand{\D}{{\mathcal D}}
\newcommand{\HH}{{\mathcal H}}
\renewcommand{\S}{{\mathcal S}}
\renewcommand{\P}{{\mathcal P}}
\newcommand{\Q}{{\mathcal Q}}
\newcommand{\V}{{\mathcal V}}
\newcommand{\W}{{\mathcal W}}
\newcommand{\X}{{\mathcal X}}
\newcommand{\OO}{{\mathcal O}}
\newcommand{\J}{{\mathcal J}}
\newcommand{\EE}{\mathop{\mathbb E}} 
\newcommand{\R}{{\mathbb R}}
\newcommand{\N}{{\mathbb N}}
\newcommand{\normbig}[1]{\big \| #1 \big \|}
\newcommand{\inprod}[2]  {\left\langle{#1},{#2}\right\rangle}
\newcommand{\inprodbig}[2]  {\big \langle{#1},{#2} \big \rangle}
\newcommand{\inprodnoflex}[2]{\langle{#1},{#2}\rangle}
\DeclareMathOperator*{\argmin}{argmin}  
\DeclareMathOperator*{\prob}{\mathbb{P}}
\newcommand{\cupdot}{\mathbin{\mathaccent\cdot\cup}}
\newcommand{\mat}[2]{\delim{\llbracket}{\rrbracket}*{#1 ; #2}}
\newcommand{\tenp}{\otimes}
\newcommand{\sign}{\mathrm{sign}}
\DeclareMathOperator{\trace}{trace}
\newcommand{\rank}{\mathrm{rank}}
\newcommand{\qe}[2]{QE \brk*{#1 ; #2}}
\newcommand{\se}[2]{SE \brk*{#1 ; #2}}
\newcommand{\sebig}[2]{SE \brk1{#1 ; #2}}
\newcommand{\datatensor}{\D_{\mathrm{emp}}}
\newcommand{\popdatatensor}{\D_{\mathrm{pop}}}
\newcommand{\tntensor}{\W_{\mathrm{TN}}}
\newcommand{\can}{\CC_{N}}
\DeclareMathOperator{\subopt}{SubOpt}
\newcommand{\pear}{p}
\newcommand{\perm}{\pi}
\newcommand{\canone}{\CC_{N^{P}}}
\newcommand{\suboptpdim}[1]{\subopt^{#1}}
\newcommand{\datatensorpdim}[1]{\D_{\text{emp}}^{#1}}
\newcommand{\popdatatensorpdim}[1]{\D_{\text{pop}}^{#1}}
\newcommand{\tntensorpdim}[1]{\W_{\text{TN}}^{#1}}
\newcommand{\axismap}{\mu}
\newcommand{\RN}[1]{%
	\textup{\uppercase\expandafter{\romannumeral#1}}%
}
\DeclareFontFamily{U}{mathx}{\hyphenchar\font45}
\DeclareFontShape{U}{mathx}{m}{n}{<-> mathx10}{}
\DeclareSymbolFont{mathx}{U}{mathx}{m}{n}
\DeclareMathAccent{\widebar}{0}{mathx}{"73}
\definecolor{darkspringgreen}{rgb}{0.09, 0.45, 0.27}
	\renewcommand{\endnote}[1]{\null} 
	\newcommand*{\ABBR}{}
	\newcommand*{\ABBR}{}
	\newcommand*{\ABBR}{}
	\newcommand*{\ABBR}{}
	\newcommand*{\ABBR}{}
	\newcommand{\eg}{{\it e.g.}}
	\newcommand{\ie}{{\it i.e.}}
	\newcommand{\cf}{{\it cf.}}
\begin{document}
	
	
	\ifdefined\ARXIV
		\maketitle
	\fi
	\ifdefined\NEURIPS
	\title{What Makes Data Suitable for a Locally Connected Neural Network? A Necessary and Sufficient Condition Based on Quantum Entanglement}
		\author{
			\textbf{Yotam Alexander\textsuperscript{*},\, Nimrod De La Vega\textsuperscript{*},\, Noam Razin,\,  Nadav Cohen} \\[1mm]
			Tel Aviv University \\[1mm]
			\texttt{\{yotama,nimrodd,noamrazin\}@mail.tau.ac.il}, \texttt{cohennadav@cs.tau.ac.il}
		}
		\maketitle
	\fi
	\ifdefined\CVPR
		\title{Paper Title}
		\author{
			Author 1 \\
			Author 1 Institution \\	
			\texttt{author1@email} \\
			\and
			Author 2 \\
			Author 2 Institution \\
			\texttt{author2@email} \\	
			\and
			Author 3 \\
			Author 3 Institution \\
			\texttt{author3@email} \\
		}
		\maketitle
	\fi
	\ifdefined\AISTATS
		\twocolumn[
		\aistatstitle{Paper Title}
		\ifdefined\CAMREADY
			\aistatsauthor{Author 1 \And Author 2 \And Author 3}
			\aistatsaddress{Author 1 Institution \And Author 2 Institution \And Author 3 Institution}
		\else
			\aistatsauthor{Anonymous Author 1 \And Anonymous Author 2 \And Anonymous Author 3}
			\aistatsaddress{Unknown Institution 1 \And Unknown Institution 2 \And Unknown Institution 3}
		\fi
		]	
	\fi
	\ifdefined\ICML
		\icmltitlerunning{What Makes Data Suitable for a Locally Connected Neural Network?}
		\twocolumn[
		\icmltitle{What Makes Data Suitable for a Locally Connected Neural Network? \\ A Necessary and Sufficient Condition Based on Quantum Entanglement} 
		\icmlsetsymbol{equal}{*}
		\begin{icmlauthorlist}
			\icmlauthor{Author 1}{inst} 
			\icmlauthor{Author 2}{inst}
		\end{icmlauthorlist}
		\icmlaffiliation{inst}{Some Institute}
		\icmlcorrespondingauthor{Author 1}{author1@email}
		\icmlkeywords{}
		\vskip 0.3in
		]
		\printAffiliationsAndNotice{} 
	\fi
	\ifdefined\ICLR
		\title{Paper Title}
		\author{
			Author 1 \\
			Author 1 Institution \\
			\texttt{author1@email}
			\And
			Author 2 \\
			Author 2 Institution \\
			\texttt{author2@email}
			\And
			Author 3 \\ 
			Author 3 Institution \\
			\texttt{author3@email}
		}
		\maketitle
	\fi
	\ifdefined\COLT
		\title{Paper Title}
		\coltauthor{
			\Name{Author 1} \Email{author1@email} \\
			\addr Author 1 Institution
			\And
			\Name{Author 2} \Email{author2@email} \\
			\addr Author 2 Institution
			\And
			\Name{Author 3} \Email{author3@email} \\
			\addr Author 3 Institution}
		\maketitle
	\fi

	\begin{abstract}

The question of what makes a data distribution suitable for deep learning is a fundamental open problem.
Focusing on locally connected neural networks (a prevalent family of architectures that includes convolutional and recurrent neural networks as well as local self-attention models), we address this problem by adopting theoretical tools from quantum physics.
Our main theoretical result states that a certain locally connected neural network is capable of accurate prediction over a data distribution \emph{if and only if} the data distribution admits low quantum entanglement under certain canonical partitions of features.
As a practical application of this result, we derive a preprocessing method for enhancing the suitability of a data distribution to locally connected neural networks.
Experiments with widespread models over various datasets demonstrate our findings.
We hope that our use of quantum entanglement will encourage further adoption of tools from physics for formally reasoning about the relation between deep learning and real-world data.

\end{abstract}

	\ifdefined\COLT
		\medskip
		\begin{keywords}
			\emph{TBD}, \emph{TBD}, \emph{TBD}
		\end{keywords}
	\fi
	
	\section{Introduction}  \label{sec:intro}

\ifdefined\NEURIPS
\ifdefined\CAMREADY
\def\thefootnote{*}\footnotetext{Equal contribution}\def\thefootnote{\arabic{footnote}}
\fi\fi

Deep learning is delivering unprecedented performance when applied to data modalities involving images, text and audio.
On the other hand, it is known both theoretically and empirically~\citep{shalev2017failures,abbe2018provable} that there exist data distributions over which deep learning utterly fails.
The question of \emph{what makes a data distribution suitable for deep learning} is a fundamental open problem in the field.

A prevalent family of deep learning architectures is that of \emph{locally connected neural networks}.
It includes, among others:
\emph{(i)}~convolutional neural networks, which dominate the area of computer vision; 
\emph{(ii)}~recurrent neural networks, which were the most common architecture for sequence (\eg~text and audio) processing, and are experiencing a resurgence by virtue of S4 models~\citep{gu2022efficiently}; 
and 
\emph{(iii)}~local variants of self-attention neural networks~\citep{rae-razavi-2020-transformers}.
Conventional wisdom postulates that data distributions suitable for locally connected neural networks are those exhibiting a ``local nature,'' and there have been attempts to formalize this intuition~\citep{zhang2017entanglement,jia2020entanglement,convy2022mutual}.
However, to the best of our knowledge, there are no characterizations providing necessary and sufficient conditions for a data distribution to be suitable to a locally connected neural network.

A seemingly distinct scientific discipline tying distributions and computational models is \emph{quantum physics}.
There, distributions of interest are described by \emph{tensors}, and the associated computational models are \emph{tensor networks}.
While there is shortage in formal tools for assessing the suitability of data distributions to deep learning architectures, there exists a widely accepted theory that allows for assessing the suitability of tensors to tensor networks.
The theory is based on the notion of \emph{quantum entanglement}, which quantifies dependencies that a tensor admits under partitions of its axes (for a given tensor~$\A$ and a partition of its axes to sets $\K$ and~$\K^c$, the entanglement is a non-negative number quantifying the dependence that~$\A$ admits between $\K$ and~$\K^c$).

In this paper, we apply the foregoing theory to a tensor network equivalent to a certain locally connected neural network, and derive theorems by which fitting a tensor is possible if and only if the tensor admits low entanglement under certain \emph{canonical partitions} of its axes.
We then consider the tensor network in a machine learning context, and find that its ability to attain low approximation error, \ie~to express a solution with low population loss, is determined by its ability to fit a particular tensor defined by the data distribution, whose axes correspond to features.
Combining the latter finding with the former theorems, we conclude that a \emph{locally connected neural network is capable of accurate prediction over a data distribution if and only if the data distribution admits low entanglement under canonical partitions of features}.
Experiments with different datasets corroborate this conclusion, showing that the accuracy of common locally connected neural networks (including modern convolutional, recurrent, and local self-attention neural networks) is inversely correlated to the entanglement under canonical partitions of features in the data (the lower the entanglement, the higher the accuracy, and vice versa).

The above results bring forth a recipe for enhancing the suitability of a data distribution to locally connected neural networks: given a dataset, search for an arrangement of features which leads to low entanglement under canonical partitions, and then arrange the features accordingly.
Unfortunately, the above search is computationally prohibitive.
However, if we employ a certain correlation-based measure as a surrogate for entanglement, \ie~as a gauge for dependence between sides of a partition of features, then the search converts into a succession of \emph{minimum balanced cut} problems, thereby admitting use of well-established graph theoretical tools, including ones designed for large scale~\citep{karypis1998fast,spielman2011spectral}.
We empirically evaluate this approach on various datasets, demonstrating that it substantially improves prediction accuracy of common locally connected neural networks (including modern convolutional, recurrent, and local self-attention neural networks).

The data modalities to which deep learning is most commonly applied~---~namely ones involving images, text and audio~---~are often regarded as natural (as opposed to, for example, tabular data fusing heterogeneous information).
We believe the difficulty in explaining the suitability of such modalities to deep learning may be due to a shortage in tools for formally reasoning about natural data.
Concepts and tools from physics~---~a branch of science concerned with formally reasoning about natural phenomena~---~may be key to overcoming said difficulty.
We hope that our use of quantum entanglement will encourage further research along this line.

\medskip

The remainder of the paper is organized as follows.
Section~\ref{sec:relate} reviews related work.
Section~\ref{sec:prelim} establishes preliminaries, introducing tensors, tensor networks and quantum entanglement.
Section~\ref{sec:fit_tensor} presents the theorems by which a tensor network equivalent to a locally connected neural network can fit a tensor if and only if this tensor admits low entanglement under canonical partitions of its axes. 
Section~\ref{sec:accurate_predict} employs the preceding theorems to show that in a classification setting, accurate prediction is possible if and only if the data admits low entanglement under canonical partitions of features.
Section~\ref{sec:enhancing} translates this result into a practical method for enhancing the suitability of data to common locally connected neural networks.
Finally, Section~\ref{sec:conc} concludes.
For simplicity, we treat locally connected neural networks whose input data is one-dimensional (\eg~text or audio), and defer to~\cref{app:extension_dims} an extension of the analysis and experiments to models intaking data of arbitrary dimension (\eg~two-dimensional images).

	\section{Related Work}  \label{sec:relate}

Characterizing formal properties of data distributions that make them suitable for deep learning is a major open problem in the field. 
A number of papers provide sufficient conditions on a data distribution which imply that it is learnable by certain neural networks~\citep{brutzkus2017globally,yarotsky2017error,malach2018provably,du2018improved,du2018gradient,oymak2018end,wang2018exponential,pmlr-v162-brutzkus22a}. 
However, these sufficient conditions are restrictive, and are not argued to be necessary for any aspect of learning (\ie~for expressiveness, optimization or generalization).
To the best of our knowledge, this paper is the first to derive a verifiable condition on a data distribution that is both necessary and sufficient for aspects of learning to be achievable by a neural network.
We note that the condition we derive resembles past attempts to quantify the structure of data via quantum entanglement and mutual information~\citep{martyn2020entanglement,convy2022mutual,lu2021tensor,cheng2018information,zhang2017entanglement,jia2020entanglement,anagiannis2021entangled,dymarsky2022tensor}.
However, such quantifications have not been formally related to learnability by neural networks.

The current paper follows a long line of research employing tensor networks as theoretical models for studying deep learning.
This line includes works analyzing the expressiveness of different neural network architectures~\citep{cohen2016expressive,sharir2016tensorial,cohen2016convolutional,cohen2017inductive,sharir2018expressive,cohen2018boosting,levine2018benefits,balda2018tensor,levine2018deep,khrulkov2018expressive,levine2019quantum,khrulkov2019generalized,razin2022ability}, their generalization~\citep{li2020understanding}, and the implicit regularization induced by their optimization~\citep{razin2020implicit,razin2021implicit,razin2022implicit,wang2020beyond,ge2021understanding}. 
Similarly to prior works we focus on expressiveness, yet our approach differs in that we incorporate the data distribution into the analysis and tackle the question of what makes data suitable for deep learning.

The algorithm we propose for enhancing the suitability of data to locally connected neural networks can be considered a form of representation learning. 
Representation learning is a vast field, far too broad to survey here (for an overview see \citep{nozawa2022empirical}). 
Our algorithm, which learns a representation via rearrangement of features in the data, is complementary to most representation learning methods in the literature.
A notable method that is also based on feature rearrangement is IGTD~\citep{zhu2021converting}~---~a heuristic scheme designed for convolutional neural networks.
In contrast to IGTD, our algorithm is theoretically grounded.
Moreover, we demonstrate empirically in~\cref{sec:enhancing} that it leads to higher prediction accuracies.
	
	\section{Preliminaries} \label{sec:prelim}


\paragraph*{Notation}
We use $\norm{\cdot}$ and $\inprodnoflex{\cdot}{\cdot}$ to denote the Euclidean (Frobenius) norm and inner product, respectively.
We shorthand $[N] := \{ 1, \ldots, N \}$, where $N \in \N$.
The complement of $\K \subseteq [N]$ is denoted by $\K^c$, \ie~$\K^c := [N] \setminus \K$.

\subsection{Tensors and Tensor Networks}  \label{sec:prelim:tensor}

For our purposes, a \emph{tensor} is an array with multiple axes $\A \in \R^{D_1 \times \cdots \times D_N}$, where $N \in \N$ is its \emph{order} and $D_1, \ldots, D_N \in \N$ are its \emph{axes lengths}.
The $(d_1, \ldots, d_N)$'th entry of $\A$ is denoted $\A_{d_1, \ldots, d_N}$.

\emph{Contraction} between tensors is a generalization of multiplication between matrices.
Two matrices $\Abf \in \R^{D_1 \times D_2}$ and $\Bbf \in \R^{D'_1 \times D'_2}$ can be multiplied if $D_2 = D'_1$, in which case we get a matrix in $\R^{D_1 \times D'_2}$ holding $\sum\nolimits_{d = 1}^{D_2} \Abf_{d_1, d} \cdot \Bbf_{d, d'_2}$ in entry $(d_1, d'_2) \in [D_1] \times [D'_2]$.
More generally, two tensors $\A \in \R^{D_1 \times \cdots \times D_N}$ and $\B \in \R^{D'_1 \times \cdots \times D'_{N'}}$ can be contracted along axis $n \in [N]$ of $\A$ and $n' \in [N']$ of $\B$ if $D_n = D'_{n'}$, in which case we get a tensor in $\R^{D_1 \times \cdots D_{n - 1} \times D_{n + 1} \times \cdots \times D_N \times D'_1 \times \cdots \times D'_{n' - 1} \times D'_{n' + 1} \cdots \times D'_{N'}}$ holding:
\[
\sum\nolimits_{d = 1}^{D_n} \A_{d_1, \ldots, d_{n - 1}, d, d_{n + 1}, \ldots, d_N} \cdot \B_{d'_1, \ldots, d'_{n' - 1}, d, d'_{n' + 1}, \ldots, d'_{N'} }
\text{\,,}
\]
in the entry indexed by $\brk[c]{ d_k \in [D_k] }_{k \in [N] \setminus \{n\}}$ and $\brk[c]{ d'_{k} \in [D'_k] }_{k \in [N'] \setminus \{n'\}}$. 

\begin{figure*}[t]
	\vspace{0mm}
	\begin{center}
		\hspace*{3mm}
		\includegraphics[width=1\textwidth]{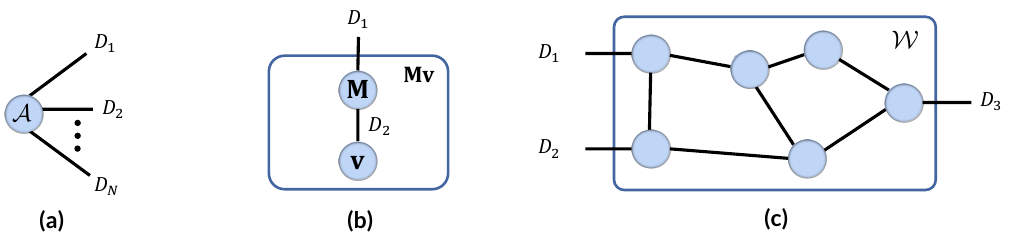}
	\end{center}
	\vspace{-2mm}
	\caption{
		Tensor networks form a graphical language for fitting (\ie~representing) tensors through tensor contractions.
		\textbf{Tensor network definition:} Every node in a tensor network is associated with a tensor, whose order is equal to the number of edges emanating from the node.
		An edge connecting two nodes specifies contraction between the tensors associated with the nodes (\cref{sec:prelim:tensor}), where the weight of the edge signifies the respective axes lengths.
		Tensor networks may also contain open edges, \ie~edges that are connected to a node on one side and are open on the other.
		The number of such open edges is equal to the order of the tensor produced by contracting the tensor network.
		\textbf{Illustrations:} Presented are exemplar tensor network diagrams of: \textbf{(a)} an order $N$ tensor $\A \in \R^{D_1 \times \cdots \times D_N}$; \textbf{(b)} a vector-matrix multiplication between $\Mbf \in \R^{D_1 \times D_2}$ and $\vbf \in \R^{D_2}$, which results in the vector $\Mbf \vbf \in \R^{D_1}$; and \textbf{(c)} a more elaborate tensor network generating $\W \in \R^{D_1 \times D_2 \times D_3}$.
	}
	\label{fig:tn_examples}
\end{figure*}

\emph{Tensor networks} are prominent computational models for fitting (\ie~representing) tensors.
More specifically, a tensor network is a weighted graph that describes formation of a (typically high-order) tensor via contractions between (typically low-order) tensors.
As customary (\cf~\cite{orus2014practical}), we will present tensor networks via graphical diagrams to avoid cumbersome notation~---~see~\cref{fig:tn_examples} for details.

\subsection{Quantum Entanglement}  \label{sec:prelim:qe}

In quantum physics, the distribution of possible states for a multi-particle (‘‘many body'') system is described by a tensor, whose axes are associated with individual particles.
A key property of the distribution is the dependence it admits under a given partition of the particles (\ie~between a given set of particles and its complement).
This dependence is formalized through the notion of \emph{quantum entanglement}, defined using the distribution's description as a tensor~---~see~\cref{def:entanglement} below.

Quantum entanglement lies at the heart of a widely accepted theory which allows assessing the ability of a tensor network to fit a given tensor (\cf~\cite{cui2016quantum,levine2018deep}).
In~\cref{sec:fit_tensor} we specialize this theory to a tensor network equivalent to a certain locally connected neural network.
The specialized theory will be used (in~\cref{sec:accurate_predict}) to establish our main theoretical contribution: a necessary and sufficient condition for when the locally connected neural network is capable of accurate prediction over a data distribution.

\begin{definition} \label{def:entanglement}
For a tensor $\A \in \R^{D_1 \times \cdots \times D_N}$ and subset of its axes $\K \subseteq [N]$, let $\mat{\A}{\K} \in \R^{ \prod_{n \in \K} D_n \times \prod_{n \in \K^c} D_n }$ be the arrangement of $\A$ as a matrix where rows correspond to axes $\K$ and columns correspond to the remaining axes $\K^c := [N] \setminus \K$.
Denote by $\sigma_1 \geq \cdots \geq \sigma_{D_{\K}} \in \R_{\geq 0}$ the singular values of $\mat{\A}{\K}$, where $D_{\K} := \min \brk[c]{ \prod_{n \in \K} D_n , \prod_{n \in \K^c} D_n }$.
The \emph{quantum entanglement}\footnote{
	There exist multiple notions of entanglement in quantum physics (see, \eg,~\cite{levine2018deep}).
	The one we consider is the most common, known as \emph{entanglement entropy}.
} of $\A$ with respect to the partition $\brk*{ \K, \K^c}$ is the entropy of the distribution $\brk[c]{ \rho_d := \sigma_d^2 / \sum\nolimits_{d' = 1}^{D_{\K}} \sigma_{d'}^2 }_{d = 1}^{D_{\K}}$, \ie:
\[
\qe{\A}{\K} := - \sum\nolimits_{d = 1}^{D_{\K}} \rho_d \ln (\rho_d)
\text{\,.}
\]
By convention, if $\A = 0$ then $\qe{\A}{\K} = 0$.
\end{definition}

	\section{Low Entanglement Under Canonical Partitions Is Necessary and Sufficient for Fitting Tensor} \label{sec:fit_tensor}

In this section, we prove that a tensor network equivalent to a certain locally connected neural network can fit a tensor if and only if the tensor admits low entanglement under certain canonical partitions of its axes.
We begin by introducing the tensor network (\cref{sec:fit_tensor:tn_lc_nn}).
Subsequently, we establish the necessary and sufficient condition required for it to fit a given tensor (\cref{sec:fit_tensor:nec_and_suf}).
For conciseness, the treatment in this section is limited to one-dimensional (sequential) models; see~\cref{app:extension_dims:fit_tensor} for an extension to arbitrary dimensions.

\subsection{Tensor Network Equivalent to a Locally Connected Neural Network} \label{sec:fit_tensor:tn_lc_nn}

Let $N \in \N$, and for simplicity suppose that $N = 2^L$ for some $L \in \N$.
We consider a tensor network with an underlying perfect binary tree graph of height~$L$, which generates $\tntensor \in \R^{D_1 \times \cdots \times D_N}$.
\cref{fig:tn_as_nn}(a) provides its diagrammatic definition.
Notably, the lengths of axes corresponding to inner (non-open) edges are taken to all be equal to some $R \in \N$, referred to as the \emph{width} of the tensor network.\footnote{
	We treat the case where all axes corresponding to inner (non-open) edges in the tensor network have the same length merely for simplicity of presentation.
	Extension of our theory to axes of different lengths is straightforward.
}

\begin{figure*}[t!]
	\vspace{0mm}
	\begin{center}
		\includegraphics[width=1\textwidth]{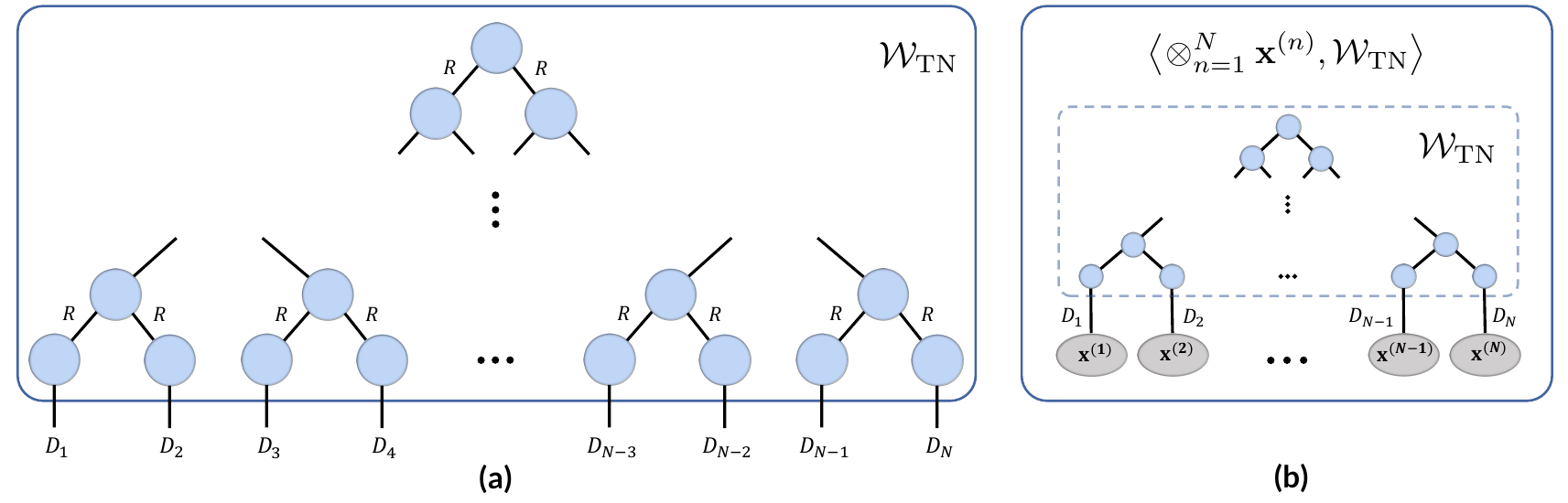}
	\end{center}
	\vspace{-2mm}
	\caption{
		The analyzed tensor network equivalent to a locally connected neural network.
		\textbf{(a)} We consider a tensor network adhering to a perfect binary tree connectivity with $N = 2^L$ leaf nodes, for $L \in \N$, generating $\tntensor \in \R^{D_1 \times \cdots \times D_N}$.
		Axes corresponding to open edges are indexed such that open edges descendant to any node of the tree have contiguous indices.
		The lengths of axes corresponding to inner (non-open) edges are equal to $R \in \N$, referred to as the \emph{width} of the tensor network.
		\textbf{(b)} Contracting $\tntensor$ with vectors $\xbf^{(1)} \in \R^{D_1}, \ldots, \xbf^{(N)} \in \R^{D_N}$ produces $\inprodnoflex{ \tenp_{n = 1}^N \xbf^{(n)} }{\tntensor}$.
		Performing these contractions from leaves to root can be viewed as a forward pass of a data instance $\brk{ \xbf^{(1)}, \ldots, \xbf^{(N)} }$ through a certain locally connected neural network (with polynomial non-linearity; see,~\eg,~\cite{cohen2016expressive,cohen2017inductive,levine2018deep,razin2022implicit}).
		Accordingly, we call the tensor network generating $\tntensor$ a \emph{locally connected tensor network}.
	}
	\label{fig:tn_as_nn}
\end{figure*}

As identified by previous works, the tensor network depicted in~\cref{fig:tn_as_nn}(a) is equivalent to a certain locally connected neural network (with polynomial non-linearity~---~see,~\eg,~\cite{cohen2016expressive,cohen2017inductive,levine2018deep,razin2022implicit}).
In particular, contracting the tensor network with vectors $\xbf^{(1)} \in \R^{D_1}, \ldots, \xbf^{(N)} \in \R^{D_N}$, as illustrated in~\cref{fig:tn_as_nn}(b), can be viewed as a forward pass of the data instance $\brk{ \xbf^{(1)}, \ldots, \xbf^{(N)} }$ through a locally connected neural network, whose hidden layers are of width $R$.
This computation results in a scalar equal to $\inprod{ \tenp_{n = 1}^N \xbf^{(n)} }{\tntensor}$, where $\tenp$ stands for the outer product.\footnote{
For any $\brk[c]{ \xbf^{(n)} \in \R^{D_n} }_{n = 1}^N$, the outer product $\tenp_{n = 1}^N \xbf^{(n)} \in \R^{D_1 \times \cdots \times D_N}$ is defined element-wise by $\brk[s]{ \tenp_{n = 1}^N \xbf^{(n)} }_{d_1, \ldots, d_N} = \prod_{n = 1}^N \xbf^{(n)}_{d_n}$, where $d_1 \in [D_1], \ldots, d_N \in [D_N]$.
}\textsuperscript{,}\footnote{
	Contracting an arbitrary tensor $\A \in \R^{D_1 \times \cdots \times D_N}$ with vectors $\xbf^{(1)} \in \R^{D_1}, \ldots, \xbf^{(N)} \in \R^{D_N}$ (where for each $n \in [N]$, $\xbf^{(n)}$ is contracted against the $n$'th axis of $\A$) yields a scalar equal to $\inprodnoflex{\tenp_{n = 1}^N \xbf^{(n)} }{ \A }$.
	This follows from the definitions of the contraction, inner product and outer product operations.
}
In light of its equivalence to a locally connected neural network, we will refer to the tensor network as a \emph{locally connected tensor network}.
We note that for the equivalent neural network to be practical (in terms of memory and runtime), the width $R$ needs to be of moderate size (typically no more than a few thousands).
Specifically, $R$ cannot be exponential in the dimension $N$, meaning $\ln (R)$ needs to be much smaller than~$N$.

By virtue of the locally connected tensor network's equivalence to a deep neural network, it has been paramount for the study of expressiveness and generalization in deep learning~\citep{cohen2016expressive,cohen2016convolutional,cohen2017inductive,cohen2017analysis,cohen2018boosting,sharir2018expressive,levine2018benefits,levine2018deep,balda2018tensor,khrulkov2018expressive,khrulkov2019generalized,levine2019quantum,razin2020implicit,razin2021implicit,razin2022implicit,razin2022ability}.
Although the equivalent deep neural network (which has polynomial non-linearity) is less common than other neural networks (\eg,~ones with ReLU non-linearity), it has demonstrated competitive performance in practice~\citep{cohen2014simnets,cohen2016deep,sharir2016tensorial,stoudenmire2018learning,felser2021quantum}.
More importantly, its theoretical analyses, through the equivalence to the locally connected tensor network, brought forth numerous insights that were demonstrated empirically and led to development of practical tools for common locally connected architectures.
Continuing this line, we will demonstrate our theoretical insights through experiments with widespread convolutional, recurrent and local self-attention architectures (\cref{sec:accurate_predict:emp_demo}), and employ our theory for deriving an algorithm that enhances the suitability of a data distribution to said architectures (\cref{sec:enhancing}).

\subsection{Necessary and Sufficient Condition for Fitting Tensor} \label{sec:fit_tensor:nec_and_suf}

Herein we show that the ability of the locally connected tensor network (defined in~\cref{sec:fit_tensor:tn_lc_nn}) to fit (\ie~represent) a given tensor is determined by the entanglements that the tensor admits under the below-defined \emph{canonical partitions} of $[N]$.
Note that each canonical partition comprises a subset of contiguous indices, so low entanglement under canonical partitions can be viewed as a formalization of locality.

\begin{definition}
\label{def:canonical_partitions}
The \emph{canonical partitions} of $[N] = [2^L]$ (illustrated in~\cref{fig:canonical_partitions}) are:
\[
\can := \! \Big \{ \brk*{ \K, \K^c} :  \, \K = \brk[c]*{ 2^{L - l} \cdot (n - 1) + 1, \ldots, 2^{L - l} \cdot n } ,~l \in \big \{0, \ldots, L \big \} ,~n \in \big [ 2^l \big ]  \Big \}
\text{\,.}
\]
\end{definition}

\begin{figure}[t]
	\vspace{0mm}
	\begin{center}
		\includegraphics[width=0.75\textwidth]{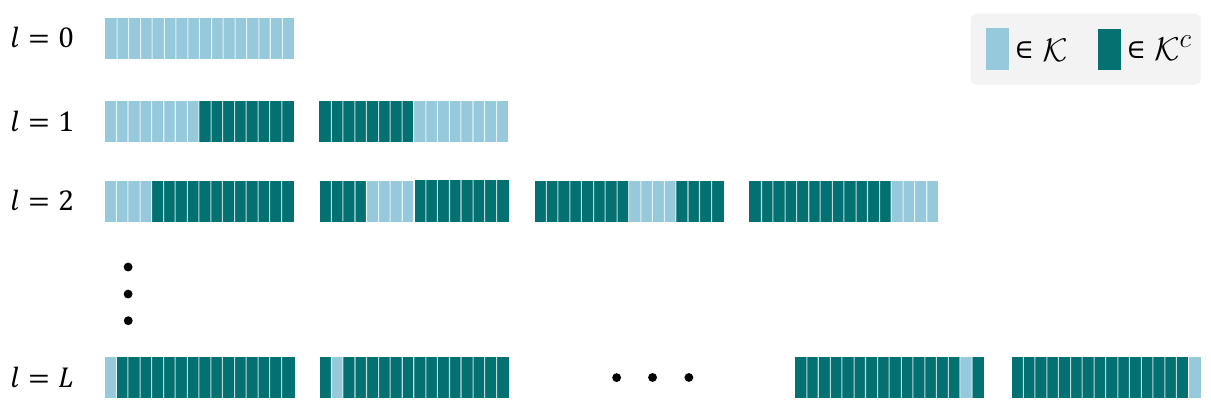}
	\end{center}
	\vspace{-1mm}
	\caption{
		The canonical partitions of $[N]$, for $N = 2^L$ with $L \in \N$.
		Every $l \in \{0, \ldots, L\}$ contributes $2^l$ canonical partitions, the $n$'th one induced by $\K = \{2^{L - l} \cdot (n - 1) + 1, \ldots, 2^{L - l} \cdot n\}$.
	}
	\label{fig:canonical_partitions}
\end{figure}

By appealing to known upper bounds on the entanglements that a given tensor network supports~\citep{cui2016quantum,levine2018deep}, we now establish that if the locally connected tensor network can fit a given tensor, that tensor must admit low entanglement under the canonical partitions of its axes.
Namely, suppose that $\tntensor$~---~the tensor generated by the locally connected tensor network~---~well-approximates an order $N$ tensor $\A$.
Then,~\cref{thm:fit_necessary} below shows that the entanglement of $\A$ with respect to every canonical partition $(\K, \K^c) \in \can$ cannot be much larger than $\ln (R)$ (recall that $R$ is the width of the locally connected tensor network, and that in practical settings $\ln (R)$ is much smaller than $N$), whereas the expected entanglement of a random tensor with respect to $(\K, \K^c)$ is on the order of $\min \{ \abs{ \K }, \abs{ \K^c } \}$ (which is linear in~$N$ for most canonical partitions).

In the other direction,~\cref{thm:fit_sufficient} below implies that low enough entanglement under the canonical partitions is not only necessary for a tensor to be fit by the locally connected tensor network, but also sufficient.

\begin{theorem}
	\label{thm:fit_necessary}
	Let $\tntensor \in \R^{D_1 \times \cdots \times D_N}$ be a tensor generated by the locally connected tensor network defined in~\cref{sec:fit_tensor:tn_lc_nn}, and let $\A \in \R^{D_1 \times \cdots \times D_N}$.
	For any $\epsilon \in [0, \norm{\A} / 4]$, if $\norm{\tntensor -  \A} \leq \epsilon$, then for all canonical partitions $\brk{\K, \K^c} \in \can$ (\cref{def:canonical_partitions}):\footnote{
		If $\A = 0$, then $\epsilon = 0$.
		In this case, the expression $\epsilon / \norm{\A}$ is by convention equal to zero.
	}
	\be
	\qe{\A}{\K} \leq \ln (R)+\frac{2 \epsilon}{ \norm{\A} } \cdot \ln ( D_{\K} ) + 2 \sqrt{\frac{ 2 \epsilon }{ \norm{\A} } }
	\text{\,,}
	\label{eq:fit_necessary_ub}
	\ee
	where $D_{\K} := \min \brk[c]{ \prod_{n \in \K} D_n , \prod_{n \in \K^c} D_n }$.
	In contrast, a random $\A' \in \R^{D_1 \times \cdots \times D_N}$, drawn according to the uniform distribution over the set of unit norm tensors, satisfies for all canonical partitions $\brk{\K, \K^c} \in \can$:
	\be
	\EE \brk[s]*{ \qe{\A'}{\K} } \geq  \min \{ \abs{\K}, \abs{\K^c} \} \cdot \ln \brk*{ \min\nolimits_{n \in [N]} D_n } + \ln \brk*{ \frac{ 1 }{ 2 } } - \frac{1}{2}
	\text{\,.}
	\label{eq:fit_necessary_lb}
	\ee
\end{theorem}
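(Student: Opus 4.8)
\textbf{The plan} is to treat the two assertions of the theorem separately: the upper bound \eqref{eq:fit_necessary_ub} follows from a tensor-network rank bound combined with an entropy-continuity (Fannes--Audenaert) estimate, while the ``in contrast'' bound \eqref{eq:fit_necessary_lb} follows by exhibiting a single explicit tensor that is \emph{simultaneously} maximally entangled under every canonical partition. First I would handle \eqref{eq:fit_necessary_ub}. The starting observation is purely structural: for the perfect-binary-tree network, a canonical partition $\brk{\K, \K^c}$ separates a complete subtree from the rest of the graph, and that subtree is joined to its complement by exactly one inner edge, of weight $R$. By the standard bound on the matricization ranks a tensor network can support (the rank across a cut is at most the product of the weights of the cut edges), this gives $\rank \mat{\tntensor}{\K} \leq R$, so at most $R$ of the normalized squared singular values are nonzero and hence $\qe{\tntensor}{\K} \leq \ln(R)$.

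It then remains to transfer this bound from $\tntensor$ to $\A$ using $\norm{\tntensor - \A} \leq \epsilon$. I would pass to the normalized tensors $\psi := \A / \norm{\A}$ and $\phi := \tntensor / \norm{\tntensor}$, viewed as unit vectors, and observe that $\mat{\psi}{\K} \mat{\psi}{\K}^\top$ and $\mat{\phi}{\K} \mat{\phi}{\K}^\top$ are density matrices (trace-one, PSD) whose von Neumann entropies equal $\qe{\A}{\K}$ and $\qe{\tntensor}{\K}$, respectively. These are reduced density matrices of the global pure states, so monotonicity of trace distance under the partial trace bounds the distance between them by the pure-state trace distance $2\sqrt{1 - \inprodnoflex{\psi}{\phi}^2}$. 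Using $\epsilon \leq \norm{\A}/4$ to keep $\norm{\tntensor}$ bounded away from zero, and estimating the angle $\theta$ between $\A$ and $\tntensor$ via $\norm{\A} \sin\theta \leq \norm{\A - \tntensor} \leq \epsilon$, I get $\norm{ \mat{\psi}{\K}\mat{\psi}{\K}^\top - \mat{\phi}{\K}\mat{\phi}{\K}^\top }_1 \leq 2\epsilon / \norm{\A}$. Feeding this into the Fannes--Audenaert continuity bound $\abs{ S(\rho) - S(\sigma) } \leq \norm{\rho - \sigma}_1 \ln(D_{\K}) + (\text{binary-entropy term})$ produces exactly the right-hand side of \eqref{eq:fit_necessary_ub}, the $\ln(D_{\K})$ summand and the $\sqrt{\cdot}$ summand arising from the two pieces of the continuity estimate.

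For the lower bound \eqref{eq:fit_necessary_lb} the task is to produce one tensor $\A'$ maximally entangled under all canonical partitions at once. Let $d := \min_{n \in [N]} D_n$ and take $\A'$ supported on the first $d$ coordinates of every axis, equal to the normalized outer product of $N/2$ maximally entangled pairs, where the $k$-th pair couples axis $k$ with axis $k + N/2$ through $\tfrac{1}{\sqrt{d}} \sum_{i=1}^{d} e_i \tenp e_i$. The crux is a combinatorial fact about $\can$: a canonical block $\K$ at level $l \geq 1$ is a contiguous, $2^{L-l}$-aligned set of size $2^{L-l} \leq N/2$, so shifting it by $N/2 = 2^{L-1}$ (a multiple of $2^{L-l}$) yields a disjoint aligned block; hence the partner $k \pm N/2$ of every axis $k \in \K$ lies in $\K^c$. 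Therefore each of the $\min\brk[c]{\abs{\K}, \abs{\K^c}} = 2^{L-l}$ pairs incident to $\K$ is split across the cut, so $\mat{\A'}{\K}$ is a scaling of a maximally entangled matrix of rank $d^{\min\brk[c]{\abs{\K}, \abs{\K^c}}}$ with a flat spectrum, giving $\qe{\A'}{\K} = \min\brk[c]{\abs{\K}, \abs{\K^c}} \cdot \ln(d)$; the degenerate partition $\K = [N]$ at level $l = 0$ satisfies \eqref{eq:fit_necessary_lb} trivially since its right-hand side is $0$.

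I expect the delicate part to be the quantitative chain in the necessary direction: one must carefully track normalization (this is exactly where $\epsilon \leq \norm{\A}/4$ is used, to prevent $\norm{\tntensor}$ from collapsing) and invoke the sharp form of the entropy-continuity bound to recover the stated constants, rather than merely an order-of-magnitude estimate. By contrast, the lower bound has essentially no analytic content once the combinatorial claim~---~that shifting any canonical block by $N/2$ carries it off itself~---~is established; the maximal-entanglement computation for a product of Bell-type pairs is then routine, and the reduction $\ln(D_{\K}) \geq \min\brk[c]{\abs{\K}, \abs{\K^c}} \cdot \ln(d)$ confirms the restricted construction loses nothing.
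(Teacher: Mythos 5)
Your proposal is correct, and its skeleton matches the paper's: bound $\rank\brk1{\mat{\tntensor}{\K}}\le R$ via the min\nobreakdash-cut argument so that $\qe{\tntensor}{\K}\le\ln(R)$, transfer this to $\A$ by a continuity-of-entanglement estimate, and exhibit a maximally entangled tensor for the converse. The two places where you genuinely diverge are the continuity step and the lower bound. For continuity, the paper stays entirely classical/linear-algebraic: it bounds $\normnoflex{\tntensor/\norm{\tntensor}-\A/\norm{\A}}\le 2\epsilon/\norm{\A}$, then applies Mirsky's singular-value perturbation inequality plus Cauchy--Schwarz to bound the total-variation distance between the two normalized squared-singular-value distributions by $\normnoflex{\V-\W}$, and finishes with a continuity bound for Shannon entropy under total variation. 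You instead pass to reduced density matrices, invoke contractivity of trace distance under the partial trace to reduce to the pure-state trace distance $2\sqrt{1-\inprodnoflex{\psi}{\phi}^2}=2\sin\theta\le 2\epsilon/\norm{\A}$, and apply Fannes--Audenaert; this is a clean quantum-information route that yields the same (in fact, with the half-normalized trace distance, a slightly sharper) constants, at the cost of importing the quantum continuity theorem rather than proving the needed classical special case from scratch as the paper does. For the lower bound, the paper simply cites the construction of Deng et al.; your explicit tensor of $N/2$ Bell-type pairs coupling axis $k$ to axis $k+N/2$, together with the combinatorial observation that every canonical block of size at most $N/2$ lies entirely in one half of $[N]$ and hence has all incident pairs cut, is a correct and self-contained replacement, and the flat-spectrum computation gives exactly $\min\brk[c]{\abs{\K},\abs{\K^c}}\cdot\ln\brk{\min_n D_n}$ as required.
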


\begin{proof}[Proof sketch (proof in~\cref{app:proofs:fit_necssary})]
In general, the entanglements that a tensor network supports can be upper bounded through cuts in its graph~\citep{cui2016quantum,levine2018deep}.
For the locally connected tensor network, these bounds imply that $\qe{\tntensor}{\K} \leq \ln (R)$ for any canonical partition $(\K, \K^c)$.
\cref{eq:fit_necessary_ub} then follows by showing that if $\tntensor$ and $\A$ are close, then so are their entanglements.
\cref{eq:fit_necessary_lb} is established  based on a characterization from~\cite{sen1996average}.
\end{proof}

\begin{theorem}
	\label{thm:fit_sufficient}
	Let $\A \in \R^{D_1 \times \cdots \times D_N}$ and $\epsilon > 0$.
	Suppose that for all canonical partitions $(\K, \K^c) \in \can$ (\cref{def:canonical_partitions}) it holds that $\qe{\A}{\K} \leq \frac{\epsilon^{2}}{(2N-3) \norm{\A}^2} \cdot \ln (R)$.\footnote{\label{note:suff_cond}
		When the approximation error $\epsilon$ tends to zero, the sufficient condition in~\cref{thm:fit_sufficient} requires entanglements to approach zero, unlike the necessary condition in~\cref{thm:fit_necessary} which requires entanglements to become no greater than $\ln (R)$.
		This is unavoidable.
		However, if for all canonical partitions $(\K, \K^c) \in \can$ the singular values of $\mat{\A}{\K}$ trailing after the $R$'th one are small, then we can also guarantee an assignment for the locally connected tensor network satisfying $\norm{\tntensor - \A} \leq \epsilon$, while $\qe{\A}{\K}$ can be on the order of $\ln (R)$ for all $(\K, \K^c) \in \can$.
		See~\cref{app:suff_cond} for details.
	}
	Then, there exists an assignment for the tensors constituting the locally connected tensor network (defined in~\cref{sec:fit_tensor:tn_lc_nn}) such that it generates $\tntensor \in \R^{D_1 \times \cdots \times D_N}$ satisfying:
	\[
	\norm{ \tntensor - \A } \leq \epsilon
	\text{\,.}
	\]
\end{theorem}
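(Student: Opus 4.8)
The plan is to build the locally connected tensor network's tensors explicitly by an iterated truncated singular value decomposition that mirrors the perfect binary tree structure, and to control the accumulated approximation error by summing the contributions of each truncation. The starting observation is that the hypothesis $\qe{\A}{\K} \leq \frac{\epsilon^2}{(2N-3)\norm{\A}^2} \cdot \ln(R)$ for a canonical partition $(\K,\K^c)$ translates into a bound on the tail of the singular values of $\mat{\A}{\K}$: small entanglement forces the squared singular value distribution $\{\rho_d\}$ to be concentrated on its top few entries, so that the mass $\sum_{d > R} \rho_d$ (the tail beyond the width $R$) must be small. Concretely, I would invoke a lower bound on the entropy in terms of the tail mass — if a probability distribution has more than $R$ atoms carrying non-negligible weight, its entropy is bounded below — to conclude that truncating $\mat{\A}{\K}$ to its top $R$ singular values incurs squared Frobenius error at most roughly $\frac{\epsilon^2}{2N-3}$ per canonical partition.

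The key steps, in order, are as follows. First I would process the tree level by level, from the root down (or equivalently, build the Tucker/tree-tensor-network representation from the leaves up). At each internal edge of the binary tree there is an associated canonical partition $(\K,\K^c)$, and the rank-$R$ truncation of $\mat{\A}{\K}$ supplies the bond of width $R$ along that edge. The number of internal edges of the perfect binary tree on $N = 2^L$ leaves is $2N - 3$ (there are $2N-1$ nodes and $N-1$ internal nodes, giving $2N-2$ edges total, of which $N-1$ are the internal bonds plus the $N$ leaf legs; the precise count $2N-3$ is exactly the number of canonical partitions excluding the trivial full set, matching the normalization in the theorem statement). Each truncation replaces $\mat{\A}{\K}$ by its best rank-$R$ approximation, and the triangle inequality lets me bound the total error $\norm{\tntensor - \A}$ by the sum of the individual truncation errors over all $2N-3$ canonical partitions. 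Second, I would bound each individual error: since $\qe{\A}{\K}$ is small, the discarded singular mass is at most $\frac{\epsilon^2}{(2N-3)\norm{\A}^2} \cdot \norm{\A}^2 = \frac{\epsilon^2}{2N-3}$ in squared-norm terms, so each truncation error is at most $\frac{\epsilon}{\sqrt{2N-3}}$. Summing $2N-3$ such terms and taking a square root of a sum-of-squares (or using that the successive truncations act on nested subspaces so the errors add in an orthogonal, Pythagorean fashion rather than merely by the triangle inequality) yields $\norm{\tntensor - \A} \leq \sqrt{(2N-3) \cdot \frac{\epsilon^2}{2N-3}} = \epsilon$.

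The main obstacle I expect is the error accumulation argument: truncations at different levels of the tree are not independent, because after truncating at one edge the residual tensor feeding into the next truncation has changed. I would handle this by organizing the construction so that the truncations are performed on orthogonally complementary subspaces — the standard hierarchical/tree SVD guarantees that the errors introduced at distinct edges are mutually orthogonal, so their squared norms add exactly rather than compounding multiplicatively. This is precisely why the factor $2N-3$ (the number of truncated bonds) appears in the denominator of the entanglement hypothesis: it is a worst-case budget split evenly across all canonical partitions. The delicate point is verifying the orthogonality of the successive residuals rigorously, so that the naive triangle-inequality bound (which would give an extra factor of $\sqrt{2N-3}$ and fail to close) can be sharpened to the Pythagorean bound. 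The translation from the entanglement condition to the singular-value tail bound is the other step requiring care, but it is a direct estimate from the definition of entanglement entropy in \cref{def:entanglement}.
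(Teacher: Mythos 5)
Your proposal is correct and follows essentially the same route as the paper's proof: the entanglement hypothesis is converted into the bound $\sum_{d>R}\sigma_d^2 \leq \epsilon^2/(2N-3)$ on the squared singular-value tail beyond rank $R$ for each canonical partition (the paper does this via a Markov-inequality argument on $\log(1/p(x))$, you via a direct entropy lower bound in terms of the tail mass --- both work), and the approximation guarantee then follows from the hierarchical/tree SVD construction. The one step you sketch rather than prove --- that the truncation errors over the $2N-3$ canonical partitions accumulate as a sum of squares rather than via the naive triangle inequality --- is exactly the content of the result of \citet{grasedyck2010hierarchical} that the paper cites for this purpose, so nothing essential is missing.
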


\begin{proof}[Proof sketch (proof in~\cref{app:proofs:fit_sufficient})]
We show that if $\A$ has low entanglement under a canonical partition $(\K, \K^c) \in \can$, then the singular values of $\mat{\A}{\K}$ must decay rapidly (recall that $\mat{\A}{\K}$ is the arrangement of $\A$ as a matrix where rows correspond to axes indexed by $\K$ and columns correspond to the remaining axes).
The approximation guarantee is then obtained through a construction from~\cite{grasedyck2010hierarchical}, which is based on truncated singular value decompositions of every $\mat{\A}{\K}$ for $(\K, \K^c) \in \can$.
\end{proof}

	\section{Low Entanglement Under Canonical Partitions Is Necessary and Sufficient for Accurate Prediction}  \label{sec:accurate_predict}

In this section we consider the locally connected tensor network from~\cref{sec:fit_tensor:tn_lc_nn} in a machine learning setting. We show that attaining low population loss amounts to fitting a tensor defined by the data distribution, whose axes correspond to features (\cref{sec:accurate_predict:fit_data_tensor}).
Applying the theorems of~\cref{sec:fit_tensor:nec_and_suf}, we then conclude that the locally connected tensor network is capable of accurate prediction if and only if the data distribution admits low entanglement under canonical partitions of features (\cref{sec:accurate_predict:nec_and_suf}).
This conclusion is corroborated through experiments, demonstrating that the performance of common locally connected neural networks (including convolutional, recurrent, and local self-attention neural networks) is inversely correlated with the entanglement under canonical partitions of features in the data (\cref{sec:accurate_predict:emp_demo}).
For conciseness, the treatment in this section is limited to one-dimensional (sequential) models and data; see~\cref{app:extension_dims:accurate_predict} for an extension to arbitrary dimensions.

\subsection{Accurate Prediction Is Equivalent to Fitting Data Tensor} \label{sec:accurate_predict:fit_data_tensor}

As discussed in~\cref{sec:fit_tensor:tn_lc_nn}, the locally connected tensor network generating $\tntensor \in \R^{D_1 \times \cdots \times D_N}$ is equivalent to a locally connected neural network, whose forward pass over a data instance $\brk{ \xbf^{(1)}, \ldots, \xbf^{(N)} }$ yields $\inprod{ \tenp_{n = 1}^N \xbf^{(n)} }{\tntensor}$, where $\xbf^{(1)} \in \R^{D_1}, \ldots, \xbf^{(N)} \in \R^{D_N}$.
Motivated by this fact, we consider a binary classification setting, in which the label $y$ of the instance $\brk{ \xbf^{(1)}, \ldots, \xbf^{(N)} }$ is either $1$~or~$-1$, and the prediction~$\hat{y}$ is taken to be the sign of the output of the neural network, \ie~$\hat{y} = \sign \brk*{ \inprod{  \tenp_{n = 1}^N \xbf^{(n)}  }{\tntensor} }$.

Suppose we are given a training set of labeled instances $\brk[c]1{ \brk1{ \brk{ \xbf^{(1,m)}, \ldots, \xbf^{(N,m)} } , y^{(m)} } }_{m = 1}^M$ drawn i.i.d. from some distribution, and we would like to learn the parameters of the neural network through the soft-margin support vector machine (SVM) objective, \ie~by optimizing:
\be
\min\nolimits_{\substack{\norm{\tntensor} \leq B}} \frac{1}{M} \sum\nolimits_{m = 1}^M  \max \brk[c]*{0, 1 - y^{(m)} \inprodbig{ \tenp_{n = 1}^N \xbf^{(n,m)} }{\tntensor} }
\text{\,,}
\label{eq:soft_svm}
\ee
for a predetermined constant $B > 0$.\footnote{
An alternative form of the soft-margin SVM objective includes a squared Euclidean norm penalty instead of the constraint.
}
We assume instances are normalized, \ie~the distribution is such that all vectors constituting an instance have norm no greater than one.  
We also assume that $B \leq 1$.
In this case $\abs1{y^{(m)} \inprodbig{ \tenp_{n = 1}^N \xbf^{(n,m)} }{\tntensor}} \leq 1$, so our optimization problem can be expressed as:
\[
\min\nolimits_{\substack{\norm{\tntensor} \leq B}} 1 - \inprod{\datatensor}{\tntensor}
\text{\,,}
\]
where
\be
\datatensor := \frac{1}{M} \sum\nolimits_{m = 1}^M y^{(m)} \cdot \tenp_{n = 1}^N \xbf^{(n, m)}
\label{eq:data_tensor}
\ee
is referred to as the \emph{empirical data tensor}.
This means that the accuracy over the training data is determined by how large the inner product $\inprod{\datatensor}{\tntensor}$ is.

Disregarding the degenerate case of $\datatensor = 0$ (\ie~that in which the optimized objective is constant), the inner products $\inprodbig{\datatensor}{\tntensor}$ and $\inprodbig{\frac{\datatensor}{\norm{\datatensor}}}{\tntensor}$ differ by only a multiplicative (positive) constant, so fitting the training data amounts to optimizing: 
\be
\max\nolimits_{ \norm{\tntensor} \leq B } \inprod{\frac{\datatensor}{\norm{\datatensor}}}{\tntensor}
\text{\,.}
\label{eq:maximize_in_prod_datatensor}
\ee
If $\tntensor$ can represent some $\W$, then it can also represent $c \cdot \W$ for every $c \in \R$.\footnote{
Since contraction is a multilinear operation, if a tensor network realizes $\W$, then multiplying any of the tensors constituting it by~$c$ results in $c \cdot \W$.
\label{note:multilinear_contraction}
} 
Thus, optimizing~\cref{eq:maximize_in_prod_datatensor} is the same as optimizing:
\[
\max\nolimits_{ \tntensor } \inprod{ \frac{\datatensor}{\norm{\datatensor} } }{ \frac{\tntensor}{\norm{\tntensor} } }
\text{\,,}
\]
and multiplying the result by $B$.
Fitting the training data therefore boils down to minimizing $\normbig{ \frac{\tntensor}{ \norm{\tntensor} } - \frac{ \datatensor }{ \norm{\datatensor} } }$.
In other words, the accuracy achievable over the training data is determined by the extent to which $\frac{ \tntensor }{ \norm{\tntensor} }$ can fit the normalized empirical data tensor~\smash{$\frac{ \datatensor }{ \norm{\datatensor} }$}.

The arguments above are independent of the training set size $M$, and in fact apply to the population loss as well, in which case $\datatensor$ is replaced by the \emph{population data tensor}:
\be
\popdatatensor := \EE\nolimits_{\brk{ \xbf^{(1)}, \ldots, \xbf^{(N)} } , y} \brk[s]1{ y \cdot \tenp_{n = 1}^N \xbf^{(n)} }
\text{\,.}
\label{eq:pop_data_tensor}
\ee
Disregarding the degenerate case of $\popdatatensor = 0$ (\ie~that in which the population loss is constant), it follows that the achievable accuracy over the population is determined by the extent to which  $\frac{ \tntensor }{ \norm{\tntensor} }$ can fit the normalized population data tensor \smash{$\frac{ \popdatatensor }{ \norm{\popdatatensor} }$}.
We refer to the minimal distance from it as the \emph{suboptimality in achievable accuracy}.

\begin{definition}
\label{def:supopt}
In the context of the classification setting above, the \emph{suboptimality in achievable accuracy} is:
\[
\subopt := \min\nolimits_{\tntensor} \norm*{ \frac{\tntensor}{ \norm{\tntensor} } -  \frac{ \popdatatensor }{ \norm{\popdatatensor} }  }
\text{\,.}
\]
\end{definition}

\subsection{Necessary and Sufficient Condition for Accurate Prediction}\label{sec:accurate_predict:nec_and_suf}

In the classification setting of~\cref{sec:accurate_predict:fit_data_tensor}, by invoking~\cref{thm:fit_necessary,thm:fit_sufficient} from~\cref{sec:fit_tensor:nec_and_suf}, we conclude that the suboptimality in achievable accuracy is small if and only if the population data tensor $\popdatatensor$ admits low entanglement under the canonical partitions of its axes (\cref{def:canonical_partitions}).
This is formalized in \cref{cor:acc_pred_nec_and_suf} below.
The quantum entanglement of $\popdatatensor$ with respect to an arbitrary partition of its axes $(\K, \K^c)$, where $\K \subseteq [N]$, is a measure of dependence between the data features indexed by $\K$ and those indexed by~$\K^c$~---~see \cref{app:qe_dependence} for intuition behind this.
Thus, \cref{cor:acc_pred_nec_and_suf} implies that the suboptimality in achievable accuracy is small if and only if the data admits low dependence under the canonical partitions of features.
Since canonical partitions comprise a subset with contiguous indices, we obtain a formalization of the intuition by which data distributions suitable for locally connected neural networks are those exhibiting a “local nature.''

Directly evaluating the conditions required by~\cref{cor:acc_pred_nec_and_suf}~---~low entanglement under canonical partitions for $\popdatatensor$~---~is impractical, since: \emph{(i)} $\popdatatensor$ is defined via an unknown data distribution (\cref{eq:pop_data_tensor}); and \emph{(ii)} computing the entanglements involves taking singular value decompositions of matrices with size exponential in the number of input variables $N$.
Fortunately, as~\cref{prop:data_tensor_concentration} below shows, $\popdatatensor$ is with high probability well-approximated by the empirical data tensor $\datatensor$.
Moreover, the entanglement of $\datatensor$ under any partition can be computed efficiently, without explicitly storing or manipulating an exponentially large matrix~---~see~\cref{app:ent_comp} for an algorithm (originally proposed in~\cite{martyn2020entanglement}).
Overall, we obtain an efficiently computable criterion (low entanglement under canonical partitions for $\datatensor$), that with high probability is both necessary and sufficient for low suboptimality in achievable accuracy~---~see~\cref{cor:eff_acc_pred_nec_and_suf} below for a formalization.

\begin{corollary}
	\label{cor:acc_pred_nec_and_suf}
	Consider the classification setting of~\cref{sec:accurate_predict:fit_data_tensor}, and let $\epsilon \in [0, 1/4]$.
	If there exists a canonical partition $\brk{ \K, \K^c} \in \can$ (\cref{def:canonical_partitions}) under which $\qe{\popdatatensor}{\K} > \ln (R) + 2\epsilon \cdot \ln ( D_\K )+ 2 \sqrt{2 \epsilon}$, where $R$ is the width of the locally connected tensor network and $D_{\K} := \min \brk[c]{ \prod_{n \in \K} D_n , \prod_{n \in \K^c} D_n }$, then:
	\[
		\subopt > \epsilon
	\text{\,.}
	\]
	Conversely, if for all $\brk{ \K, \K^c} \in \can$ it holds that $\qe{\popdatatensor}{\K} \leq \frac{\epsilon^{2}}{8 N -12} \cdot \ln (R)$,\footnote{
		Per the discussion in~\cref{note:suff_cond}, when $\epsilon$ tends to zero: \emph{(i)} in the absence of any knowledge regarding $\popdatatensor$, the entanglements required by the sufficient condition unavoidably approach zero; while \emph{(ii)} if it is known that for all $(\K, \K^c) \in \can$ the singular values of $\mat{\popdatatensor}{\K}$ trailing after the $R$'th one are small, then the entanglements in the sufficient condition can be on the order of $\ln (R)$ (as they are in the necessary condition).
	}
	then:
	\[
		\subopt \leq \epsilon
		\text{\,.}
	\]
\end{corollary}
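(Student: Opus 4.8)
The plan is to reduce both directions of the corollary to the fitting theorems of \cref{sec:fit_tensor:nec_and_suf}, applied to the \emph{normalized} population data tensor $\hat{\A} := \popdatatensor / \norm{\popdatatensor}$ (well defined since we discard the degenerate case $\popdatatensor = 0$), which by construction satisfies $\norm{\hat{\A}} = 1$. The first ingredient I would establish is that quantum entanglement is invariant under rescaling: multiplying a tensor by a nonzero scalar multiplies all singular values of each matricization $\mat{\cdot}{\K}$ by the same factor, and therefore leaves the distribution $\brk[c]{\rho_d}$ of \cref{def:entanglement} unchanged. Consequently $\qe{\popdatatensor}{\K} = \qe{\hat{\A}}{\K}$ for every canonical partition $(\K, \K^c) \in \can$, so both hypotheses of the corollary may be read as statements about $\hat{\A}$, for which $\norm{\hat{\A}} = 1$ simplifies the bounds in \cref{thm:fit_necessary,thm:fit_sufficient}.

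For the necessary direction I would exploit that the set of tensors representable by the locally connected tensor network is a cone: by multilinearity of contraction, every normalized output $\tntensor / \norm{\tntensor}$ is itself a representable tensor, so minimizing over the smaller set of unit-norm outputs can only increase the optimum, giving $\subopt \geq \min_{\tntensor} \norm{\tntensor - \hat{\A}}$. I would then invoke the contrapositive of \cref{thm:fit_necessary} with $\A = \hat{\A}$ and error parameter $\epsilon$ (legitimate since $\epsilon \in [0, 1/4] = [0, \norm{\hat{\A}}/4]$): because $\norm{\hat{\A}} = 1$, the theorem's bound reads $\qe{\hat{\A}}{\K} \leq \ln(R) + 2\epsilon \ln(D_\K) + 2\sqrt{2\epsilon}$ whenever some output approximates $\hat{\A}$ to within $\epsilon$. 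The assumed violation of this bound for some canonical partition, transferred to $\hat{\A}$ via scale-invariance, therefore forces $\min_{\tntensor}\norm{\tntensor - \hat{\A}} > \epsilon$, and hence $\subopt > \epsilon$.

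For the sufficient direction I would apply \cref{thm:fit_sufficient} to $\A = \hat{\A}$ with target error $\epsilon / 2$. Substituting $\norm{\hat{\A}} = 1$, the theorem's hypothesis becomes $\qe{\hat{\A}}{\K} \leq \frac{(\epsilon/2)^2}{2N-3} \ln(R) = \frac{\epsilon^2}{8N-12}\ln(R)$ for all $(\K,\K^c) \in \can$, which is exactly the assumption of the corollary; it yields an assignment with $\norm{\tntensor - \hat{\A}} \leq \epsilon/2$. To pass from this unnormalized approximation to $\subopt$, I would use the elementary estimate $\norm{ \tntensor/\norm{\tntensor} - \hat{\A} } \leq 2 \norm{\tntensor - \hat{\A}}$, valid for unit-norm $\hat{\A}$ and obtained by a triangle inequality that splits off $\abs{\, \norm{\hat{\A}} - \norm{\tntensor} \,} \leq \norm{\tntensor - \hat{\A}}$. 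Since $\tntensor/\norm{\tntensor}$ is admissible in the minimization defining $\subopt$, this gives $\subopt \leq 2 \cdot (\epsilon/2) = \epsilon$.

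The main obstacle is correctly bridging the quantity $\subopt$~---~a distance measured after projecting the network output onto the unit sphere~---~with the plain fitting distance $\norm{\tntensor - \A}$ that the theorems control. The two directions require the bridge to run opposite ways: necessity uses the harmless inclusion of normalized outputs in the full representable cone, whereas sufficiency incurs the factor-two loss in the normalization estimate, which must be absorbed by choosing the target error $\epsilon/2$~---~precisely the choice that makes the theorem's hypothesis collapse to the stated $\frac{\epsilon^2}{8N-12}\ln(R)$. Verifying this parameter bookkeeping, together with the scale-invariance of entanglement, is where the care is needed; the remaining steps are direct applications of the two fitting theorems.
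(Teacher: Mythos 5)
Your proposal is correct and follows essentially the same route as the paper's proof: both directions reduce to \cref{thm:fit_necessary,thm:fit_sufficient} applied to the normalized tensor $\popdatatensor/\norm{\popdatatensor}$, using scale-invariance of entanglement, the fact that the representable set is closed under rescaling (multilinearity of contraction) for necessity, and \cref{thm:fit_sufficient} with target error $\epsilon/2$ plus the triangle-inequality normalization estimate for sufficiency. The parameter bookkeeping ($\frac{(\epsilon/2)^2}{2N-3} = \frac{\epsilon^2}{8N-12}$ and the factor-two loss absorbed by the halved target error) matches the paper exactly.
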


\begin{proof}[Proof sketch (proof in~\cref{app:proofs:cor:acc_pred_nec_and_suf})]
The result follows from~\cref{thm:fit_necessary,thm:fit_sufficient} after accounting for the normalization of $\tntensor$ in the definition of $\subopt$ (\cref{def:supopt}).
\end{proof}

\begin{proposition}
	\label{prop:data_tensor_concentration}
	Consider the classification setting of~\cref{sec:accurate_predict:fit_data_tensor}, and let $\delta \in (0, 1)$ and $\gamma > 0$.
	If the training set size $M$ satisfies $M \geq \frac{2\ln(\frac{2}{\delta})}{\norm{\popdatatensor}^{2}\gamma^{2}}$, then with probability at least $1 - \delta$:
	\[
		\norm*{\frac{\popdatatensor}{\norm{\popdatatensor}} - \frac{\datatensor}{\norm{\datatensor}}} \leq \gamma
	\]
\end{proposition}

\begin{proof}[Proof sketch (proof in~\cref{app:proofs:data_tensor_concentration})]
A standard generalization of the Hoeffding inequality to random vectors in a Hilbert space yields a high probability bound on $\norm{\datatensor - \popdatatensor}$, which is then translated to a bound on the normalized tensors.
\end{proof}

\begin{corollary}
	\label{cor:eff_acc_pred_nec_and_suf}
	Consider the setting and notation of~\cref{cor:acc_pred_nec_and_suf}, with $\epsilon \in (0, 1/6]$.
	For $\delta \in (0, 1)$, suppose that the training set size $M$ satisfies $M \geq \frac{8\ln(\frac{2}{\delta})}{\norm{\popdatatensor}^{2}\epsilon^{2}}$.
	Then, with probability at least $1 - \delta$ the following hold.
	First, if there exists a canonical partition $\brk{ \K, \K^c} \in \can$ (\cref{def:canonical_partitions}) under which $\qe{\datatensor}{\K} > \ln (R) + 3 \epsilon \cdot \ln (D_\K) + 2 \sqrt{ 3 \epsilon }$, then:
	\[
	\subopt > \epsilon
	\text{\,.}
	\]
	Second, if for all $\brk{ \K, \K^c} \in \can$ it holds that $\qe{\datatensor}{\K} \leq \frac{\epsilon^{2}}{32 N - 48} \cdot \ln (R)$, then:
	\[
	\subopt \leq \epsilon
	\text{\,.}
	\]
	Moreover, the conditions above on the entanglements of $\datatensor$ can be evaluated efficiently (in $\OO (D N M^2 + N M^3)$ time $\OO (D N M + M^2)$ and memory, where $D := \max_{n \in [N]} D_n$).
\end{corollary}

\begin{proof}[Proof sketch (proof in~\cref{app:proofs:cor:eff_acc_pred_nec_and_suf})]
Implied by~\cref{cor:acc_pred_nec_and_suf},~\cref{prop:data_tensor_concentration} with $\gamma = \frac{\epsilon}{2}$ and~\cref{alg:ent_comp} in~\cref{app:ent_comp}.
\end{proof}

\subsection{Empirical Demonstration}
\label{sec:accurate_predict:emp_demo}

\cref{cor:eff_acc_pred_nec_and_suf} establishes that, with high probability, the locally connected tensor network (from~\cref{sec:fit_tensor:tn_lc_nn}) can achieve high prediction accuracy if and only if the empirical data tensor (\cref{eq:data_tensor}) admits low entanglement under canonical partitions of its axes.
We corroborate our formal analysis through experiments, demonstrating that its conclusions carry over to common locally connected architectures.
Namely, applying convolutional neural networks, S4 (a popular recurrent neural network; see~\citep{gu2022efficiently}), and a local self-attention model~\citep{rae-razavi-2020-transformers} to different datasets, we show that the achieved test accuracy is inversely correlated with the entanglements of the empirical data tensor under canonical partitions.
Below is a description of experiments with one-dimensional (\ie~sequential) models and data.
Additional experiments with two-dimensional (imagery) models and data are given in~\cref{app:extension_dims:accurate_predict:emp_demo}.

Discerning the relation between entanglements of the empirical data tensor and performance (prediction accuracy) of locally connected neural networks requires datasets admitting different entanglements.
A potential way to acquire such datasets is as follows.
First, select a dataset on which locally connected neural networks perform well, in the hopes that it admits low entanglement under canonical partitions; natural candidates are datasets comprising images, text or audio.
Subsequently, create “shuffled'' variants of the dataset by repeatedly swapping the position of two features chosen at random.\footnote{
It is known that as the number of random position swaps goes to infinity, the arrangement of the features converges to a random permutation~\citep{diaconis1981generating}.
}
This erodes the original arrangement of features in the data, and is expected to yield higher entanglement under canonical partitions.

We followed the blueprint above for a binary classification version of the Speech Commands audio dataset~\citep{warden2018speech}.
\cref{fig:entanglement_inv_corr_acc} presents test accuracies achieved by a convolutional neural network, S4, and a local self-attention model, as well as average entanglement under canonical partitions of the empirical data tensor, against the number of random feature swaps performed to create the dataset.
As expected, when the number of swaps increases, the average entanglement under canonical partitions becomes higher.
At the same time, in accordance with our theory, the prediction accuracies of the locally connected neural networks substantially deteriorate, showing an inverse correlation with the entanglement under canonical partitions.

\begin{figure}[t]
	\vspace{0mm}
	\begin{center}
		\hspace{0mm}
		\includegraphics[width=1\textwidth]{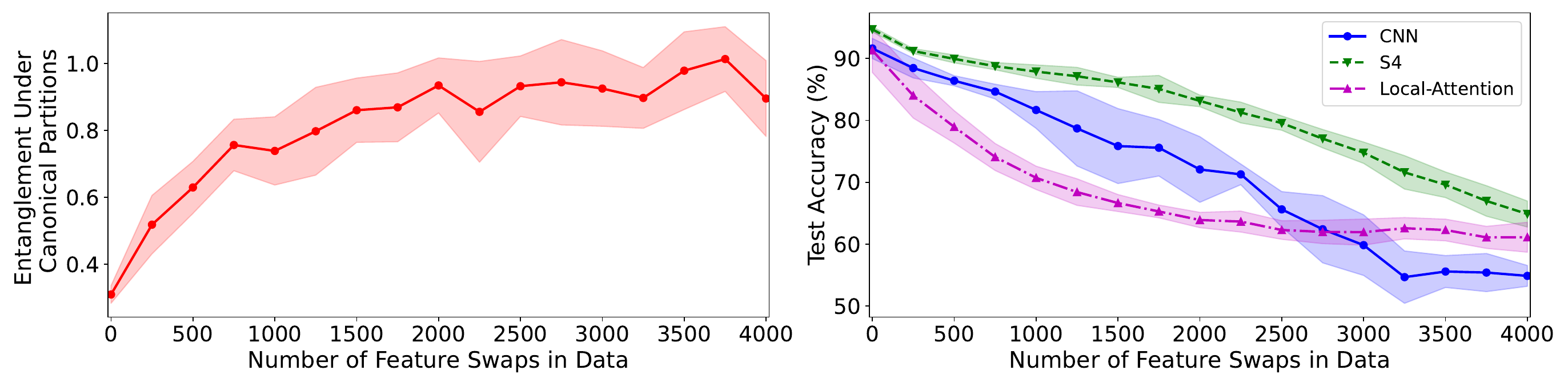}
	\end{center}
	\vspace{-2.5mm}
	\caption{
		The prediction accuracies of common locally connected neural networks are inversely correlated with the entanglements of the data under canonical partitions of features, in compliance with our theory (\cref{sec:accurate_predict:fit_data_tensor,sec:accurate_predict:nec_and_suf}).
		\textbf{Left:} Average entanglement under canonical partitions (\cref{def:canonical_partitions}) of the empirical data tensor (\cref{eq:data_tensor}), for binary classification variants of the Speech Commands audio dataset~\citep{warden2018speech} obtained by performing random position swaps between features.
		\textbf{Right:} Test accuracies achieved by a convolutional neural network (CNN)~\citep{dai2017very}, S4 (a popular class of recurrent neural networks; see~\citep{gu2022efficiently}), and a local self-attention model~\citep{rae-razavi-2020-transformers}, against the number of random feature swaps performed to create the dataset.
		\textbf{All:} Reported are the means and standard deviations of the quantities specified above, taken over ten different random seeds.
		See \cref{app:extension_dims:accurate_predict:emp_demo} for experiments over (two-dimensional) image data and \cref{app:experiments:details} for further implementation details.
	}
	\label{fig:entanglement_inv_corr_acc}
\end{figure}

	\section{Enhancing Suitability of Data to Locally Connected Neural Networks} \label{sec:enhancing}	

Our analysis (\cref{sec:fit_tensor,sec:accurate_predict}) suggests that a data distribution is suitable for locally connected neural networks if and only if it admits low entanglement under canonical partitions of features.
Motivated by this observation, we derive a preprocessing algorithm aimed to enhance the suitability of a data distribution to locally connected neural networks (\cref{sec:enhancing:search,sec:enhancing:practical}).
Empirical evaluations demonstrate that it significantly improves prediction accuracies of common locally connected neural networks on various datasets (\cref{sec:enhancing:exp}).
For conciseness, the treatment in this section is limited to one-dimensional (sequential) models and data; see~\cref{app:extension_dims:enhancing} for an extension to arbitrary dimensions.

\subsection{Search for Feature Arrangement With Low Entanglement Under Canonical Partitions}  \label{sec:enhancing:search}

Our analysis naturally leads to a recipe for enhancing the suitability of a data distribution to locally connected neural networks: given a dataset, search for an arrangement of features which leads to low entanglement under canonical partitions, and then arrange the features accordingly.
Formally, suppose we have $M \in \N$ training instances $\brk[c]1{ \brk1{ \brk{ \xbf^{(1,m)}, \ldots, \xbf^{(N,m)}  } , y^{(m)}} }_{m = 1}^M$, where $y^{(m)} \in \{1, -1\}$ and $\xbf^{(n, m)} \in \R^D$ for $n \in [N], m \in [M]$, with $D \in \N$.
Assume without loss of generality that $N$ is a power of two (if this is not the case we may add constant features as needed).
The aforementioned recipe boils down to a search for a permutation $\perm : [N] \to [N]$, which when applied to feature indices leads the empirical data tensor $\datatensor$ (\cref{eq:data_tensor}) to admit low entanglement under the canonical partitions of its axes (\cref{def:canonical_partitions}).

A greedy realization of the foregoing search is as follows.
Initially, partition the features into two equally sized sets $\K_{1,1} \subset [N]$ and $\K_{1,2} := [N] \setminus \K_{1,1}$ such that the entanglement of $\datatensor$ with respect to $(\K_{1,1}, \K_{1,2})$ is minimal.
That is, find $\K_{1,1} \in \argmin_{\K \subset [N] , \abs{\K} = N / 2 } \qe{\datatensor}{\K}$.
The permutation $\pi$ will map $\K_{1,1}$ to coordinates $\{1, \ldots, \frac{N}{2} \}$ and $\K_{1,2}$ to $\{ \frac{N}{2} + 1, \ldots, N \}$.
Then, partition $\K_{1,1}$ into two equally sized sets $\K_{2,1} \subset \K_{1,1}$ and $\K_{2, 2} := \K_{1,1} \setminus \K_{2,1}$ such that the average of entanglements induced by these sets is minimal, \ie~$\K_{2,1} \in \argmin_{\K \subset \K_{1,1} , \abs{\K} = \abs{\K_{1,1}} / 2} \frac{1}{2} \big [ \qe{\datatensor}{\K} + \qe{\datatensor}{\K_{1, 1} \setminus \K} \big ]$.
The permutation $\pi$ will map $\K_{2,1}$ to coordinates $\{1, \ldots, \frac{N}{4} \}$ and $\K_{2,2}$ to $\{\frac{N}{4} + 1, \ldots, \frac{N}{2} \}$.
A partition of $\K_{1,2}$ into two equally sized sets $\K_{2, 3}$ and $\K_{2, 4}$ is obtained similarly, where $\pi$ will map $\K_{2,3}$ to coordinates $\{\frac{N}{2} + 1, \ldots, \frac{3N}{4} \}$ and $\K_{2,4}$ to $\{ \frac{3N}{4} + 1, \ldots, N \}$.
Continuing in the same fashion, until we reach subsets $\K_{L, 1}, \ldots, \K_{L, N}$ consisting of a single feature index each, fully specifies the permutation $\pi$.

Unfortunately, the step lying at the heart of the above scheme~---~finding a balanced partition that minimizes average entanglement~---~is computationally prohibitive, and we are not aware of any tools that alleviate the computational difficulty.
In the next subsection we will see that replacing entanglement with a surrogate measure paves way to a practical implementation.

\subsection{Practical Algorithm via Surrogate for Entanglement} \label{sec:enhancing:practical}

To efficiently implement the scheme from~\cref{sec:enhancing:search}, we replace entanglement with a surrogate measure of dependence.
The surrogate is based on the Pearson correlation coefficient for multivariate features~\citep{puccetti2022measuring},\footnote{
	For completeness,~\cref{app:pearson} provides a formal definition of the multivariate Pearson correlation.
} and its agreement with entanglement is demonstrated empirically  in~\cref{app:ent_surrogate}.
Theoretically supporting this agreement is left for future work.

\begin{definition}
\label{def:surrogate_entanglement}
Given a set of $M \in \N$ instances $\X := \brk[c]{ \brk{ \xbf^{(1,m)}, \ldots, \xbf^{(N, m) } } \in (\R^D)^N }_{m = 1}^M$, denote by $\pear_{n, n'}$ the multivariate Pearson correlation between features $n, n' \in [N]$.
For $\K \subseteq [N]$, the \emph{surrogate entanglement} of $\X$ with respect to the partition $(\K, \K^c)$, denoted $\se{ \X }{ \K }$, is the sum of absolute values of Pearson correlation coefficients between pairs of features, the first belonging to $\K$ and the second to $\K^c := [N] \setminus \K$.
That is:
\[
\sebig{ \X }{ \K } := \sum\nolimits_{n \in \K, n' \in \K^c} \abs{ \pear_{n, n'} }
\text{\,.}
\]
\end{definition}

As shown in~\cref{prop:min_balanced_cut} below, replacing entanglement with surrogate entanglement in the scheme from~\cref{sec:enhancing:search} converts each search for a balanced partition minimizing average entanglement into a \emph{minimum balanced cut problem}.
Although the minimum balanced cut problem is NP-hard (see, \eg,~\cite{garey1979computers}), it enjoys a wide array of well-established approximation tools, particularly ones designed for large scale~\citep{karypis1998fast,spielman2011spectral}.
We therefore obtain a practical algorithm for enhancing the suitability of a data distribution to locally connected neural networks~---~see~\cref{alg:ent_struct_search}.

\begin{proposition}
	\label{prop:min_balanced_cut}
	For any $\widebar{\K} \subseteq [N]$ of even size, the following optimization problem can be framed as a minimum balanced cut problem over a complete graph with $\abs{\widebar{\K}}$ vertices:
	\be
	\min_{\K \subset \widebar{\K} , \abs{ \K } = \abs{\widebar{\K}} / 2} \frac{1}{2} \brk[s]2{ \sebig{ \X }{ \K } + \sebig{ \X }{ \widebar{\K} \setminus \K  } }
	\text{\,.}
	\label{eq:balanced_part_min_surr_entanglement}
	\ee
	Specifically, there exists a complete undirected weighted graph with vertices $\widebar{\K}$ and edge weights $w : \widebar{\K} \times \widebar{\K} \to \R$ such that for any $\K \subset \widebar{\K}$, the weight of the cut in the graph induced by $\K$~---~$\sum\nolimits_{n \in \K , n' \in \widebar{\K} \setminus \K } w( \{ n, n' \})$~---~is equal, up to an additive constant, to the term minimized in~\cref{eq:balanced_part_min_surr_entanglement}, \ie~to $\frac{1}{2} \brk[s]1{ \sebig{ \X }{ \K } + \sebig{ \X }{ \widebar{\K} \setminus \K  } }$.
\end{proposition}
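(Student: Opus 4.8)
The plan is to unfold \cref{def:surrogate_entanglement} and show that, once the terms that do not depend on how $\widebar{\K}$ is split are stripped away, the objective in~\cref{eq:balanced_part_min_surr_entanglement} is exactly a cut weight. First I would fix notation: write $\K_1 := \K$ and $\K_2 := \widebar{\K} \setminus \K$ for the two halves of $\widebar{\K}$, and set $\widebar{\K}^c := [N] \setminus \widebar{\K}$ for the features lying outside $\widebar{\K}$. The crucial point to keep in mind is that $\se{\X}{\K_1}$ is defined against the \emph{full} complement $[N] \setminus \K_1$, not merely $\widebar{\K} \setminus \K_1$; since $\K_1 \subseteq \widebar{\K}$, this complement splits as the disjoint union $\K_2 \cupdot \widebar{\K}^c$. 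Substituting this splitting into the definition gives
\[
\se{\X}{\K_1} = \sum\nolimits_{n \in \K_1, n' \in \K_2} \pear_{n,n'} + \sum\nolimits_{n \in \K_1, n' \in \widebar{\K}^c} \pear_{n,n'} ,
\]
together with the analogous identity for $\se{\X}{\K_2}$ obtained by swapping the roles of $\K_1$ and $\K_2$.

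Next I would sum the two identities and invoke the symmetry $\pear_{n,n'} = \pear_{n',n}$ of the multivariate Pearson correlation. The two cross terms between $\K_1$ and $\K_2$ then coincide and combine into $2 \sum_{n \in \K_1, n' \in \K_2} \pear_{n,n'}$, while the two terms reaching into $\widebar{\K}^c$ merge into $\sum_{n \in \widebar{\K}, n' \in \widebar{\K}^c} \pear_{n,n'}$. The key observation is that this last quantity depends only on $\widebar{\K}$ and not on the particular balanced split $(\K_1, \K_2)$, hence is a constant $C$ over the feasible set. Dividing by two yields
\[
\tfrac{1}{2} \left[ \se{\X}{\K_1} + \se{\X}{\K_2} \right] = \sum\nolimits_{n \in \K_1, n' \in \K_2} \pear_{n,n'} + \tfrac{1}{2} C .
\]

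Finally I would define the complete undirected weighted graph on vertex set $\widebar{\K}$ with edge weights $w(\{n,n'\}) := \pear_{n,n'}$. The first term above is precisely the weight $\sum_{n \in \K_1, n' \in \K_2} w(\{n,n'\})$ of the cut induced by $\K_1$, so minimizing the objective over balanced bipartitions $\K \subset \widebar{\K}$ with $\abs{\K} = \abs{\widebar{\K}}/2$ is, up to the additive constant $\tfrac{1}{2} C$, exactly the minimum balanced cut problem on this graph, as claimed. I do not anticipate a genuine obstacle here: the whole argument is a bookkeeping computation whose only real subtlety is correctly tracking the edges from $\widebar{\K}$ to $\widebar{\K}^c$ and recognizing that their total contribution is split-independent. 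The symmetry of $\pear$ is what causes the within-$\widebar{\K}$ cross terms to collapse into a single undirected cut weight rather than two separate directed sums.
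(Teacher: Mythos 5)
Your proposal is correct and follows essentially the same route as the paper: you construct the identical complete graph with edge weights $w(\{n,n'\}) = \pear_{n,n'}$ and arrive at the same identity, since your constant $\tfrac{1}{2}C = \tfrac{1}{2}\sum_{n \in \widebar{\K}, n' \in [N]\setminus\widebar{\K}} \pear_{n,n'}$ is precisely the paper's $\tfrac{1}{2}\sebig{\X}{\widebar{\K}}$ term. The only difference is that you spell out the bookkeeping (splitting each complement as $\K_2 \cupdot \widebar{\K}^c$ and invoking the symmetry of $\pear$) that the paper leaves implicit when it asserts the identity directly.
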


\begin{proof}
	Consider the complete undirected graph whose vertices are $\widebar{\K}$ and where the weight of an edge $\{n, n'\} \in \widebar{\K} \times \widebar{\K}$ is $w ( \{ n, n' \} ) = \abs{ p_{n, n'} }$ (recall that $p_{n, n'}$ stands for the multivariate Pearson correlation between features $n$ and $n'$ in $\X$).
	For any $\K \subset \widebar{\K}$ it holds that:
	\[
		\sum\nolimits_{n \in \K , n' \in \widebar{\K} \setminus \K } w( \{ n, n' \}) = \frac{1}{2} \brk[s]2{ \sebig{ \X }{ \K } + \sebig{ \X }{ \widebar{\K} \setminus \K  } } - \frac{1}{2}\sebig{ \X }{ \widebar{\K} }
		\text{\,,}
	\]
	where $\frac{1}{2}\se{ \X }{ \widebar{\K} }$ does not depend on $\K$.
	This concludes the proof.
\end{proof}

\begin{algorithm}[t!]
	\caption{Enhancing Suitability of Data to Locally Connected Neural Networks} 
	\label{alg:ent_struct_search}	
	\begin{algorithmic}[1]
		\STATE \!\textbf{Input:} $\X := \brk[c]{ \brk{ \xbf^{(1,m)}, \ldots, \xbf^{(N, m)} } }_{m = 1}^M$~---~$M \in \N$ data instances comprising $N \in \N$ features \\[0.4em]
		\STATE \!\textbf{Output:} Permutation $\perm : [N] \to [N]$ to apply to feature indices \\[-0.2em]
		\hrulefill
		\vspace{1mm}
		\STATE Let $\K_{0, 1} := [N]$ and denote $L := \log_2 (N)$
		\vspace{1mm}
		\STATE \darkgray{\# We assume for simplicity that $N$ is a power of two, otherwise one may add constant features}
		\vspace{1mm}
		\FOR{$l = 0, \ldots, L - 1 ~,~ n = 1, \ldots, 2^{l}$}
		\vspace{1mm}
		\STATE Using a reduction to a minimum balanced cut problem (\cref{prop:min_balanced_cut}), find an approximate solution $\K_{l + 1, 2n - 1} \subset \K_{l , n}$ for:
		\[
			\min\nolimits_{ \K \subset \K_{l, n} , \abs{\K} = \abs{ \K_{l, n} } / 2 } \frac{1}{2} \brk[s]*{ \se{\X}{ \K } + \se{\X}{ \K_{l, n} \setminus \K } }
		\]
		\vspace{-2mm}
		\STATE Let $\K_{l + 1, 2n} := \K_{l, n} \setminus \K_{l + 1, 2n - 1}$
		\vspace{1mm}
		\ENDFOR
		\vspace{1mm}
		\STATE \darkgray{\# At this point, $\K_{L, 1}, \ldots, \K_{L, N}$ each contain a single feature index}
		\vspace{0.5mm}
		\STATE \textbf{return} $\perm$ that maps $k \in \K_{L, n}$ to $n$, for every $n \in [N]$
	\end{algorithmic}
\end{algorithm}

\subsection{Experiments} \label{sec:enhancing:exp}

We empirically evaluate~\cref{alg:ent_struct_search} using common locally connected neural networks~---~a convolutional neural network, S4 (popular recurrent neural network; see~\citep{gu2022efficiently}), and a local self-attention model~\citep{rae-razavi-2020-transformers}~---~over randomly permuted audio datasets (\cref{sec:enhancing:exp:perm_audio}) and several tabular datasets (\cref{sec:enhancing:exp:tab}).
Additional experiments with two-dimensional data are given in~\cref{app:extension_dims:enhancing:exp}.
For brevity, we defer some implementation details to~\cref{app:experiments:details}.

\subsubsection{Randomly Permuted Audio Datasets} \label{sec:enhancing:exp:perm_audio}

\cref{sec:accurate_predict:emp_demo} demonstrated that audio data admits low entanglement under canonical partitions of features, and that randomly permuting the position of features leads this entanglement to increase, while substantially degrading the prediction accuracy of locally connected neural networks.
A sensible test for~\cref{alg:ent_struct_search} is to evaluate its ability to recover performance lost due to the random permutation of features.

For the Speech Commands dataset~\citep{warden2018speech},~\cref{tab:audio_moderate_dim} compares the prediction accuracies of locally connected neural networks on: \emph{(i)} the data subject to a random permutation of features; \emph{(ii)} the data attained after rearranging the randomly permuted features via~\cref{alg:ent_struct_search}; and \emph{(iii)} the data attained after rearranging the randomly permuted features via IGTD~\citep{zhu2021converting}~---~a heuristic scheme designed for convolutional neural networks (see~\cref{sec:relate}).
As can be seen, \cref{alg:ent_struct_search} leads to significant improvements, surpassing those brought forth by IGTD.
Note that~\cref{alg:ent_struct_search} does not entirely recover the performance lost due to the random permutation of features.\footnote{
	The prediction accuracies on the original data are $59.8$, $69.6$ and $48.1$ for CNN, S4 and Local-Attention, respectively.
}
We believe this relates to phenomena outside the scope of the theory underlying~\cref{alg:ent_struct_search} (\cref{sec:fit_tensor,sec:accurate_predict}), for example translation invariance in data being beneficial in terms of generalization.  
Investigation of such phenomena and suitable modification of~\cref{alg:ent_struct_search} are regarded as promising directions for future work. 

The number of features in the audio dataset used in the experiment above is 2048.  
We demonstrate the scalability of~\cref{alg:ent_struct_search} by including in~\cref{app:experiments} an experiment over audio data with 50,000 features.  
In this experiment, instances of the minimum balanced cut problem encountered in~\cref{alg:ent_struct_search} entail graphs with up to $25 \cdot 10^8$ edges.
They are solved using the well-known edge sparsification algorithm of~\citep{spielman2011spectral} that preserves weights of cuts, allowing for configurable compute and memory consumption (the more resources are consumed, the more accurate the solution will be).

\begin{table}[t]
	\caption{
		Arranging features of randomly permuted audio data via~\cref{alg:ent_struct_search} significantly improves the prediction accuracies of locally connected neural networks.
		Reported are test accuracies (mean and standard deviation over ten random seeds) of a convolutional neural network (CNN), S4 (a popular recurrent neural network; see~\citep{gu2022efficiently}), and a local self-attention model~\citep{rae-razavi-2020-transformers}, over the Speech Commands dataset~\citep{warden2018speech} subject to different arrangements of features: \emph{(i)} a random arrangement; \emph{(ii)} an arrangement provided by applying~\cref{alg:ent_struct_search} to the random arrangement; and \emph{(iii)} an arrangement provided by applying an adaptation of IGTD~\citep{zhu2021converting}~---~a heuristic scheme designed for convolutional neural networks~---~to the random arrangement.
		For each model, we highlight (in boldface) the highest mean accuracy if the difference between that and the second-highest mean accuracy is statistically significant (namely, is larger than the standard deviation corresponding to the former).
		As can be seen,~\cref{alg:ent_struct_search} leads to significant improvements in prediction accuracies, surpassing the improvements brought forth by IGTD.
		See~\cref{app:experiments:details} for implementation details.
	}
	\begin{center}
	\small
	\vspace{1mm}
	\begin{tabular}{lccc}
		\toprule
		& Randomly Permuted & \cref{alg:ent_struct_search} & IGTD \\
		\midrule
		CNN & ${5.5}~\pm$ \scriptsize{${0.5}$} & ${\mathbf{18}}~\pm$ \scriptsize{${1.6}$} & ${5.7}~\pm$ \scriptsize{${0.6}$} \\
		S4 & ${9.4}~\pm$ \scriptsize{${0.8}$} &  $\mathbf{30.1}~\pm$ \scriptsize{${1.8}$} & ${12.8}~\pm$ \scriptsize{${1.3}$} \\
		Local-Attention & ${7.8}~\pm$ \scriptsize{${0.5}$} &  $\mathbf{12.9}~\pm$ \scriptsize{${0.6}$} & ${7}~\pm$ \scriptsize{${0.6}$}  \\
		\bottomrule
	\end{tabular}	
	\end{center}
	\label{tab:audio_moderate_dim}
	\vspace{-2mm}
\end{table}

\subsubsection{Tabular Datasets}  \label{sec:enhancing:exp:tab}

The prediction accuracies of locally connected neural networks on tabular data, \ie~on data in which features are arranged arbitrarily, is known to be subpar~\citep{shwartz2022tabular}.
\cref{tab:tabular_datasets} reports results of experiments with locally connected neural networks over standard tabular benchmarks (namely “dna'', “semeion'' and “isolet''~\cite{OpenML2013}), demonstrating that arranging features via~\cref{alg:ent_struct_search} leads to significant improvements in prediction accuracies, surpassing improvements brought forth by IGTD (a heuristic scheme designed for convolutional neural networks~\cite{zhu2021converting}). 
Note that~\cref{alg:ent_struct_search} does not lead to state of the art prediction accuracies on the evaluated benchmarks.\footnote{
	XGBoost for example achieves prediction accuracies $96$, $91$ and $95.2$ over dna, semeion and isolet, respectively.
} 
However, the results suggest that it renders locally connected neural networks a viable option for tabular data.  
This option is particularly appealing in when the number of features is large settings, where many alternative approaches (\eg~ones involving fully connected neural networks) are impractical.

\begin{table}[t]
	\caption{
		Arranging features of tabular datasets via~\cref{alg:ent_struct_search} significantly improves the prediction accuracies of locally connected neural networks.
		Reported are results of experiments analogous to those of~\cref{tab:audio_moderate_dim}, but with the “dna'', “semeion'' and “isolet'' tabular classification datasets~\citep{OpenML2013}.
		Since to the arrangement of features in a tabular dataset is intended to be arbitrary, we regard as a baseline the prediction accuracies attained with a random permutation of features.
		For each combination of dataset and model, we highlight (in boldface) the highest mean accuracy if the difference between that and the second-highest mean accuracy is statistically significant (namely, is larger than the standard deviation corresponding to the former).
		Notice that, as in the experiment of~\cref{tab:audio_moderate_dim}, rearranging the features according to~\cref{alg:ent_struct_search} leads to significant improvements in prediction accuracies, surpassing the improvements brought forth by IGTD.
		See~\cref{app:experiments:details} for implementation details.
	}
	\vspace{1mm}
	\begin{center}
		\small
		\begin{tabular}{lccc}
			\multicolumn{4}{l}{Dataset: dna} \\[0.2em]
			\toprule
			& Baseline & \cref{alg:ent_struct_search} & IGTD \\
			\midrule
			CNN & $82.5~\pm$ \scriptsize{$1.7$} & $\mathbf{91.2}~\pm$ \scriptsize{$1.1$} & $87.4~\pm$ \scriptsize{$1.1$}\\
			S4 & $86.4~\pm$ \scriptsize{$1.7$} & $89.1~\pm$ \scriptsize{$3.7$} & $89.9~\pm$ \scriptsize{$1.1$} \\
			Local-Attention & $79.2~\pm$ \scriptsize{$4.0$} & $85.7~\pm$ \scriptsize{$4.5$} & $82.7~\pm$ \scriptsize{$3.2$} \\
			\bottomrule					
			\vspace{-2mm}
		\end{tabular}
		\begin{tabular}{lccc}
			\multicolumn{4}{l}{Dataset: semeion} \\[0.3em]
			\toprule
			& Baseline & \cref{alg:ent_struct_search} & IGTD \\
			\midrule
			CNN & $77.7~\pm$ \scriptsize{$1.4$} & $80.0~\pm$ \scriptsize{$1.8$} & $78.9~\pm$ \scriptsize{$1.9$}\\
			S4 & $82.5~\pm$ \scriptsize{$1.1$} & $\mathbf{89.7}~\pm$ \scriptsize{$0.5$} & $86.0~\pm$ \scriptsize{$0.7$} \\
			Local-Attention & $60.9~\pm$ \scriptsize{$4.9$} & $\mathbf{78.0}~\pm$ \scriptsize{$1.7$} & $67.8~\pm$ \scriptsize{$2.6$} \\	
			\bottomrule
			\vspace{-2mm}
		\end{tabular}	
			\begin{tabular}{lccc}
		\multicolumn{4}{l}{Dataset: isolet} \\[0.3em]
		\toprule
		& Baseline & \cref{alg:ent_struct_search} & IGTD \\
		\midrule
		CNN & $91.0~\pm$ \scriptsize{$0.6$} & $\mathbf{92.5}~\pm$ \scriptsize{$0.4$} & $92.0~\pm$ \scriptsize{$0.6$}\\
		S4 & $92.3~\pm$ \scriptsize{$0.4$} & $\mathbf{93.4}~\pm$ \scriptsize{$0.3$} & $92.7~\pm$ \scriptsize{$0.5$} \\
		Local-Attention & $82.0~\pm$ \scriptsize{$1.6$} & $\mathbf{89.0}~\pm$ \scriptsize{$0.6$} & $85.7~\pm$ \scriptsize{$1.9$} \\	
		\bottomrule
	\end{tabular}	
	\end{center}
	\vspace{-2mm}
	\label{tab:tabular_datasets}
\end{table}

	\section{Conclusion}  \label{sec:conc}

\subsection{Summary}

The question of what makes a data distribution suitable for deep learning is a fundamental open problem.
Focusing on locally connected neural networks~---~a prevalent family of deep learning architectures that includes as special cases convolutional neural networks, recurrent neural networks (in particular the recent S4 models) and local self-attention models~---~we address this problem by adopting theoretical tools from quantum physics.
Our main theoretical result states that a certain locally connected neural network is capable of accurate prediction (\ie~can express a solution with low population loss) over a data distribution \emph{if and only if} the data distribution admits low quantum entanglement under certain canonical partitions of features.
Experiments with widespread locally connected neural networks corroborate this finding.

Our theory suggests that the suitability of a data distribution to locally connected neural networks may be enhanced by arranging features such that low entanglement under canonical partitions is attained.
Employing a certain surrogate for entanglement, we show that this arrangement can be implemented efficiently, and that it leads to substantial improvements in the prediction accuracies of common locally connected neural networks on various datasets.

\subsection{Limitations and Future Work}

\paragraph*{Neural network architecture}
We theoretically analyzed a locally connected neural network with polynomial non-linearity, by employing its equivalence to a tensor network (\cf~\cref{sec:fit_tensor:tn_lc_nn}).
Accounting for neural networks with connectivities beyond those considered (\eg~connectivities that are non-local or ones involving skip connections) is an interesting topic for future work. 
It requires modification of the equivalent tensor network and corresponding modification of the definition of canonical partitions, similarly to the analysis in~\cref{app:extension_dims}.
Another valuable direction is to account for neural networks with other non-linearities, \eg~ReLU.
We believe this may be achieved through a generalized notion of tensor networks, successfully used in past work to analyze such architectures~\cite{cohen2016convolutional}.

\vspace{-1.5mm}

\paragraph*{Objective function}
The analysis in~\cref{sec:accurate_predict} assumes a binary soft-margin SVM objective.
Extending it to other objective functions, \eg~multi-class SVM, may shed light on the relation between the objective and the requirements for a data distribution to be suitable to neural networks.

\vspace{-1.5mm}

\paragraph*{Textual data}
Our experiments (in \cref{sec:accurate_predict:emp_demo,app:extension_dims:accurate_predict:emp_demo}) show that the necessary and sufficient condition we derived for a data distribution to be suitable to a locally connected neural network~---~namely, low quantum entanglement under canonical partitions of features~---~is upheld by audio and image datasets.
This falls in line with the excellent performance of locally connected neural networks over these data modalities.
In contrast, high performant architectures for textual data are typically non-local~\cite{vaswani2017attention}.
Investigating the quantum entanglements that textual data admits, and, in particular, under which partitions they are low, may allow designing more efficient architectures with connectivity tailored to textual data.

\subsection{Outlook}

The data modalities to which deep learning is most commonly applied~---~namely ones involving images, text and audio~---~are often regarded as natural (as opposed to, for example, tabular data fusing heterogeneous information).
We believe the difficulty in explaining the suitability of such modalities to deep learning may be due to a shortage in tools for formally reasoning about natural data.
Concepts and tools from physics~---~a branch of science concerned with formally reasoning about natural phenomena~---~may be key to overcoming said difficulty.
We hope that our use of quantum entanglement will encourage further research along this line.

	\ifdefined\NEURIPS
		\begin{ack}
			This work was supported by a Google Research Scholar Award, a Google Research Gift, the Yandex Initiative in Machine Learning, the Israel Science Foundation (grant 1780/21), the Tel Aviv University Center for AI and Data Science, the Adelis Research Fund for Artificial Intelligence, Len Blavatnik and the Blavatnik Family Foundation, and Amnon and Anat Shashua.
NR is supported by the Apple Scholars in AI/ML PhD fellowship.
		\end{ack}
	\else
		\newcommand{\ack}{}
	\fi
	\ifdefined\ARXIV
		\section*{Acknowledgements}
		\ack
	\else
		\ifdefined\COLT
			\acks{\ack}
		\else
			\ifdefined\CAMREADY
				\ifdefined\ICLR
					\newcommand*{\subsuback}{}
				\fi
				\ifdefined\NEURIPS
				\else
					\section*{Acknowledgements}
					\ack
				\fi
			\fi
		\fi
	\fi
	
	\section*{References}
	{\small
		\ifdefined\ICML
			\bibliographystyle{icml2023}
		\else
			\bibliographystyle{plainnat}
		\fi
		\bibliography{refs}
	}

	\clearpage
	\appendix
	
	\onecolumn
	
	\ifdefined\ENABLEENDNOTES
		\theendnotes
	\fi
	

	
	\section{Impossibility Result for Improving the Sufficient Condition in~\cref{thm:fit_sufficient}}  \label{app:suff_cond}

The sufficient condition in~\cref{thm:fit_sufficient} (from~\cref{sec:fit_tensor:nec_and_suf}) for approximating $\A \in \R^{D_1 \times \cdots \times D_N}$ requires entanglements to approach zero as the desired approximation error~$\epsilon$ does, in contrast to the necessary condition in~\cref{thm:fit_necessary} where they approach $\ln (R)$.
As \cref{prop:suff_cond_impossability} shows, this is unavoidable in the absence of further knowledge regarding $\A$.
However, if for all canonical partitions $(\K, \K^c) \in \can$ the singular values of $\mat{\A}{\K}$ trailing after the $R$'th one are small, then we can also guarantee an assignment for the locally connected tensor network satisfying $\norm{\tntensor - \A} \leq \epsilon$, while $\qe{\A}{\K}$ can be on the order of $\ln (R)$ for all $(\K, \K^c) \in \can$. Indeed, this follows directly from a result in~\citep{grasedyck2010hierarchical}, which we restate as \cref{app:proofs:grasedyck} for convenience.

\begin{proposition}
	\label{prop:suff_cond_impossability}
	Let $f: \R^2 \to \R_{\geq 0}$ be monotonically increasing in its second variable.
	Suppose that the following statement holds: if a tensor $\A \in \R^{D_1 \times \cdots \times D_N}$ satisfies $\qe{\A}{\K} \leq f (\norm{\A}, \epsilon)$ for all canonical partitions $(\K, \K^c) \in \can$ (\cref{def:canonical_partitions}), then there exists an assignment for the tensors constituting the locally connected tensor network (defined in~\cref{sec:fit_tensor:tn_lc_nn}) for which $\tntensor \in \R^{D_1 \times \cdots \times D_N}$ upholds:
	\[
	\norm{\tntensor - \A} \leq \epsilon
	\text{\,.}
	\]
	Then, for any $\A \in \R^{D_1 \times \cdots \times D_N}$, as the desired approximation error $\epsilon$ goes to zero, so does the sufficient condition on entanglements, \ie:
	\[
	\lim\nolimits_{\epsilon \to 0} f (\norm{\A}, \epsilon) = 0
	\text{\,.}
	\]
\end{proposition}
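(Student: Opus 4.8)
Proof proposal. The plan is to argue by contradiction: assuming the limit equals a positive number $c$, I will exhibit a single tensor that satisfies the sufficient condition for \emph{every} $\epsilon > 0$ yet cannot be approximated by the locally connected tensor network to arbitrary precision, contradicting the hypothesized implication. Fix $\A$ and write $r := \norm{\A}$; since $f$ enters only through $\norm{\A}$ it suffices to treat one representative of each norm value, and we may assume $r > 0$ (the zero tensor is degenerate under the entanglement convention). Because $f(r, \cdot)$ is monotonically increasing and nonnegative, the limit $c := \lim_{\epsilon \to 0^+} f(r, \epsilon) = \inf_{\epsilon > 0} f(r, \epsilon) \ge 0$ exists and $f(r, \epsilon) \ge c$ for every $\epsilon > 0$. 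Suppose toward a contradiction that $c > 0$.

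The heart of the argument is a tension between two facts. First, every $\tntensor$ generated by the locally connected tensor network satisfies $\rank(\mat{\tntensor}{\K}) \le R$ for each canonical partition $(\K, \K^c) \in \can$; this single-edge cut bound in the binary tree is exactly what yields $\qe{\tntensor}{\K} \le \ln(R)$ in the proof of \cref{thm:fit_necessary}. Consequently, since matricization preserves the Frobenius norm, the Eckart--Young theorem gives, for any target $\A^\ast$ and any canonical $\K_0$, the lower bound $\norm{\tntensor - \A^\ast} \ge \sigma_{R+1}(\mat{\A^\ast}{\K_0})$, where $\sigma_{R+1}$ denotes the $(R+1)$-th singular value. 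Thus, whenever $\mat{\A^\ast}{\K_0}$ has rank exceeding $R$, the tensor $\A^\ast$ cannot be approximated to precision below $\sigma_{R+1}(\mat{\A^\ast}{\K_0}) > 0$.

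I would then construct a target $\A^\ast$ with $\norm{\A^\ast} = r$, arbitrarily small entanglement under all canonical partitions, yet rank $R+1$ under one of them. Pick a balanced canonical partition $\K_0$ (say $\K_0 = \{1, \dots, N/2\}$); in the regime of interest $\ln(R) \ll N$ one has $D_{\K_0} \ge R+1$, so rank $R+1$ is attainable under $\K_0$. Take $\A^\ast$ to be a rank-one product tensor $\ubf = \otimes_n \vbf^{(n)}$ perturbed by a term of size $\eta$ that raises the rank of $\mat{\cdot}{\K_0}$ to $R+1$: concretely, arrange that $\mat{\A^\ast}{\K_0}$ has singular values $\sigma_1, \eta, \dots, \eta$ ($R$ copies of $\eta$), then rescale so that $\norm{\A^\ast} = r$. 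As $\eta \to 0$ we have $\A^\ast \to \ubf$; since $\ubf$ is rank one under every partition, $\qe{\ubf}{\K} = 0$, and the continuity of the entanglement entropy (the same fact used in \cref{thm:fit_necessary} to pass from $\tntensor$ to a nearby $\A$) gives $\qe{\A^\ast}{\K} \to 0$ for each of the finitely many canonical $\K$. Hence for $\eta$ small enough, $\max_{(\K, \K^c) \in \can} \qe{\A^\ast}{\K} \le c$, while $\sigma_{R+1}(\mat{\A^\ast}{\K_0}) > 0$ stays bounded away from zero (entanglement is scale invariant, so the rescaling leaves the entanglement bound intact and keeps $\sigma_{R+1} > 0$).

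Finally I would close the loop. For this $\A^\ast$ we have $\qe{\A^\ast}{\K} \le c \le f(r, \epsilon) = f(\norm{\A^\ast}, \epsilon)$ for every $\epsilon > 0$ and every canonical $\K$, so the hypothesized implication produces, for each $\epsilon > 0$, an assignment with $\norm{\tntensor - \A^\ast} \le \epsilon$; therefore $\inf_{\tntensor} \norm{\tntensor - \A^\ast} = 0$. This contradicts the Eckart--Young lower bound $\inf_{\tntensor} \norm{\tntensor - \A^\ast} \ge \sigma_{R+1}(\mat{\A^\ast}{\K_0}) > 0$ from the second paragraph, forcing $c = 0$, i.e.\ $\lim_{\epsilon \to 0} f(\norm{\A}, \epsilon) = 0$. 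I expect the main obstacle to be the construction of $\A^\ast$: one must simultaneously force rank above $R$ under a single canonical partition (so that the width-$R$ cut bound bites) and drive the entanglement under \emph{all} canonical partitions to zero; the perturbed product tensor, anchored at a balanced canonical partition whose matricization can exceed rank $R$, is exactly the object that reconciles these competing requirements.
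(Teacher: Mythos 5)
Your proposal is correct and follows essentially the same route as the paper's proof: argue by contradiction from $c := \inf_{\epsilon>0} f(\norm{\A},\epsilon) > 0$, construct a perturbed product tensor whose entanglement under all canonical partitions vanishes (by continuity of entanglement, exactly as in the paper) while its matricization under a canonical partition has rank exceeding $R$, and contradict the cut-based bound $\rank(\mat{\tntensor}{\K}) \leq R$. The only difference is cosmetic: where you invoke Eckart--Young to get the quantitative lower bound $\norm{\tntensor - \A^\ast} \geq \sigma_{R+1}(\mat{\A^\ast}{\K_0}) > 0$, the paper instead passes to the limit and uses lower semicontinuity of matrix rank; both close the argument equally well.
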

\begin{proof}
	Suppose otherwise, \ie~that there exists some \(a>0\) such that \(\lim_{\epsilon \rightarrow 0}f(a,\epsilon)=\inf_{\epsilon > 0}f(a,\epsilon)=c>0\).
	Let \(\A\) be a tensor with \(\norm{\A}=a\) such that \(\qe{A}{\K}<c\) for all canonical partitions \((\K,\K^{c})\in \can\). Then by assumption, for all $\epsilon > 0$ there exist a tensor generated by the locally connected tensor network, which we denote by $\tntensor (\epsilon) \in \R^{D_1 \times \cdots \times D_N}$, that satisfies:
	\[
	\lVert \tntensor(\epsilon) - \mathcal{A} \rVert \leq \epsilon \text{\,.}
	\]
	 By~\cref{app:proofs:min_cut}, for all $\epsilon > 0$ we have that
	\[\rank(\mat{\tntensor(\epsilon)}{\K})\leq R \text{\,,}\]
	so by the lower semicontinuity of the matrix rank, we have that \(\rank(\mat{\mathcal{A}}{\K}))\leq R\) as well.
	But, there are tensors for which this leads to a contradiction, namely tensors with arbitrary low entanglement across all partitions but nearly maximal rank. 
	Indeed, consider tensors of the form 
	\[\mathcal{Q}(\delta)=a \cdot \frac{\tenp_{n = 1}^N \vbf^{(n)}+\delta \cdot \B}{\norm{\tenp_{n = 1}^N \vbf^{(n)}+\delta \cdot \B  }} \in \R^{D_1 \times \cdots \times D_N}
	\text{\,,}
	\] 
	where $\{ \vbf^{(n)} \in \R^{D_{n}} \}_{n = 1}^N$ are (non-zero) vectors, $\delta > 0$ and \(\B\in \R^{D_1 \times \cdots \times D_N}\) is some tensor with matricizations of maximal rank across all canonical partitions (for a proof of the existence of such a tensor see Claim 3 in \citep{levine2018deep}).
	Note that $\norm{\Q (\delta)} = a$ for all $\delta > 0$.
	By the triangle inequality have
	\[
	\norm*{\frac{\mathcal{Q}(\delta)}{a}-\frac{\tenp_{n = 1}^N \vbf^{(n)}}{\norm{\tenp_{n = 1}^N \vbf^{(n)}}}}\leq  \frac{\delta \cdot\norm{\B}}{\norm{\tenp_{n = 1}^N \vbf^{(n)}+\delta \cdot \B }}+\norm*{\frac{\tenp_{n = 1}^N \vbf^{(n)}}{\norm{\tenp_{n = 1}^N \vbf^{(n)}+\delta \cdot \B }}-\frac{\tenp_{n = 1}^N \vbf^{(n)}}{\norm{\tenp_{n = 1}^N \vbf^{(n)}}}}\text{\,,}
	\]
	and so 
	\[
	\lim_{\delta \rightarrow 0}{\norm*{\frac{\mathcal{Q}(\delta)}{a}-\frac{\tenp_{n = 1}^N \vbf^{(n)}}{\norm{\tenp_{n = 1}^N \vbf^{(n)}}}}}=0
	\text{\,.}
	\]
	Thus, from~\cref{app:proofs:tensor_ineq} we know that
	\[\lim_{\delta \rightarrow 0}{\abs*{\qe{\frac{\mathcal{Q}(\delta)}{a}}{\K}-\qe{\frac{\tenp_{n = 1}^N \vbf^{(n)}}{\norm{\tenp_{n = 1}^N \vbf^{(n)}}}}{\K}}}=0\]
	for all \((\K,\K^{c})\in \can\).
	Furthermore, for any \((\K,\K^{c})\in \can\): 
	\[
	\qe{\frac{\tenp_{n = 1}^N \vbf^{(n)}}{\norm{\tenp_{n = 1}^N \vbf^{(n)}}}}{\K} = 0
	\text{\,,}
	\]
	and therefore by~\cref{thm:fit_necessary} for sufficiently small $\delta > 0$ we have
	\[\qe{\mathcal{Q}(\delta)}{\K}<c\text{\,,}\]
	for all \((\K,\K^{c})\in \can\).
	However, \(\rank(\mat{\mathcal{Q}(\delta)}{\K})\) is (nearly) maximal for all canonical partitions. Indeed, by the triangle inequality for matrix rank we get
	\[
	\rank(\mat{\mathcal{Q}(\delta)}{\K}) \geq \rank \left (\mat{\B}{\K} \right )-\rank \left ( \mat{\tenp_{n = 1}^N \vbf^{(n)}}{\K} \right ) = D_{\K} - 1
	\text{\,,}
	\]
	where \(D_{\K} := \min \brk[c]{ \prod_{n \in \K} D_n , \prod_{n \in \K^c} D_n }\).
\end{proof}

	\section{Quantum Entanglement as a Measure of Dependence Between Features}
\label{app:qe_dependence}

In this appendix, we provide intuition behind the following statement made in~\cref{sec:accurate_predict:nec_and_suf}: the quantum entanglement of $\popdatatensor$ with respect to an arbitrary partition of its axes $(\K, \K^c)$, where $\K \subseteq [N]$, is a measure of dependence between the data features indexed by $\K$ and those indexed by $\K^c$.
Consider the case where the features indexed by $\K$ are statistically independent of those indexed by $\K^c$, and all features are statistically independent of the label $y$.
Then $\qe{\popdatatensor}{\K} = 0$ (see~\cref{lem:qe_dependence} below for proof of this fact). 
If the features are not statistically independent of $y$, but when conditioned on $y$ the features indexed by $\K$ are statistically independent of those indexed by $\K^c$, then $\qe{\popdatatensor}{\K} \leq \ln (2)$ (this fact is also established by~\cref{lem:qe_dependence} below). 
In general, the higher $\qe{\popdatatensor}{\K}$ is, the farther the distribution is from the aforementioned situations of independence, implying stronger dependence between the features indexed by $\K$ and those indexed by $\K^c$.

\begin{lemma}
\label{lem:qe_dependence}
Consider the classification setting of~\cref{sec:accurate_predict:fit_data_tensor}, and let $\K \subseteq [N]$.
If the features indexed by $\K$ are independent of those indexed by $\K^c$ (\ie~$(\xbf^{(n)})_{n \in \K}$ are independent of $(\xbf^{(n)})_{n \in \K^c}$), conditioned on the label $y$, then $\qe{ \popdatatensor }{ \K} \leq \ln (2)$.
Furthermore, if the features are also independent of $y$, then $\qe{\popdatatensor}{\K} = 0$.
\end{lemma}

\begin{proof}
For brevity, denote $\Xbf := (\xbf^{(1)}, \ldots, \xbf^{(N)})$, $\Xbf_\K := (\xbf^{(n)})_{n \in \K}$, and $\Xbf_{\K^c} := (\xbf^{(n)})_{n \in \K^c}$.
By the law of total expectation, the entry of $\popdatatensor$ corresponding to an index tuple $(i_1, \ldots, i_N) \in [D_1] \times \cdots \times [D_N]$ satisfies:
\[
\begin{split}
(\popdatatensor)_{i_1, \ldots, i_N} & = \EE\nolimits_{\Xbf , y} \brk[s]2{ y \cdot \prod\nolimits_{n = 1}^N \xbf^{(n)}_{i_n} } \\
& = \prob (y = 1) \cdot  \EE\nolimits_{\Xbf} \brk[s]2{ \prod\nolimits_{n = 1}^N \xbf^{(n)}_{i_n} \big\vert y = 1} - \prob (y = - 1) \cdot  \EE\nolimits_{\Xbf} \brk[s]2{ \prod\nolimits_{n = 1}^N \xbf^{(n)}_{i_n} \big\vert y = - 1}
\text{\,.}
\end{split}
\]
Thus, if the features indexed by $\K$ are independent of those indexed by $\K^c$ given $y$, it holds that:
\be
\begin{split}
(\popdatatensor)_{i_1, \ldots, i_N}  = & \prob (y = 1) \cdot  \EE\nolimits_{\Xbf_{\K}} \brk[s]2{ \prod\nolimits_{n \in \K} \xbf^{(n)}_{i_n} \big\vert y = 1} \cdot  \EE\nolimits_{\Xbf_{\K^c}} \brk[s]2{ \prod\nolimits_{n \in \K^c} \xbf^{(n)}_{i_n} \big\vert y = 1} \\
& - \prob (y = - 1) \cdot  \EE\nolimits_{\Xbf_{\K}} \brk[s]2{ \prod\nolimits_{n \in \K} \xbf^{(n)}_{i_n} \big\vert y = -1} \cdot  \EE\nolimits_{\Xbf_{\K^c}} \brk[s]2{ \prod\nolimits_{n \in \K^c} \xbf^{(n)}_{i_n} \big\vert y = -1}
\text{\,.}
\end{split}
\label{eq:dpop_entry_cond_independence}
\ee
This implies that we can write $\mat{\popdatatensor}{\K} \in \R^{ \prod\nolimits_{n \in \K} D_n \times \prod\nolimits_{n \in \K^c} D_n }$~---~the arrangement of $\popdatatensor$ as a matrix whose rows correspond to axes indexed by $\K$ and columns correspond to the remaining axes~---~as a weighted sum of two outer products.
Specifically, define $\vbf_{\K}, \ubf_{\K} \in \R^{\prod\nolimits_{n \in \K} D_n}$ to be the vectors holding for each possible indexing tuple $(i_n)_{n \in \K}$ the values $\EE\nolimits_{\Xbf_{\K}} \brk[s]1{ \prod\nolimits_{n \in \K} \xbf^{(n)}_{i_n} \vert y = 1}$ and $\EE\nolimits_{\Xbf_{\K}} \brk[s]1{ \prod\nolimits_{n \in \K} \xbf^{(n)}_{i_n} \vert y = - 1}$, respectively (with the arrangement of entries in the vectors being consistent with the rows of $\mat{\popdatatensor}{\K}$).
Furthermore, let $\vbf_{\K^c}, \ubf_{\K^c} \in \R^{\prod\nolimits_{n \in \K^c} D_n}$ be defined analogously by replacing $\K$ with $\K^c$ (with the arrangement of entries in the vectors being consistent with the columns of $\mat{\popdatatensor}{\K}$).
Then, by~\cref{eq:dpop_entry_cond_independence}:
\[
\mat{\popdatatensor}{\K} = \prob (y = 1) \cdot  \vbf_{\K} \tenp \vbf_{\K^c} - \prob (y = - 1) \cdot \ubf_{\K} \tenp \ubf_{\K^c}
\text{\,.}
\]
Since each outer product forms a rank one matrix, the subaddititivty of rank implies that the rank of $\mat{\popdatatensor}{\K}$ is at most two.
As a result, $\mat{\popdatatensor}{\K}$ has at most two non-zero singular values and $\qe{\popdatatensor}{\K} \leq \ln (2)$ (the entropy of a distribution is at most the natural logarithm of the size of its support).

If, in addition, all features in the data are independent of the label $y$, then $\vbf_{\K} = \ubf_{\K}$ and $\vbf_{\K^c} = \ubf_{\K^c}$, meaning $\mat{\popdatatensor}{\K} =  ( \prob (y = 1) - \prob (y = - 1) ) \cdot  \vbf_{\K} \tenp \vbf_{\K^c}$.
In this case, the rank of $\mat{\popdatatensor}{\K}$ is at most one, so $\qe{\popdatatensor}{\K} = 0$.
\end{proof}

	\section{Efficiently Computing Entanglements of the Empirical Data Tensor}  \label{app:ent_comp}

\begin{algorithm}[t!]
	\caption{Entanglement Computation for the Empirical Data Tensor} 
	\label{alg:ent_comp}	
	\begin{algorithmic}[1]
		\STATE \!\textbf{Input:} $\X := \brk[c]{ \brk{ \xbf^{(1,m)}, \ldots, \xbf^{(N, m)} } }_{m = 1}^M$~---~$M \in \N$ data instances comprising $N \in \N$ features each, $\K \subseteq [N]$~---~subset of feature indices \\[0.4em]
		\STATE \!\textbf{Output:} $\qe{\datatensor}{\K}$ \\[-0.2em]
		\hrulefill
		\vspace{1mm}
		\STATE Compute $\Gbf^{(\K)}, \Gbf^{(\K^c)} \in \R^{M \times M}$ given element-wise by:
		\[
			\begin{split}
				\forall i, j \in [M]: ~ &\Gbf^{(\K)}_{i, j} = y^{(i)} y^{(j)} \cdot \inprodbig{ \tenp_{n \in \K} \xbf^{(n, i)} }{ \tenp_{n \in \K} \xbf^{(n, j)}  } = y^{(i)} y^{(j)} \cdot \prod\nolimits_{n \in \K} \inprodbig{ \xbf^{(n, i)} }{ \xbf^{(n, j)}  } \\[0.3em]
				\forall i, j \in [M]: ~ &\Gbf^{(\K^c)}_{i, j} = \inprodbig{ \tenp_{n \in \K^c} \xbf^{(n, i)} }{ \tenp_{n \in \K^c} \xbf^{(n, j)}  } = \prod\nolimits_{n \in \K^c} \inprodbig{ \xbf^{(n, i)} }{ \xbf^{(n, j)}  }
			\end{split}
		\]
		\vspace{1mm}
		
		\STATE Compute eigenvalue decompositions of $\Gbf^{(\K)}$ and $\Gbf^{(\K^c)}$, \ie:
		\[
		\begin{split}
			& \Gbf^{(\K)} = \Ubf^{(\K)} \Sbf^{(\K)} \brk1{ \Ubf^{(\K)} }^\top \\[0.3em]
			& \Gbf^{(\K^c)} =  \Ubf^{(\K^c)} \Sbf^{(\K^c)} \brk1{ \Ubf^{(\K^c)} }^\top
		\end{split}
		\]
		where $\Ubf^{(\K)}, \Ubf^{(\K^c)} \in \R^{M \times M}$ are orthogonal matrices and $\Sbf^{(\K)}, \Sbf^{(\K^c)} \in \R^{M \times M}$ are diagonal holding the eigenvalues of $\Gbf^{(\K)}$ and $\Gbf^{(\K^c)}$, respectively
		\vspace{2.5mm}
		
		\STATE Compute $\Qbf = \big ( \Sbf^{(\K)} \big )^{\frac{1}{2}} \big ( \Ubf^{(\K)} \big )^{\top} \Ubf^{(\K^c)} \big ( \Sbf^{(\K^c)} \big )^{\frac{1}{2}} \in \R^{M \times M}$
		\vspace{3mm}
		
		\STATE Compute a singular value decomposition of $\Qbf$ to obtain its singular values $\sigma_1 (\Qbf), \ldots, \sigma_{M} (\Qbf)$
		\vspace{3mm}
		
		\STATE Let $\rho_{m} := \sigma_m^2 (\Qbf) / \sum\nolimits_{m' = 1}^M \sigma_{m'}^2 (\Qbf)$ for $m \in [M]$

		\vspace{3mm}
		\STATE \textbf{return} $\qe{\datatensor}{\K} = - \sum\nolimits_{m = 1}^{M} \rho_m \ln (\rho_m)$ ~~ (if $\Qbf = 0$, then return $0$)
	\end{algorithmic}
\end{algorithm}

For a given tensor, its entanglement with respect to a partition of axes (\cref{def:entanglement}) is determined by the singular values of its arrangement as a matrix according to the partition.
Since the empirical data tensor $\datatensor$ (\cref{eq:data_tensor}) has size exponential in the number of features $N$, it is infeasible to naively compute its entanglement (or even explicitly store it in memory).
Fortunately, as shown in~\cite{martyn2020entanglement}, the specific form of the empirical data tensor admits an efficient algorithm for computing entanglements, without explicitly manipulating an exponentially large matrix.
Specifically, the algorithm runs in $\OO (D N M^2 + M^3)$ time and requires $\OO (D N M + M^2)$ memory, where $D := \max_{n \in [N]} D_n$ is the maximal feature dimension (\ie~axis length of $\datatensor$), $N$ is the number of features in the data (\ie~number of axes that $\datatensor$ has), and $M$ is the number of training instances.
For completeness, we outline the method in~\cref{alg:ent_comp} while referring the interested reader to Appendix A in~\cite{martyn2020entanglement} for further details.

	\section{Definition of the Multivariate Pearson Correlation from~\cite{puccetti2022measuring}}  \label{app:pearson}

\cref{sec:enhancing:practical} introduces a surrogate measure for entanglement based on the multivariate Pearson correlation from~\cite{puccetti2022measuring}.
For completeness, this appendix provides its formal definition.

Given a set of $M \in \N$ instances $\X := \brk[c]{ \brk{ \xbf^{(1,m)}, \ldots, \xbf^{(N, m) } } \in (\R^D)^N }_{m = 1}^M$, let $\Sigma^{(n)}$ be the empirical covariance matrix of feature $n \in [N]$ and $\Sigma^{(n, n')}$ be the empirical cross-covariance matrix of features $n, n' \in [N]$,~\ie:
\[
\begin{split}
	\Sigma^{(n)} & := \frac{1}{M} \sum\nolimits_{m - 1}^M \brk1{ \xbf^{(n, m)} - \mu^{(n)}} \tenp\brk1{ \xbf^{(n,m)} - \mu^{(n)}} \text{\,,} \\[0.3em]
	\Sigma^{(n, n')} & := \frac{1}{M} \sum\nolimits_{m - 1}^M \brk1{ \xbf^{(n,m)} - \mu^{(n)}} \tenp\brk1{ \xbf^{(n', m)} - \mu^{(n')}} \text{\,,}
\end{split} 
\] 
where $\mu^{(n)} := \frac{1}{M} \sum\nolimits_{m = 1}^M \xbf^{(n,m)}$ for $n \in [N]$.
With this notation, the multivariate Pearson correlation of $n, n' \in [N]$ from~\cite{puccetti2022measuring} is defined by $\pear_{n, n'} := \trace \brk1{ \Sigma^{(n, n')} } / \trace \brk1{ \brk{ \Sigma^{(n)} \Sigma^{(n')} }^{1 / 2} }$.

	\section{Entanglement and Surrogate Entanglement Are Strongly Correlated} \label{app:ent_surrogate}

In~\cref{sec:enhancing:practical}, we introduced a surrogate entanglement measure (\cref{def:surrogate_entanglement}) to facilitate efficient implementation of the feature arrangement search scheme from~\cref{sec:enhancing:search}.
\cref{fig:surr_ent_corr} supports the viability of the chosen surrogate, demonstrating empirically that it is strongly correlated with entanglement of the empirical data tensor (\cref{def:entanglement,eq:data_tensor}).

\begin{figure}[H]
	\vspace{0mm}
	\begin{center}
		\hspace{0mm}
		\includegraphics[width=0.75\textwidth]{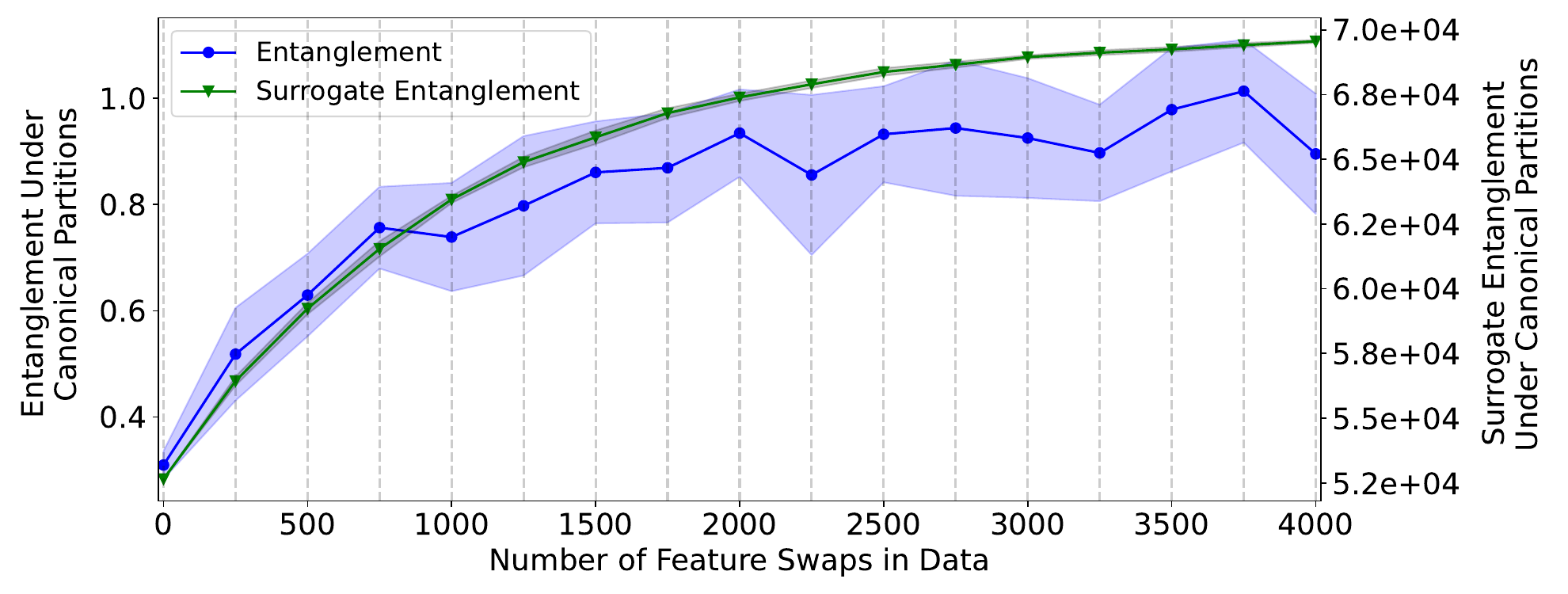}
	\end{center}
	\vspace{-2mm}
	\caption{
		Surrogate entanglement (\cref{def:surrogate_entanglement}) is strongly correlated with the entanglement (\cref{def:entanglement}) of the empirical data tensor.
		Presented are average entanglement and average surrogate entanglement under canonical partitions, admitted by the Speech Commands audio datasets~\citep{warden2018speech} considered in~\cref{fig:entanglement_inv_corr_acc}.
		Remarkably, the Pearson correlation between the quantities is $0.974$.
		For further details see caption of~\cref{fig:entanglement_inv_corr_acc} as well as~\cref{app:experiments:details}.
	}
	\label{fig:surr_ent_corr}
\end{figure}
	
	\section{Extension to Arbitrary Dimensional Models and Data}  \label{app:extension_dims}

In this appendix, we extend our theoretical analysis and experiments, including the algorithm for enhancing the suitability of data to locally connected neural networks, from one-dimensional (sequential) models and data to $P$-dimensional models and data (such as two-dimensional image data or three-dimensional video data), for $P \in \N$.
Specifically,~\cref{app:extension_dims:fit_tensor} extends~\cref{sec:fit_tensor},~\cref{app:extension_dims:accurate_predict} extends~\cref{sec:accurate_predict} and~\cref{app:extension_dims:enhancing} extends~\cref{sec:enhancing}.

To ease presentation, we consider $P$-dimensional data instances whose feature vectors are associated with coordinates $(n_1, \ldots, n_P) \in [N]^P$, where $N = 2^L$ for some $L \in \N$ (if this is not the case we may add constant features as needed).

\subsection{Low Entanglement Under Canonical Partitions Is Necessary and Sufficient for Fitting Tensor} \label{app:extension_dims:fit_tensor}

We introduce the locally connected tensor network equivalent to a locally connected neural network that operates over $P$-dimensional data (\cref{app:extension_dims:fit_tensor:tn_lc_nn}).
Subsequently, we establish a necessary and sufficient condition required for it to fit a given tensor (\cref{app:extension_dims:fit_tensor:nec_and_suf}), generalizing the results of~\cref{sec:fit_tensor:nec_and_suf}.

\subsubsection{Tensor Network Equivalent to a Locally Connected Neural Network} \label{app:extension_dims:fit_tensor:tn_lc_nn}

\begin{figure*}[t!]
	\vspace{0mm}
	\begin{center}
		\includegraphics[width=1\textwidth]{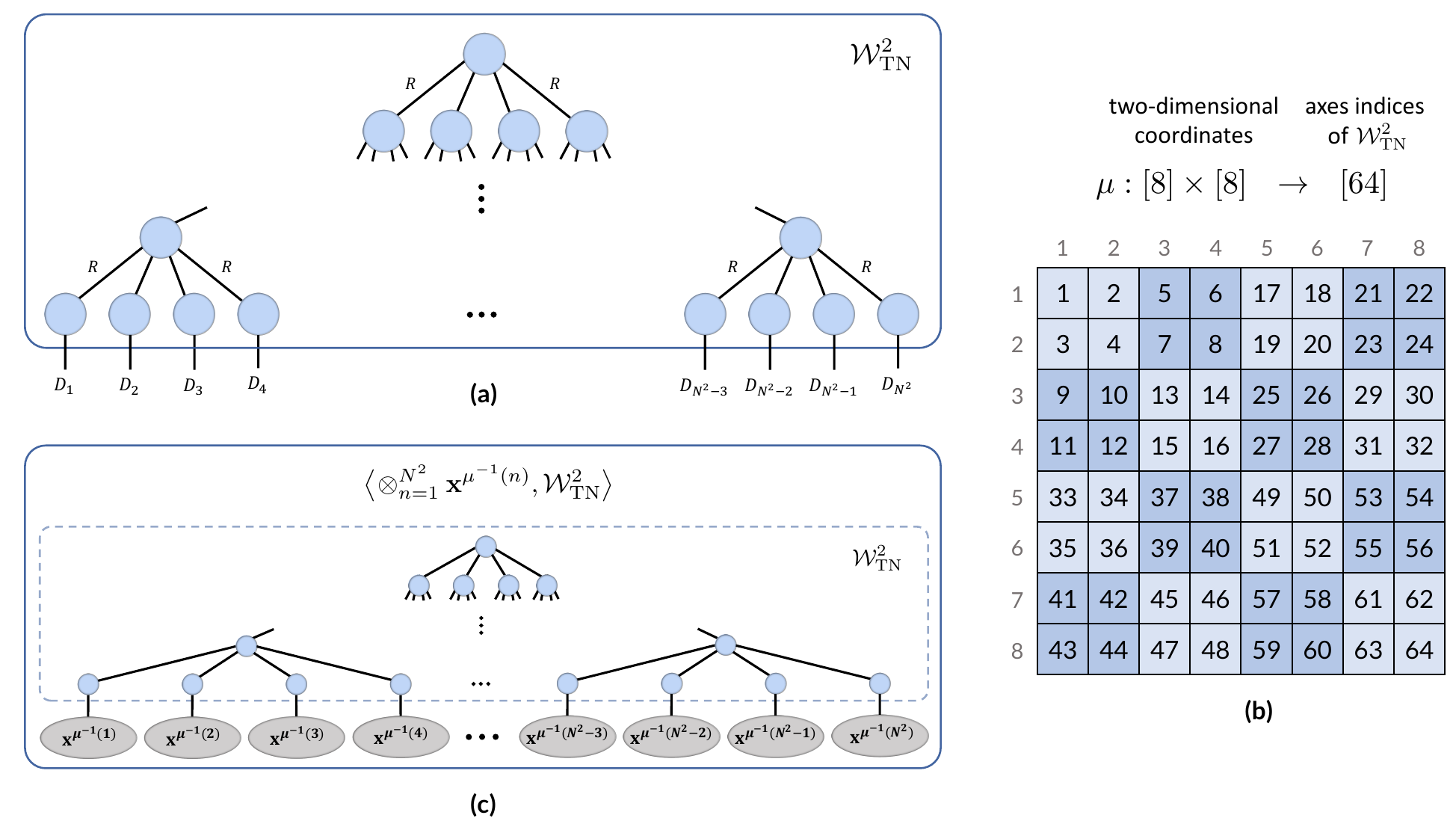}
	\end{center}
	\vspace{-2mm}
	\caption{
		The analyzed tensor network equivalent to a locally connected neural network operating over $P$-dimensional data, for $P = 2$.
		\textbf{(a)} The tensor network adheres to a perfect $2^P$-ary tree connectivity with $N^P$ leaf nodes, where $N = 2^L$ for some $L \in \N$, and generates $\tntensorpdim{P} \in \R^{D_1 \times \cdots \times D_{N^P}}$.
		Axes corresponding to open edges are indexed such that open edges descendant to any node of the tree have contiguous indices.
		The lengths of axes corresponding to inner (non-open) edges are equal to $R \in \N$, referred to as the width of the tensor network.
		\textbf{(b)} Exemplar $\axismap : [N]^P \to [N^P]$ compatible with the locally connected tensor network (\cref{def:compat_map}), mapping $P$-dimensional coordinates to axes indices of $\tntensorpdim{P}$.
		\textbf{(c)} Contracting $\tntensorpdim{P}$ with vectors $\brk[c]{ \xbf^{(n_1, \ldots, n_P)} }_{n_1, \ldots, n_P \in [N]}$ according to a compatible $\axismap$ produces $\inprodnoflex{ \tenp_{n = 1}^{N^P} \xbf^{\axismap^{-1} (n)} }{ \tntensorpdim{P} }$.
		Performing these contractions can be viewed as a forward pass of a certain locally connected neural network (with polynomial non-linearity) over the data instance $\brk[c]{ \xbf^{(n_1, \ldots, n_P)} }_{n_1, \ldots, n_P \in [N]}$ (see,~\eg,~\cite{cohen2016expressive,cohen2017inductive,levine2018deep,razin2022implicit}).
	}
	\label{fig:tn_as_nn_pdim}
\end{figure*}

For $P$-dimensional data, the locally connected tensor network we consider (defined in~\cref{sec:fit_tensor:tn_lc_nn} for one-dimensional data) has an underlying perfect $2^P$-ary tree graph of height $L$.
We denote the tensor it generates by $\tntensorpdim{P} \in \R^{D_1 \times \cdots \times D_{N^P}}$.
\cref{fig:tn_as_nn_pdim}(a) provides its diagrammatic definition.
As in the one-dimensional case, the lengths of axes corresponding to inner edges are taken to be $R \in \N$, referred to as the width of the tensor network.

The axes of $\tntensorpdim{P}$ are associated with $P$-dimensional coordinates through a bijective function $\axismap : [N]^P \to [N^P]$.

\begin{definition}
\label{def:compat_map}
We say that a bijective function $\axismap : [N]^P \to [N^P]$ is \emph{compatible} with the locally connected tensor network if, for any node in the tensor network, the coordinates mapped to indices of $\tntensorpdim{P}$'s axes descendant to that node form a contiguous $P$-dimensional cubic block in $[N]^P$ (\eg, square block when $P = 2$)~---~see~\cref{fig:tn_as_nn_pdim}(b) for an illustration.
With slight abuse of notation, for $\K \subseteq [N]^P$ we denote $\axismap (\K) := \brk[c]{ \axismap (n_1, \ldots, n_P) : (n_1, \ldots, n_P) \in \K } \subseteq [N^P]$.
\end{definition}

Contracting the locally connected tensor network with $\brk[c]{ \xbf^{(n_1, \ldots, n_P)} \in \R^{D_{\axismap (n_1, \ldots, n_P)}}  }_{n_1, \ldots, n_P \in [N]}$ according to a compatible $\axismap$, as depicted in~\cref{fig:tn_as_nn_pdim}(c), can be viewed as a forward pass of the data instance $\brk[c]{ \xbf^{(n_1, \ldots, n_P)} }_{n_1, \ldots, n_P \in [N]}$ through a locally connected neural network (with polynomial non-linearity), which produces the scalar \smash{$\inprodnoflex{ \tenp_{n = 1}^{N^P} \xbf^{\axismap^{-1} (n)} }{ \tntensorpdim{P} }$} (see,~\eg,~\cite{cohen2016expressive,cohen2017inductive,levine2018deep,razin2022implicit}).

\subsubsection{Necessary and Sufficient Condition for Fitting Tensor} \label{app:extension_dims:fit_tensor:nec_and_suf}

The ability of the locally connected tensor network, defined in~\cref{app:extension_dims:fit_tensor:tn_lc_nn}, to fit (\ie~represent) a tensor is determined by the entanglements that the tensor admits under partitions of its axes, induced by the following canonical partitions of $[N]^P$.

\begin{definition}
	\label{def:canonical_partitions_pdim}
	The \emph{canonical partitions} of $[N]^P$, illustrated in~\cref{fig:canonical_partitions_pdim} for $P = 2$, are:\footnote{
		For sets $\S_1, \ldots, \S_P$, we denote their Cartesian product by $\times_{p = 1}^P \S_p$.
	}
	\[
	\begin{split}
	\can^P := \! \Big \{ \brk*{ \K, \K^c} : & \, \K = \times_{p = 1}^P \brk[c]*{ 2^{L - l} \cdot (n_p - 1) + 1, \ldots, 2^{L - l} \cdot n_p } , \\
	&~~ l \in \big \{0, \ldots, L \big \} ,~n_1, \ldots, n_P \in \big [ 2^l \big ]  \Big \}
	\text{\,.}
	\end{split}
	\]
\end{definition}

\begin{figure}[t]
	\vspace{0mm}
	\begin{center}
		\includegraphics[width=0.85\textwidth]{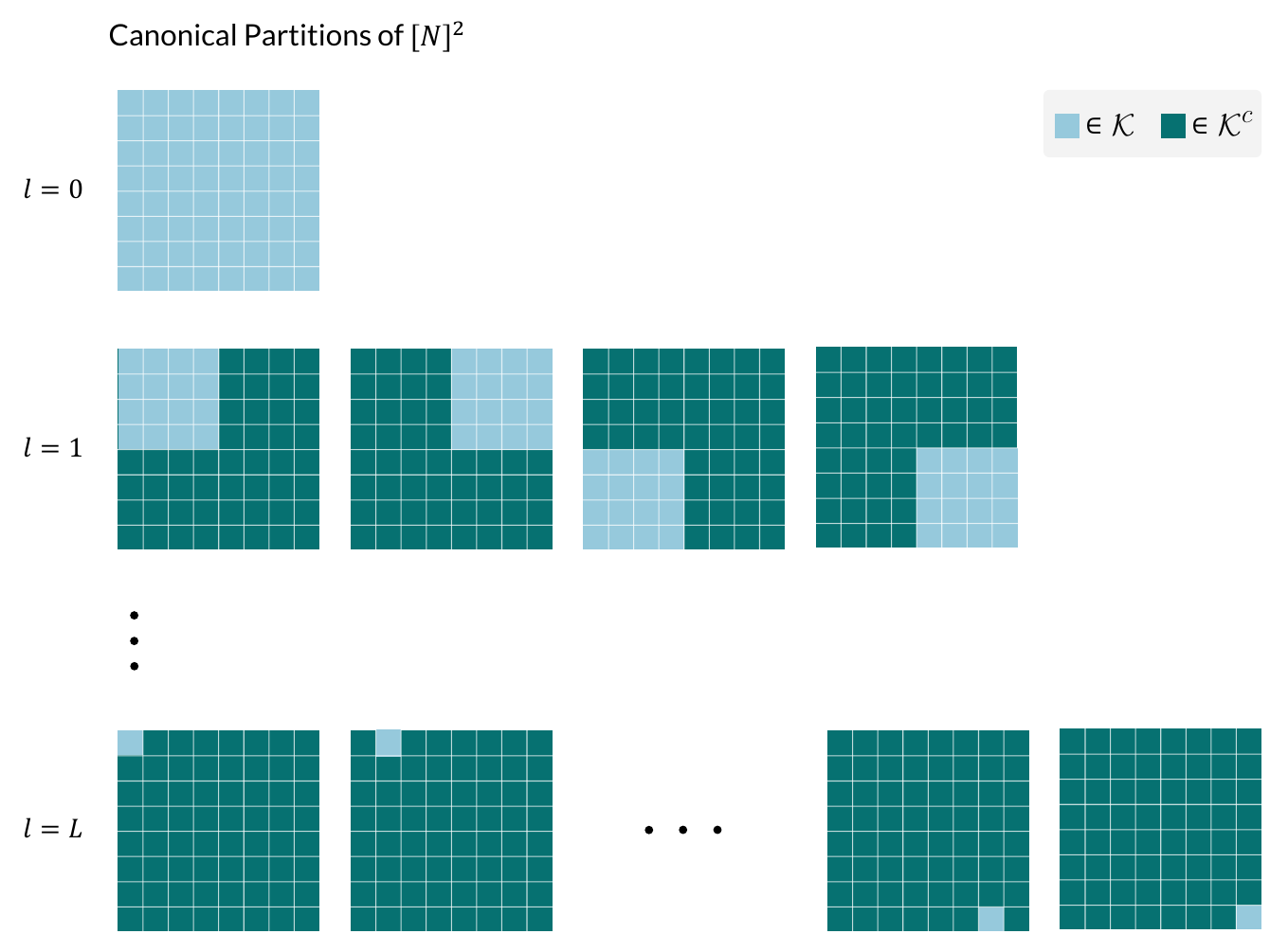}
	\end{center}
	\vspace{-2mm}
	\caption{
		The canonical partitions of $[N]^P$, for $P = 2$ and $N = 2^L$ with $L \in \N$. 
		Every $l \in \{0, \ldots, L\}$ contributes $2^{l \cdot P}$ canonical partitions, each induced by $\K = \times_{p = 1}^P \{2^{L - l} \cdot (n_p - 1) + 1, \ldots, 2^{L - l} \cdot n_p\}$ for $n_1, \ldots, n_P \in [2^l]$.
	}
	\label{fig:canonical_partitions_pdim}
\end{figure}

With the definition of canonical partitions for $P$-dimensional data in place,~\cref{thm:fit_necessary_pdim} generalizes~\cref{thm:fit_necessary}.
In particular, suppose that $\tntensorpdim{P}$~---~the tensor generated by the locally connected tensor network~---~well-approximates $\A \in \R^{D_1 \times \cdots \times D_{N^P}}$.
Then, given a compatible $\axismap : [N]^P \to [N^P]$ (\cref{def:compat_map}),~\cref{thm:fit_necessary_pdim} establishes that the entanglement of $\A$ with respect to $(\axismap (\K), \axismap (\K)^c)$, where $(\K, \K^c) \in \can^P$, cannot be much larger than $\ln (R)$, whereas the expected entanglement attained by a random tensor with respect to $(\axismap (\K), \axismap (\K)^c)$ is on the order of $\min \{ \abs{ \K }, \abs{ \K^c } \}$ (which is linear in~$N^P$ for some canonical partitions).

In the other direction,~\cref{thm:fit_sufficient_pdim} implies that low entanglement under partitions of axes induced by canonical partitions of $[N]^P$ is not only necessary for a tensor to be fit by the locally connected tensor network, but also sufficient.

\begin{theorem}
	\label{thm:fit_necessary_pdim}
	Let $\tntensorpdim{P} \in \R^{D_1 \times \cdots \times D_{N^P}}$ be a tensor generated by the locally connected tensor network defined in~\cref{app:extension_dims:fit_tensor:tn_lc_nn}, and $\axismap: [N]^P \to [N^P]$ be a compatible map from $P$-dimensional coordinates to axes indices of $\tntensorpdim{P}$ (\cref{def:compat_map}).
	For any $\A \in \R^{D_1 \times \cdots \times D_{N^P}}$ and $\epsilon \in [0, \norm{\A} / 4]$, if $\norm{\tntensorpdim{P} -  \A} \leq \epsilon$, then for all canonical partitions $\brk{\K, \K^c} \in \can^P$ (\cref{def:canonical_partitions_pdim}):
	\be
	\qe{\A}{ \axismap (\K) } \leq \ln (R)+\frac{2 \epsilon}{ \norm{\A} } \cdot \ln ( D_{\axismap (\K) } ) + 2 \sqrt{\frac{ 2 \epsilon }{ \norm{\A} } }
	\text{\,,}
	\label{eq:fit_necessary_ub_pdim}
	\ee
	where $D_{\axismap(\K)} := \min \brk[c]{ \prod_{n \in \axismap(\K)} D_n , \prod_{n \in \axismap (\K)^c} D_n }$.
	In contrast, a random $\A' \in \R^{D_1 \times \cdots \times D_{N^P}}$, drawn according to the uniform distribution over the set of unit norm tensors, satisfies for all canonical partitions $(\K, \K^c) \in \can^P$:
	\be
		\EE  \brk[s]*{ \qe{\A'}{ \axismap (\K) } } \geq \min \brk[c]*{ \abs{\K} , \abs{\K^c} } \cdot  \ln \brk*{ \min\nolimits_{n \in [N^P]} D_n } + \ln \brk*{ \frac{1}{2} } - \frac{1}{2}
		\text{\,.}
		\label{eq:fit_necessary_lb_pdim}
	\ee
\end{theorem}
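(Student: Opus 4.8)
The plan is to follow the proof of the one-dimensional result \cref{thm:fit_necessary} essentially verbatim, since the only place where the dimension $P$ enters is the combinatorial matching between canonical partitions of $[N]^P$ and single-edge cuts in the perfect $2^P$-ary tree; everything downstream is dimension-agnostic. I would establish the two displayed inequalities separately: \cref{eq:fit_necessary_ub_pdim} through a cut-based entanglement bound followed by a perturbation (continuity) argument, and \cref{eq:fit_necessary_lb_pdim} by exhibiting a single tensor that is maximally entangled under all partitions of its axes.

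For the upper bound \cref{eq:fit_necessary_ub_pdim}, the one genuinely new ingredient relative to the sequential case is the following structural fact, which is exactly what compatibility of $\axismap$ (\cref{def:compat_map}) buys us: for every canonical partition $(\K, \K^c) \in \can^P$, the index set $\axismap(\K)$ coincides with the set of open-edge indices descendant to a single node of the tree. Indeed, a canonical $\K$ is a cubic block of side $2^{L-l}$, hence has $2^{P(L-l)}$ elements, which is precisely the number of leaves below a node at depth $l$; compatibility forces those leaves to be mapped onto $\axismap(\K)$. It follows that the bipartition $(\axismap(\K), \axismap(\K)^c)$ is severed by deleting the single inner edge above that node, an edge of length $R$, so the standard min-cut bound for tensor networks \cite{cui2016quantum,levine2018deep} gives $\rank(\mat{\tntensorpdim{P}}{\axismap(\K)}) \leq R$. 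The entanglement is the entropy of a distribution supported on at most $R$ atoms, hence $\qe{\tntensorpdim{P}}{\axismap(\K)} \leq \ln(R)$. I would then apply the same perturbation inequality underlying \cref{thm:fit_necessary} — which bounds the gap between entanglements of two tensors in terms of their Euclidean distance — to $\tntensorpdim{P}$ and $\A$, using $\norm{\tntensorpdim{P} - \A} \leq \epsilon$. This upgrades the clean bound $\ln(R)$ to $\ln(R) + \frac{2\epsilon}{\norm{\A}}\ln(D_{\axismap(\K)}) + 2\sqrt{\frac{2\epsilon}{\norm{\A}}}$, which is \cref{eq:fit_necessary_ub_pdim}.

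For the lower bound \cref{eq:fit_necessary_lb_pdim}, I would take $\A'$ to be a tensor whose matricization under every bipartition of its $N^P$ axes has maximal rank with uniform squared singular values, as supplied by the construction of \cite{deng2017quantum}. For any partition with sides of sizes $\abs{\K}$ and $\abs{\K^c}$ the uniform spectrum gives $\qe{\A'}{\K} = \ln(D_\K)$ with $D_\K = \min\{ \prod_{n \in \K} D_n, \prod_{n \in \K^c} D_n \}$, and since each side-product is at least $(\min_n D_n)^{\min\{\abs{\K}, \abs{\K^c}\}}$ we obtain $\qe{\A'}{\K} \geq \min\{\abs{\K}, \abs{\K^c}\}\cdot\ln(\min_n D_n)$. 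Because this $\A'$ is maximally entangled under all partitions, it is immaterial whether $\K$ is read through $\axismap$ or as a direct index set, and this part transfers without change from the one-dimensional proof.

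The main obstacle is the structural correspondence in the second paragraph: one has to verify carefully that every canonical partition of $[N]^P$ (parameterized by $l$ and $n_1, \ldots, n_P$) is realized as the leaf-set of some tree node, and conversely that compatibility of $\axismap$ is precisely the condition guaranteeing a single edge of weight $R$ always suffices to cut it. Once this bookkeeping is settled, the entropy estimate, the perturbation step, and the maximal-entanglement lower bound are all routine.
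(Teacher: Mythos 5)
Your proposal is correct and matches the paper's treatment: the paper proves this theorem simply by noting it "follows a line analogous to that of \cref{thm:fit_necessary}," which is precisely the adaptation you carry out (single-edge cut above the tree node whose leaf-set is $\axismap(\K)$, hence $\qe{\tntensorpdim{P}}{\axismap(\K)} \leq \ln(R)$, followed by the same perturbation lemma and the maximally entangled construction of \cite{deng2017quantum}). Your identification of the compatibility-of-$\axismap$ bookkeeping as the only genuinely new ingredient is exactly right.
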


\begin{proof}[Proof sketch (proof in~\cref{app:proofs:fit_necssary_pdim})]
The proof is analogous to that of~\cref{thm:fit_necessary}.
\end{proof}

\begin{theorem}
	\label{thm:fit_sufficient_pdim}
	Let $\A \in \R^{D_1 \times \cdots \times D_{N^P}}$ and $\epsilon > 0$.
	Suppose that for all canonical partitions $(\K, \K^c) \in \can^P$ (\cref{def:canonical_partitions_pdim}) it holds that $\qe{\A}{\axismap (\K)} \leq \frac{\epsilon^{2}}{(2 N^P - 3) \norm{\A}^2} \cdot \ln (R)$, where $\mu : [N]^P \to [N^P]$ is compatible with the locally connected tensor network (\cref{def:compat_map}).
	Then, there exists an assignment for the tensors constituting the locally connected tensor network (defined in~\cref{app:extension_dims:fit_tensor:tn_lc_nn}) such that it generates $\tntensorpdim{P} \in \R^{D_1 \times \cdots \times D_{N^P}}$ satisfying:
	\[
	\norm{ \tntensorpdim{P} - \A } \leq \epsilon
	\text{\,.}
	\]
\end{theorem}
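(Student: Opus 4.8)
The plan is to mirror the proof of the one-dimensional sufficient condition (\cref{thm:fit_sufficient}, proved in \cref{app:proofs:fit_sufficient}), replacing the perfect binary tree by the perfect $2^P$-ary tree and the linear index set $[N]$ by $[N^P]$ through the compatible map $\axismap$. The two ingredients are a quantitative ``low entanglement implies fast singular value decay'' estimate, and the quasi-optimal hierarchical truncation construction of \citep{grasedyck2010hierarchical}, restated as \cref{app:proofs:grasedyck}.

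First I would establish the singular-value tail bound. Fix a canonical partition $(\K, \K^c) \in \can^P$ and write $\sigma_1 \geq \cdots \geq \sigma_{D_{\axismap(\K)}}$ for the singular values of $\mat{\A}{\axismap(\K)}$, with $\rho_d := \sigma_d^2 / \sum\nolimits_{d'} \sigma_{d'}^2$. Since $\sum\nolimits_{d = 1}^R \rho_d \leq 1$ and each summand is at least $\rho_R$, we get $\rho_R \leq 1/R$, hence $\rho_d \leq 1/R$ and $-\ln(\rho_d) \geq \ln(R)$ for every $d \geq R$. Using nonnegativity of the remaining entropy terms,
\[
\qe{\A}{\axismap(\K)} \;\geq\; \sum\nolimits_{d > R} \brk*{ -\rho_d \ln(\rho_d) } \;\geq\; \ln(R) \cdot \sum\nolimits_{d > R} \rho_d \text{\,,}
\]
so the assumed entanglement bound yields $\sum\nolimits_{d > R} \rho_d \leq \frac{\epsilon^2}{(2 N^P - 3) \norm{\A}^2}$. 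Multiplying by $\norm{\A}^2 = \sum\nolimits_d \sigma_d^2 = \norm{\mat{\A}{\axismap(\K)}}^2$ gives $\sum\nolimits_{d > R} \sigma_d^2 \leq \frac{\epsilon^2}{2 N^P - 3}$; that is, the best rank-$R$ approximation of $\mat{\A}{\axismap(\K)}$ has squared Frobenius error at most $\epsilon^2 / (2 N^P - 3)$.

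Next I would use compatibility of $\axismap$ (\cref{def:compat_map}): the leaves descending from any node of the $2^P$-ary tree form a contiguous cubic block, which is exactly a canonical set $\K$, so $\brk[c]{ \axismap(\K) : (\K, \K^c) \in \can^P }$ is precisely the family of node-matricizations of the dimension tree underlying the tensor network, and the width-$R$ locally connected tensor network is the hierarchical Tucker format on this tree with all hierarchical ranks equal to $R$. Applying the hierarchical truncation of \citep{grasedyck2010hierarchical} (\cref{app:proofs:grasedyck})~---~truncating each node-matricization to rank $R$~---~produces a tensor realizable by an assignment of the network's cores, whose squared approximation error is bounded by the sum of the per-node squared truncation errors. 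The number of such (nontrivial) truncation steps is at most $2 N^P - 3$, and each contributes at most $\epsilon^2 / (2 N^P - 3)$ by the previous paragraph, so the total squared error is at most $\epsilon^2$, giving $\norm{ \tntensorpdim{P} - \A } \leq \epsilon$.

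The main obstacle I anticipate is combinatorial bookkeeping rather than a genuinely new idea. One must confirm that compatibility makes the canonical partitions of $[N]^P$ coincide with the node set of the $2^P$-ary dimension tree, and verify that the number of truncation steps in the construction does not exceed $2 N^P - 3$ (this count is tight for $P = 1$, where a complementary top-level pair is truncated once, and remains a valid upper bound for $P \geq 2$), so that the accumulated error matches the constant in the hypothesis. The other point requiring care is that a uniform hierarchical rank $R$ is expressible by the width-$R$ tensor network, \ie~that the hierarchical Tucker approximation lies in the network's realizable set; this is inherited verbatim from the one-dimensional argument.
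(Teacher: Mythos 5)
Your proposal is correct and follows essentially the same route as the paper, which proves \cref{thm:fit_sufficient_pdim} by running the argument of \cref{thm:fit_sufficient} on the $2^P$-ary dimension tree: derive the singular-value tail bound $\sum_{d > R} \sigma_d^2 \leq \epsilon^2/(2N^P - 3)$ for each node-matricization from the entanglement hypothesis, then invoke the hierarchical truncation of \citep{grasedyck2010hierarchical}, noting that the node count $\frac{2^P N^P - 1}{2^P - 1}$ of the tree is at most $2N^P - 3$. The only (immaterial) deviation is that you obtain the tail bound via the direct estimate $-\ln(\rho_d) \geq \ln(R)$ for $d > R$, whereas the paper applies Markov's inequality to $\log(1/p(x))$; both yield the identical inequality $\sum_{d>R}\rho_d \leq \qe{\A}{\axismap(\K)}/\ln(R)$.
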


\begin{proof}[Proof sketch (proof in~\cref{app:proofs:fit_sufficient_pdim})]
The claim follows through a reduction from the locally connected tensor network for $P$-dimensional data to that for one-dimensional data (defined in~\cref{sec:fit_tensor:tn_lc_nn}), \ie~from perfect $2^P$-ary to perfect binary tree tensor networks.
Specifically, we consider a modified locally connected tensor network for one-dimensional data, where axes corresponding to different inner edges can vary in length (as opposed to all having length $R$).
We then show, by arguments analogous to those in the proof of~\cref{thm:fit_sufficient}, that it can approximate $\A$ while having certain inner axes, related to the canonical partitions of $[N]^P$, of lengths at most $R$.
The proof concludes by establishing that, any tensor represented by such a locally connected tensor network for one-dimensional data can be represented via the locally connected tensor network for $P$-dimensional data (where the length of each axis corresponding to an inner edge is $R$).
\end{proof}

\subsection{Low Entanglement Under Canonical Partitions Is Necessary and Sufficient for Accurate Prediction}  \label{app:extension_dims:accurate_predict}

In this appendix, we consider the locally connected tensor network from~\cref{app:extension_dims:fit_tensor:tn_lc_nn} in a machine learning setting, and extend the results and experiments of~\cref{sec:accurate_predict} from one-dimensional to $P$-dimensional models and data.

\subsubsection{Accurate Prediction Is Equivalent to Fitting Data Tensor} \label{app:extension_dims:accurate_predict:fit_data_tensor}

The locally connected tensor network generating $\tntensorpdim{P} \in \R^{D_1 \times \cdots \times D_{N^P}}$ is equivalent to a locally connected neural network operating over $P$-dimensional data (see~\cref{app:extension_dims:fit_tensor:tn_lc_nn}).
Specifically, a forward pass of the latter over $\brk[c]{ \xbf^{(n_1, \ldots, n_P)} \in \R^{D_{\axismap (n_1, \ldots, n_P)} } }_{n_1, \ldots, n_P \in [N]}$ yields \smash{$\inprodnoflex{ \tenp_{n = 1}^{N^P} \xbf^{\axismap^{-1} (n)} }{ \tntensorpdim{P} }$}, for a compatible $\axismap : [N]^P \to [N^P]$ (\cref{def:compat_map}).
Suppose we are given a training set of labeled instances $\brk[c]!{ \brk[c]{ \xbf^{( (n_1, \ldots, n_P), m)} }_{n_1, \ldots, n_P \in [N]}, y^{(m)} }_{m = 1}^M$ drawn i.i.d. from some distribution, where $y^{(m)} \in \{1, -1\}$ for $m \in [M]$.
Learning the parameters of the neural network through the soft-margin support vector machine (SVM) objective amounts to optimizing:
\[
\min\nolimits_{\substack{\norm{\tntensorpdim{P}} \leq B}} \frac{1}{M} \sum\nolimits_{m = 1}^M  \max \brk[c]*{0, 1 - y^{(m)} \inprodbig{ \tenp_{n = 1}^{N^P} \xbf^{(\axismap^{-1} (n) ,m)} }{\tntensorpdim{P}} }
\text{\,,}
\]
for a predetermined constant $B > 0$.
This objective generalizes~\cref{eq:soft_svm} from one-dimensional to $P$-dimensional model and data.
Assume that instances are normalized, \ie~$\norm{ \xbf^{((n_1, \ldots, n_P), m)} } \leq 1$ for all $n_1, \ldots, n_P \in [N], m \in [M]$, and that $B \leq 1$.
By a derivation analogous to that succeeding~\cref{eq:soft_svm} in~\cref{sec:accurate_predict:fit_data_tensor}, if follows that minimizing the SVM objective is equivalent to minimizing $\normbig{ \frac{\tntensorpdim{P}}{ \norm{ \tntensorpdim{P} } } - \frac{ \datatensorpdim{P} }{ \norm{\datatensorpdim{P}} }}$, where:
\be
\datatensorpdim{P} := \frac{1}{M} \sum\nolimits_{m = 1}^M y^{(m)} \cdot \tenp_{n = 1}^{N^P} \xbf^{(\axismap^{-1} (n), m)}
\label{eq:data_tensor_pdim}
\ee
extends the notion of \emph{empirical data tensor} to $P$-dimensional data.
In other words, the accuracy achievable over the training data is determined by the extent to which $\frac{ \tntensorpdim{P} }{ \norm{\tntensorpdim{P}} }$ can fit the normalized empirical data tensor~\smash{$\frac{ \datatensorpdim{P} }{ \norm{\datatensorpdim{P}} }$}.

The same arguments apply to the population loss, in which case $\datatensorpdim{P}$ is replaced by the \emph{population data tensor}:
\be
\popdatatensorpdim{P} :=  \EE\nolimits_{ \brk[c]{ \xbf^{(n_1, \ldots, n_P)} }_{n_1, \ldots, n_P \in [N]} , y} \brk[s]*{ y \cdot \tenp_{n = 1}^{N^P} \xbf^{\axismap^{-1} (n)} }
\text{\,.}
\label{eq:pop_data_tensor_pdim}
\ee
The achievable accuracy over the population is therefore determined by the extent to which $\frac{ \tntensorpdim{P} }{ \norm{\tntensorpdim{P}} }$ can fit the normalized population data tensor $\frac{ \popdatatensorpdim{P} }{ \norm{\popdatatensorpdim{P}} }$.
Accordingly, we refer to the minimal distance from it as the \emph{supobtimality in achievable accuracy}, generalizing~\cref{def:supopt} from~\cref{sec:accurate_predict:fit_data_tensor}.
\begin{definition}
	\label{def:supopt_pdim}
	In the context of the classification setting above, the \emph{suboptimality in achievable accuracy} is:
	\[
	\suboptpdim{P} := \min\nolimits_{\tntensorpdim{P}} \norm*{ \frac{\tntensorpdim{P}}{ \norm{\tntensorpdim{P}} } - \frac{ \popdatatensorpdim{P} }{ \norm{\popdatatensorpdim{P}} }}
	\text{\,.}
	\]
\end{definition}

\subsubsection{Necessary and Sufficient Condition for Accurate Prediction} \label{app:extension_dims:accurate_predict:nec_and_suf}

In the classification setting of~\cref{app:extension_dims:accurate_predict:fit_data_tensor}, by invoking~\cref{thm:fit_necessary_pdim,thm:fit_sufficient_pdim} from~\cref{app:extension_dims:fit_tensor:nec_and_suf}, we conclude that the suboptimality in achievable accuracy is small if and only if the population (empirical) data tensor $\popdatatensorpdim{P}$ ($\datatensorpdim{P}$) admits low entanglement under the canonical partitions of features (\cref{def:canonical_partitions_pdim}).
Specifically, we establish~\cref{cor:acc_pred_nec_and_suf_pdim},~\cref{prop:data_tensor_concentration_pdim} and~\cref{cor:eff_acc_pred_nec_and_suf_pdim}, which generalize~\cref{cor:acc_pred_nec_and_suf},~\cref{prop:data_tensor_concentration} and~\cref{cor:eff_acc_pred_nec_and_suf} from~\cref{sec:accurate_predict:nec_and_suf}, respectively, to $P$-dimensional model and data.

\begin{corollary}
	\label{cor:acc_pred_nec_and_suf_pdim}
	Consider the classification setting of~\cref{app:extension_dims:accurate_predict:fit_data_tensor}, and let $\epsilon \in [0, 1/4]$.
	If there exists a canonical partition $\brk{ \K, \K^c} \in \can^P$ (\cref{def:canonical_partitions_pdim}) under which $\qe{\popdatatensorpdim{P}}{\axismap (\K)} > \ln (R) + 2\epsilon \cdot \ln ( D_{\axismap (\K)} )+ 2 \sqrt{2 \epsilon}$, where $D_{\axismap (\K)} := \min \brk[c]{ \prod_{n \in \axismap (\K)} D_n , \prod_{n \in \axismap (\K)^c} D_n }$, then:
	\[
	\suboptpdim{P} > \epsilon
	\text{\,.}
	\]
	Conversely, if for all $\brk{ \K, \K^c} \in \can^P$ it holds that $\qe{\popdatatensorpdim{P}}{ \axismap (\K) } \leq \frac{\epsilon^{2}}{8 N^P -12} \cdot \ln (R)$, then:
	\[
	\suboptpdim{P} \leq \epsilon
	\text{\,.}
	\]
\end{corollary}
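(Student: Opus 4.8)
The plan is to derive \cref{cor:acc_pred_nec_and_suf_pdim} from the $P$-dimensional fitting theorems (\cref{thm:fit_necessary_pdim,thm:fit_sufficient_pdim}) in exactly the way \cref{cor:acc_pred_nec_and_suf} is derived from their one-dimensional counterparts, the only extra bookkeeping being the normalization of $\tntensorpdim{P}$. At the outset I would isolate two facts. First, $\suboptpdim{P}$ measures how well a \emph{normalized} locally connected tensor network output fits the \emph{unit-norm} tensor $\A := \popdatatensorpdim{P} / \norm{\popdatatensorpdim{P}}$. Second, entanglement is scale-invariant, i.e.\ $\qe{\A}{\axismap(\K)} = \qe{\popdatatensorpdim{P}}{\axismap(\K)}$ for every canonical partition, since multiplying a tensor by a nonzero scalar scales all singular values uniformly and hence leaves the normalized distribution $\rho_d$ in \cref{def:entanglement} unchanged. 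These two facts let me pass freely between $\popdatatensorpdim{P}$ and its unit-norm normalization, so that the $\norm{\A}$ factors appearing in the fitting theorems collapse to $1$.

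For the necessary direction I would argue by contraposition. Suppose $\suboptpdim{P} \le \epsilon$. Then there is a tensor $\tntensorpdim{P}$ generated by the network whose normalization $\W := \tntensorpdim{P}/\norm{\tntensorpdim{P}}$ is again generated by the network (by multilinearity of contraction) and satisfies $\norm{\W - \A} \le \epsilon$, with $\norm{\W} = \norm{\A} = 1$. Since $\epsilon \in [0, 1/4] = [0, \norm{\A}/4]$, \cref{thm:fit_necessary_pdim} applies with $\W$ in the role of $\tntensorpdim{P}$ and gives, for every canonical partition, $\qe{\A}{\axismap(\K)} \le \ln(R) + 2\epsilon\ln(D_{\axismap(\K)}) + 2\sqrt{2\epsilon}$. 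By scale-invariance this is the same bound on $\qe{\popdatatensorpdim{P}}{\axismap(\K)}$, contradicting the hypothesis that some canonical partition exceeds that threshold; hence $\suboptpdim{P} > \epsilon$.

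For the sufficient direction I would invoke \cref{thm:fit_sufficient_pdim} on the target $\A$ with approximation error $\epsilon' := \epsilon/2$. Its hypothesis $\qe{\A}{\axismap(\K)} \le \frac{(\epsilon')^2}{(2N^P - 3)\norm{\A}^2}\ln(R)$, after substituting $\norm{\A} = 1$ and $\epsilon' = \epsilon/2$, reads $\qe{\A}{\axismap(\K)} \le \frac{\epsilon^2}{8N^P - 12}\ln(R)$, which is precisely the corollary's assumption (again transported from $\popdatatensorpdim{P}$ to $\A$ by scale-invariance); the constant $8N^P - 12 = 4(2N^P - 3)$ is exactly the cost of halving $\epsilon$. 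The theorem then yields $\tntensorpdim{P}$ with $\norm{\tntensorpdim{P} - \A} \le \epsilon/2$, and the final step converts this into a bound on the suboptimality: from $\norm{\tntensorpdim{P} - \A} \le \eta$ with $\norm{\A} = 1$ one gets $\abs{\norm{\tntensorpdim{P}} - 1} \le \eta$, whence $\norm{\tntensorpdim{P}/\norm{\tntensorpdim{P}} - \A} \le \norm{\tntensorpdim{P}/\norm{\tntensorpdim{P}} - \tntensorpdim{P}} + \norm{\tntensorpdim{P} - \A} \le 2\eta$; taking $\eta = \epsilon/2$ gives $\suboptpdim{P} \le \epsilon$.

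I expect no deep obstacle: the entire content lives in \cref{thm:fit_necessary_pdim,thm:fit_sufficient_pdim}, and what remains is careful accounting of constants. The two points demanding attention are confirming that scale-invariance legitimately replaces $\popdatatensorpdim{P}$ by its unit-norm normalization everywhere (so the $\norm{\A}$ factors become $1$), and verifying that the normalization inequality loses exactly a factor of two, which is what propagates $2N^P - 3$ into $8N^P - 12$. I would also treat the degenerate case $\popdatatensorpdim{P} = 0$ separately, where the normalization and the objective are undefined.
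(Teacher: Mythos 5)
Your proposal is correct and follows essentially the same route as the paper: the paper's proof likewise reduces the claim to \cref{thm:fit_necessary_pdim,thm:fit_sufficient_pdim}, using multilinearity of contraction to pass to normalized network tensors for the necessary direction, and invoking the sufficient fitting theorem at accuracy $\epsilon/2$ (whence the constant $8N^P-12 = 4(2N^P-3)$) followed by the same triangle-inequality normalization step for the converse. Your explicit handling of the degenerate case $\popdatatensorpdim{P}=0$ and the use of $\abs{\norm{\tntensorpdim{P}}-1}$ rather than $\norm{\tntensorpdim{P}}-1$ are minor refinements over the paper's write-up, not a different argument.
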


\begin{proof}
Implied by~\cref{thm:fit_necessary_pdim,thm:fit_sufficient_pdim} after accounting for $\tntensorpdim{P}$ being normalized in the suboptimality in achievable accuracy, as done in the proof of~\cref{cor:acc_pred_nec_and_suf}.
\end{proof}

\begin{proposition}
	\label{prop:data_tensor_concentration_pdim}
	Consider the classification setting of~\cref{app:extension_dims:accurate_predict:fit_data_tensor},
	and let $\delta \in (0, 1)$ and $\gamma > 0$.
	If the training set size $M$ satisfies $M \geq \frac{2\ln(\frac{2}{\delta})}{\norm{\popdatatensorpdim{P}}^{2}\gamma^{2}}$, then with probability at least $1 - \delta$:
	\[
		\norm*{\frac{\popdatatensorpdim{P}}{\norm{\popdatatensorpdim{P}}} - \frac{\datatensorpdim{P}}{\norm{\datatensorpdim{P}}}} \leq \gamma
	\]
\end{proposition}

\begin{proof}
The claim is established by following steps identical to those in the proof of~\cref{prop:data_tensor_concentration}.
\end{proof}

\begin{corollary}
	\label{cor:eff_acc_pred_nec_and_suf_pdim}
	Consider the setting and notation of~\cref{cor:acc_pred_nec_and_suf_pdim}, with $\epsilon \in (0, \frac{1}{6}]$.
	For $\delta \in (0, 1)$, suppose that the training set size $M$ satisfies $M \geq \frac{8\ln(\frac{2}{\delta})}{\norm{\popdatatensorpdim{P}}^{2}\epsilon^{2}}$
	Then, with probability at least $1 - \delta$ the following hold.
	First, if there exists a canonical partition $\brk{ \K, \K^c} \in \can^P$ (\cref{def:canonical_partitions_pdim}) under which $\qe{\datatensorpdim{P}}{ \axismap (\K) } > \ln (R) + 3 \epsilon \cdot \ln (D_{ \axismap (\K) })  + 2 \sqrt{ 3 \epsilon }$, then:
	\[
	\suboptpdim{P} > \epsilon
	\text{\,.}
	\]
	Second, if for all $\brk{ \K, \K^c} \in \can^P$ it holds that $\qe{ \datatensorpdim{P} }{\axismap (\K)} \leq \frac{\epsilon^{2}}{32 N^P - 48} \cdot \ln (R)$, then:
	\[
	\suboptpdim{P} \leq \epsilon
	\text{\,.}
	\]
	Moreover, the conditions above on the entanglements of $\datatensorpdim{P}$ can be evaluated efficiently (in $\OO (D N^{P} M^2 + N^P M^3)$ time $\OO (D N^P M + M^2)$ and memory, where $D := \max_{n \in [N^P]} D_n$).
\end{corollary}

\begin{proof} 
	Implied from~\cref{cor:acc_pred_nec_and_suf_pdim},~\cref{prop:data_tensor_concentration_pdim} with $\gamma = \frac{\epsilon}{2}$ and~\cref{alg:ent_comp} in~\cref{app:ent_comp} by following steps identical to those in the proof of~\cref{cor:eff_acc_pred_nec_and_suf}.
\end{proof}

\subsubsection{Empirical Demonstration} \label{app:extension_dims:accurate_predict:emp_demo}

\cref{fig:entanglement_inv_corr_acc_images} extends the experiments reported by~\cref{fig:entanglement_inv_corr_acc} in~\cref{sec:accurate_predict:emp_demo} from one-dimensional (sequential) audio data to two-dimensional image data.
Specifically, it demonstrates that the prediction accuracies of convolutional neural networks over a variant of CIFAR10~\citep{krizhevsky2009learning} is inversely correlated with the entanglements that the data admits under canonical partitions of features (\cref{def:canonical_partitions_pdim}), \ie~with the entanglements of the empirical data tensor under partitions of its axes induced by canonical partitions.

\begin{figure}[t]
	\vspace{0mm}
	\begin{center}
		\hspace{0mm}
		\includegraphics[width=1\textwidth]{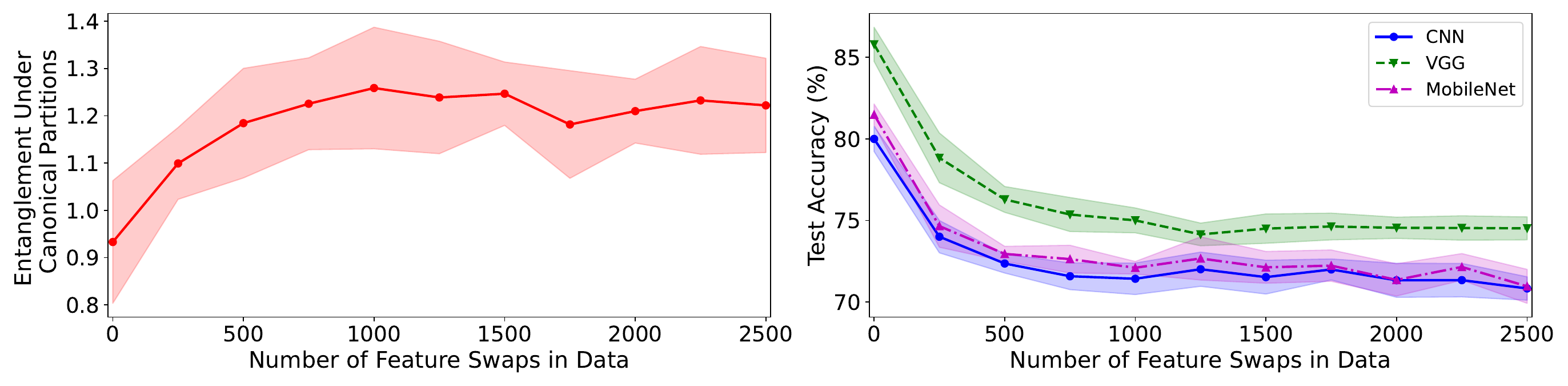}
	\end{center}
	\vspace{-2mm}
	\caption{
		The prediction accuracies of convolutional neural networks are inversely correlated with the entanglements of image data under canonical partitions of features, in compliance with our theory (\cref{app:extension_dims:fit_tensor,app:extension_dims:accurate_predict}).
		This figure is identical to~\cref{fig:entanglement_inv_corr_acc}, except that the measurements were carried over a binary classification version of the CIFAR10 image dataset, as opposed to a one-dimensional (sequential) audio dataset, using three different convolutional neural network architectures.
		For further details see caption of~\cref{fig:entanglement_inv_corr_acc} as well as~\cref{app:experiments:details}.
	}
	\label{fig:entanglement_inv_corr_acc_images}
\end{figure}

\subsection{Enhancing Suitability of Data to Locally Connected Neural Networks} \label{app:extension_dims:enhancing}	

We extend the preprocessing algorithm from~\cref{sec:enhancing}, aimed to enhance the suitability of a data distribution to locally connected neural networks, from one-dimensional to $P$-dimensional models and data (\cref{app:extension_dims:enhancing:search,app:extension_dims:enhancing:practical}).
Empirical evaluations demonstrate that it significantly improves prediction accuracy of common architectures (\cref{app:extension_dims:enhancing:exp}).

\subsubsection{Search for Feature Arrangement With Low Entanglement Under Canonical Partitions}  \label{app:extension_dims:enhancing:search}

Suppose we have $M \in \N$ training instances $\brk[c]1{ \brk[c]{ \xbf^{((n_1, \ldots, n_P), m)} }_{n_1, \ldots, n_P \in [N]}, y^{(m)} }_{m = 1}^M$ , where $\xbf^{((n_1, \ldots, n_P), m)} \in \R^D$ and $y^{(m)} \in \{1, -1\}$ for $n_1, \ldots, n_P \in [N], m \in [M]$, with $D \in \N$.
For models intaking $P$-dimensional data, the recipe for enhancing the suitability of a data distribution to locally connected neural networks from~\cref{sec:enhancing:search} boils down to finding a permutation $\pi : [N]^P \to [N]^P$, which when applied to feature coordinates leads the empirical data tensor $\datatensorpdim{P}$ (\cref{eq:data_tensor_pdim}) to admit low entanglement under canonical partitions (\cref{def:canonical_partitions_pdim}).\footnote{
Enhancing the suitability of a data distribution with instances of dimension different than $P$ to $P$-dimensional models is possible by first arbitrarily mapping features to coordinates in $[N]^P$, and then following the scheme for rearranging $P$-dimensional data.
}

A greedy realization, analogous to that outlined in~\cref{sec:enhancing:search}, is as follows.
Initially, partition the features into $2^P$ equally sized disjoint sets $\brk[c]{ \K_{1, (k_1, \ldots, k_P)} \subset [N]^P }_{k_1, \ldots, k_P \in [2]}$ such that the average of $\datatensorpdim{P}$'s entanglements induced by these sets is minimal.
That is, find an element of:
\[
\argmin_{\substack{\brk[c]1{ \K'_{k_1, \ldots, k_P} \subset [N]^P }_{k_1, \ldots, k_P \in [2]} \\ \text{s.t. } \cupdot_{k_1, \ldots, k_P \in [2]} \K'_{k_1, \ldots, k_P} = [N]^P , \\ \forall k_1, \ldots, k_P, k'_1, \ldots, k'_P \in [2] ~\abs{\K'_{k_1, \ldots, k_P}} = \abs{\K'_{k'_1, \ldots, k'_P}}  } } \!\!\!\!\!\!\! \frac{1}{2^P} \sum\nolimits_{k_1, \ldots, k_P \in [2]} \qe{\datatensorpdim{P}}{ \axismap (\K'_{k_1, \ldots, k_P}) }
\text{\,,}
\]
where $\axismap: [N]^P \to [N^P]$ is a compatible map from coordinates in $[N]^P$ to axes indices (\cref{def:compat_map}).
The permutation $\pi$ will map each $\K_{1, (k_1, \ldots, k_P)}$ to coordinates $\times_{p = 1}^P \{ \frac{N}{2} \cdot (k_p - 1) + 1, \ldots, \frac{N}{2} \cdot k_p \}$, for $k_1, \ldots, k_P \in [2]$.
Then, partition similarly each of $\brk[c]{ \K_{1, (k_1, \ldots, k_P)} }_{k_1, \ldots, k_P \in [2]}$ into $2^P$ equally sized disjoint sets.
Continuing in the same fashion, until we reach subsets $\brk[c]{ \K_{L, (k_1, \ldots, k_P)} }_{k_1, \ldots, k_P \in [N]}$ consisting of a single feature coordinate each, fully specifies the permutation $\pi$.

As in the case of one-dimensional models and data (\cref{sec:enhancing:search}), the step lying at the heart of the above scheme~---~finding a balanced partition into $2^P$ sets that minimizes average entanglement~---~is computationally prohibitive.
In the next supappendix we will see that replacing entanglement with surrogate entanglement brings forth a practical implementation.

\subsubsection{Practical Algorithm via Surrogate for Entanglement} \label{app:extension_dims:enhancing:practical}

To efficiently implement the scheme from~\cref{app:extension_dims:enhancing:search}, we replace entanglement with surrogate entanglement (\cref{def:surrogate_entanglement}), which for $P$-dimensional data is straightforwardly defined as follows.

\begin{definition}
	\label{def:surrogate_entanglement_pdim}
	Given a set of $M \in \N$ instances $\X := \brk[c]1{ \brk[c]{ \xbf^{((n_1, \ldots, n_P), m)} \in \R^D }_{n_1, \ldots, n_P \in [N]}  }_{m = 1}^M$, denote by $\pear_{(n_1, \ldots, n_P), (n'_1, \ldots, n'_P)}$ the multivariate Pearson correlation between features $(n_1, \ldots, n_P) \in [N]^P$ and $(n'_1, \ldots, n'_P) \in [N]^P$.
	For $\K \subseteq [N]^P$, the \emph{surrogate entanglement} of $\X$ with respect to the partition $(\K, \K^c)$, denoted $\se{ \X }{ \K }$, is the sum of Pearson correlation coefficients between pairs of features, the first belonging to $\K$ and the second to $\K^c := [N]^P \setminus \K$.
	That is:
	\[
	\sebig{ \X }{ \K } = \sum\nolimits_{(n_1, \ldots, n_P) \in \K, (n'_1, \ldots, n'_P) \in \K^c} \pear_{(n_1, \ldots, n_P), (n'_1, \ldots, n'_P)}
	\text{\,.}
	\]
\end{definition}

Analogously to the case of one-dimensional data (\cref{sec:enhancing:practical}),~\cref{prop:min_balanced_cut_pdim} shows that replacing the entanglement with surrogate entanglement in the scheme from~\cref{app:extension_dims:enhancing:search} converts each search for a balanced partition minimizing average entanglement into a \emph{minimum balanced $2^P$-cut problem}.
Although the minimum balanced $2^P$-cut problem is NP-hard (see, \eg,~\cite{garey1979computers}), similarly to the minimum balanced ($2$-) cut problem, it enjoys a wide array of well-established approximation implementations, particularly ones designed for large scale~\citep{karypis1998fast,spielman2011spectral}.
We therefore obtain a practical algorithm for enhancing the suitability of a data distribution with $P$-dimensional instances to locally connected neural networks~---~see~\cref{alg:ent_struct_search_pdim}.

\begin{algorithm}[t!]
	\caption{Enhancing Suitability of $P$-Dimensional Data to Locally Connected Neural Networks} 
	\label{alg:ent_struct_search_pdim}	
	\begin{algorithmic}[1]
		\STATE \!\textbf{Input:} $\X := \brk[c]1{ \brk[c]{ \xbf^{((n_1, \ldots, n_P), m)} }_{n_1, \ldots, n_P \in [N]}  }_{m = 1}^M$~---~$M \in \N$ data instances comprising $N^P$ features each \\[0.4em]
		\STATE \!\textbf{Output:} Permutation $\perm : [N]^P \to [N]^P$ to apply to feature coordinates \\[-0.2em]
		\hrulefill
		\vspace{1mm}
		\STATE Let $\K_{0, (1, \ldots, 1)} := [N]^P$ and denote $L := \log_2 (N)$
		\vspace{1mm}
		\STATE \darkgray{\# We assume for simplicity that $N$ is a power of two, otherwise one may add constant features}
		\vspace{1mm}
		\FOR{$l = 0, \ldots, L - 1 ~,~ k_1, \ldots, k_P = 1, \ldots, 2^{l}$}
		\vspace{1mm}
		\STATE Using a reduction to a minimum balanced $2^P$-cut problem (\cref{prop:min_balanced_cut_pdim}), find an approximate solution $\{ \K_{l + 1, (2k_1 - i_1, \ldots, 2k_P -  i_P)} \subset \K_{l , (k_1, \ldots, k_P)} \}_{i_1, \ldots, i_P \in \{0, 1\}}$ for:
		\[
		\min_{\substack{\brk[c]1{ \K_{k'_1, \ldots, k'_P} \subset \K_{l, (k_1, \ldots, k_P)} }_{k'_1, \ldots, k'_P \in [2]} \\ \text{s.t. } \cupdot_{k'_1, \ldots, k'_P \in [2]} \K_{k'_1, \ldots, k'_P} = \K_{l, (k_1, \ldots, k_P)} , \\ \forall k'_1, \ldots, k'_P, \hat{k}_1, \ldots, \hat{k}_P \in [2] ~\abs{\K_{k'_1, \ldots, k'_P}} = \abs{\K_{\hat{k}_1, \ldots, \hat{k}_P}}  } } \!\!\!\!\!\!\! \frac{1}{2^P} \sum\nolimits_{k'_1, \ldots, k'_P \in [2]} \se{\X}{ \K_{k'_1, \ldots, k'_P} }
		\]
		\vspace{-2mm}
		\ENDFOR
		\vspace{1mm}
		\STATE \darkgray{\# At this point, $\{ \K_{L, (k_1, \ldots, k_P)} \}_{k_1, \ldots, k_P \in [N]}$ each contain a single feature coordinate}
		\vspace{0.5mm}
		\STATE \textbf{return} $\perm$ that maps $(n_1, \ldots, n_P) \in \K_{L, (k_1, \ldots, k_P)}$ to $(k_1, \ldots, k_P)$, for every $k_1, \ldots, k_P \in [N]$
	\end{algorithmic}
\end{algorithm}

\begin{proposition}
	\label{prop:min_balanced_cut_pdim}
	For any $\widebar{\K} \subseteq [N]^P$ of size divisible by $2^P$, the following optimization problem can be framed as a minimum balanced $2^P$-cut problem over a complete graph with $\abs{\widebar{\K}}$ vertices:
	\be
	\min_{\substack{\brk[c]1{ \K_{k_1, \ldots, k_P} \subset \widebar{\K} }_{k_1, \ldots, k_P \in [2]} \\ \text{s.t. } \cupdot_{k_1, \ldots, k_P \in [2]} \K_{k_1, \ldots, k_P} = \widebar{\K} , \\ \forall k_1, \ldots, k_P, k'_1, \ldots, k'_P \in [2] ~\abs{\K_{k_1, \ldots, k_P}} = \abs{\K_{k'_1, \ldots, k'_P}}  } } \!\!\!\!\!\!\! \frac{1}{2^P} \sum\nolimits_{k_1, \ldots, k_P \in [2]} \se{\X}{ \K_{k_1, \ldots, k_P} }
	\text{\,.}
	\label{eq:balanced_part_min_surr_entanglement_pdim}
	\ee
	Specifically, there exists a complete undirected weighted graph with vertices $\widebar{\K}$ and edge weights $w : \widebar{\K} \times \widebar{\K} \to \R$ such that, for any partition of $\widebar{\K}$ into equally sized disjoint sets $\{ \K_{k_1, \ldots, k_P} \}_{k_1, \ldots, k_P \in [2]}$, the weight of the $2^P$-cut in the graph induced by them~---~$\frac{1}{2} \sum\nolimits_{k_1, \ldots, k_P \in [2]} \sum\nolimits_{(n_1, \ldots, n_P) \in \K_{k_1, \ldots, k_P} , (n'_1, \ldots, n'_P) \in \widebar{\K} \setminus \K_{k_1, \ldots, k_P}} w( \{ (n_1, \ldots, n_P), (n'_1, \ldots, n'_P) \})$~---~is equal, up to multiplicative and additive constants, to the term minimized in~\cref{eq:balanced_part_min_surr_entanglement}.
\end{proposition}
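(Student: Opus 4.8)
The plan is to mirror the reduction used in the proof of \cref{prop:min_balanced_cut}, generalizing it from a $2$-cut to a $2^P$-cut. As there, I would take the complete undirected graph on vertex set $\widebar{\K}$ and assign to each edge $\{a, b\}$ (where $a, b \in \widebar{\K}$ are $P$-dimensional coordinates) the weight $w(\{a, b\}) = \pear_{a, b}$, the multivariate Pearson correlation between the corresponding features. The goal is then to show that, under this choice of weights, the objective in~\cref{eq:balanced_part_min_surr_entanglement_pdim} equals the $2^P$-cut weight up to partition-independent constants.

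The first step is to decompose each term $\se{\X}{\K_{k_1, \ldots, k_P}}$ in the objective. By \cref{def:surrogate_entanglement_pdim}, this term sums $\pear_{a, b}$ over all pairs with $a \in \K_{k_1, \ldots, k_P}$ and $b$ in the \emph{full} complement $[N]^P \setminus \K_{k_1, \ldots, k_P}$. I would split this complement into the part lying inside $\widebar{\K}$, namely $\widebar{\K} \setminus \K_{k_1, \ldots, k_P}$, and the part lying outside, namely $[N]^P \setminus \widebar{\K}$. The outside part is the crux of the bookkeeping: since $\brk[c]{ \K_{k_1, \ldots, k_P} }$ partitions $\widebar{\K}$, summing the outside contribution over all $2^P$ pieces yields exactly $\se{\X}{\widebar{\K}}$, which does not depend on the chosen partition and hence contributes only an additive constant.

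The second step is to recognize the remaining, inside-$\widebar{\K}$ contribution as twice the $2^P$-cut weight. Summing $\sum_{a \in \K_{k_1, \ldots, k_P}, b \in \widebar{\K} \setminus \K_{k_1, \ldots, k_P}} \pear_{a, b}$ over all pieces counts every edge crossing between two distinct pieces exactly twice (once from each endpoint's piece), which is precisely why the $2^P$-cut weight in the statement carries the factor $\tfrac{1}{2}$. Combining the two steps gives
\[
\frac{1}{2^P} \sum_{k_1, \ldots, k_P \in [2]} \se{\X}{\K_{k_1, \ldots, k_P}} = \frac{2}{2^P} \cdot (\text{$2^P$-cut weight}) + \frac{1}{2^P} \se{\X}{\widebar{\K}} ,
\]
so the objective agrees with the cut weight up to the multiplicative constant $2^{1 - P}$ and the additive constant $2^{-P} \se{\X}{\widebar{\K}}$; minimizing one is therefore equivalent to minimizing the other.

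I expect no serious obstacle, as the argument is essentially a counting identity. The one point demanding care is the dual role of the complement: surrogate entanglement is always taken relative to the ambient set $[N]^P$, whereas the cut is taken relative to $\widebar{\K}$, so one must track the out-of-$\widebar{\K}$ pairs and confirm they collapse into a partition-independent constant rather than interfering with the balanced-cut structure. A useful sanity check is that for $P = 1$ the multiplicative constant $2^{1 - P}$ becomes $1$ and the identity reduces exactly to the one appearing in the proof of \cref{prop:min_balanced_cut}.
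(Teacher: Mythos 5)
Your proposal is correct and follows essentially the same route as the paper's proof: the same complete graph on $\widebar{\K}$ with Pearson-correlation edge weights, and the same identity relating the averaged surrogate entanglements to the $2^P$-cut weight plus the partition-independent term $\se{\X}{\widebar{\K}}$. Your version merely spells out the bookkeeping (splitting each complement into its inside- and outside-$\widebar{\K}$ parts and noting the double counting of cross edges) that the paper states without elaboration.
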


\begin{proof}
	Consider the complete undirected graph whose vertices are $\widebar{\K}$ and where the weight of an edge $\{(n_1, \ldots, n_P), (n'_1, \ldots, n'_P)\} \in \widebar{\K} \times \widebar{\K}$ is:
	\[
	w ( \{ (n_1, \ldots, n_P), (n'_1, \ldots, n'_P) \} ) = p_{(n_1, \ldots, n_P), (n'_1, \ldots, n'_P)}
	\]
	(recall that $p_{(n_1, \ldots, n_P), (n'_1, \ldots, n'_P)}$ stands for the multivariate Pearson correlation between features $(n_1, \ldots, n_P)$ and $(n'_1, \ldots, n'_P)$ in $\X$).
	For any partition of $\widebar{\K}$ into equally sized disjoint sets $\{ \K_{k_1, \ldots, k_P} \}_{k_1, \ldots, k_P \in [2]}$, it holds that:
	\[
	\begin{split}
	& \frac{1}{2} \sum\nolimits_{k_1, \ldots, k_P \in [2]} \sum\nolimits_{(n_1, \ldots, n_P) \in \K_{k_1, \ldots, k_P} , (n'_1, \ldots, n'_P) \in \widebar{\K} \setminus \K_{k_1, \ldots, k_P}} w( \{ (n_1, \ldots, n_P), (n'_1, \ldots, n'_P) \}) \\
	& = \frac{1}{2} \sum\nolimits_{k_1, \ldots, k_P \in [2]} \se{\X}{ \K_{k_1, \ldots, k_P} } - \frac{1}{2}\sebig{ \X }{ \widebar{\K} }
	\text{\,,}
	\end{split}
	\]
	where $\frac{1}{2}\se{ \X }{ \widebar{\K} }$ does not depend on $\{ \K_{k_1, \ldots, k_P} \}_{k_1, \ldots, k_P \in [2]}$.
	This concludes the proof.
\end{proof}

\subsubsection{Experiments} \label{app:extension_dims:enhancing:exp}

We supplement the experiments from~\cref{sec:enhancing:exp} for one-dimensional models and data, by empirically evaluating~\cref{alg:ent_struct_search_pdim} using two-dimensional convolutional neural networks over randomly permuted image data.
Specifically,~\cref{tab:image_rand_perm} presents experiments analogous to those of~\cref{tab:audio_moderate_dim} from~\cref{sec:enhancing:exp:perm_audio} over the CIFAR10~\citep{krizhevsky2009learning} image dataset.
For brevity, we defer some implementation details to~\cref{app:experiments:details}.

\begin{table}[t]
	\caption{
		Arranging features of randomly permuted image data via~\cref{alg:ent_struct_search_pdim} significantly improves the prediction accuracies of convolutional neural networks.
		Reported are the results of experiments analogous to those of~\cref{tab:audio_moderate_dim}, carried out over the CIFAR10 image dataset (as opposed to an audio dataset) using~\cref{alg:ent_struct_search_pdim}.
		For further details see caption of~\cref{tab:audio_moderate_dim} as well as~\cref{app:experiments:details}.
	}
	\begin{center}
		\small
		\vspace{1mm}
		\begin{tabular}{lccc}
			\toprule
			& Randomly Permuted & \cref{alg:ent_struct_search_pdim} & IGTD \\
			\midrule
			CNN & ${35.1}~\pm$ \scriptsize{${0.5}$} & $\mathbf{38.2}~\pm$ \scriptsize{${0.4}$} & ${36.2}~\pm$ \scriptsize{${0.7}$} \\
			\bottomrule
		\end{tabular}	
	\end{center}
	\label{tab:image_rand_perm}
\end{table}

	\section{Further Experiments}
\label{app:experiments}

\begin{table}[t]
	\caption{
				\cref{alg:ent_struct_search} can be efficiently applied to data with a large number of features.
				Reported are the results of an experiment identical to that of~\cref{tab:audio_moderate_dim}, but over a version of the Speech Commands~\citep{warden2018speech} dataset in which every instance has 50,000 features.
				Instances of the minimum balanced cut problem solved as part of~\cref{alg:ent_struct_search} entail graphs with up to $25 \cdot 10^8$ edges.
				They are solved using the well-known edge sparsification algorithm of~\citep{spielman2011spectral} that preserves weights of cuts, allowing for configurable compute and memory consumption (the more resources are consumed, the more accurate the solution will be).
				As can be seen,~\cref{alg:ent_struct_search} leads to significant improvements in prediction accuracies.
				See~\cref{app:experiments:details} for further implementation details.
		}
	\begin{center}
		\small
		\vspace{1mm}
		\begin{tabular}{lcc}
			\toprule
			& Randomly Permuted & \cref{alg:ent_struct_search} \\
			\midrule
			CNN & ${15.0}~\pm$ \scriptsize{${1.6}$} & ${\mathbf{65.6}}~\pm$ \scriptsize{${1.1}$}  \\
			S4 & ${18.2}~\pm$ \scriptsize{${0.5}$} &  $\mathbf{82.2}~\pm$ \scriptsize{${0.4}$}  \\
			\bottomrule
		\end{tabular}	
	\end{center}
	\vspace{-2mm}
	\label{tab:audio_high_dim}
\end{table}

\cref{tab:audio_high_dim} supplements the experiment of~\cref{tab:audio_moderate_dim} from~\cref{sec:enhancing:exp:perm_audio}, demonstrating that~\cref{alg:ent_struct_search} can be applied to data with a large number of features (\eg~with tens of thousands of features).
In the experiment, instances of the minimum balanced cut problem solved as part of~\cref{alg:ent_struct_search} entail graphs with up to $25 \cdot 10^8$ edges.
They are solved using the well-known edge sparsification algorithm of~\citep{spielman2011spectral} that preserves weights of cuts, allowing for configurable compute and memory consumption (the more resources are consumed, the more accurate the solution will be).
In contrast, this approach is not directly applicable for improving the efficiency of IGTD.

\section{Further Implementation Details}
\label{app:experiments:details}

We provide implementation details omitted from our experimental reports (\cref{sec:accurate_predict:emp_demo},~\cref{sec:enhancing},~\cref{app:extension_dims:accurate_predict:emp_demo},~\cref{app:extension_dims:enhancing:exp} and~\cref{app:experiments}).
Source code for reproducing our results and figures, based on the PyTorch~\citep{paszke2017automatic} framework,\ifdefined\CAMREADY
~can be found at \url{https://github.com/nmd95/data_suitable_lc_nn_code}.
\else
~is attached as supplementary material and will be made publicly available.
\fi
All experiments were run on a single Nvidia RTX A6000 GPU.

\subsection{Empirical Demonstration of Theoretical Analysis (\cref{fig:entanglement_inv_corr_acc,fig:surr_ent_corr,fig:entanglement_inv_corr_acc_images})} \label{app:experiments:details:emp_demo}

\subsubsection{\cref{fig:entanglement_inv_corr_acc,fig:surr_ent_corr}}
\label{app:experiments:details:emp_demo:audio}

\textbf{Dataset}: The SpeechCommands dataset \citep{warden2018speech} contains raw audio segments of length up to 16000, split into 84843 train and 11005 test segments.
We zero-padded all audio segments to have a length of 16000 and resampled them using sinc interpolation (default PyTorch implementation). 
We allocated 11005 audio segments from the train set for validation, \ie~the dataset was split into 73838 train, 11005 validation and 11005 test audio segments, and created a binary one-vs-all classification version of SpeechCommands by taking all audio segments labeled by the class “33'' (corresponding to the word “yes''), and sampling an equal amount of segments from the remaining classes (this process was done separately for the train, validation and test sets). 
The resulting balanced classification dataset had 5610 train, 846 validation and 838 test segments.
Lastly, we resampled all audio segments in the dataset from 16000 to 4096 features using sinc interpolation.

\textbf{Random feature swaps:} Starting with the original order of features, we created increasingly “shuffled'' versions of the dataset by randomly swapping the position of features. For each number of random position swaps $k \in \{0, 250, \ldots, 4000 \}$, we created ten datasets, whose features were subject to $k$ random position swaps between features, using different random seeds.

\textbf{Quantum entanglement measurement}: Each reported value in the plot is the average of entanglements with respect to canonical partitions (\cref{def:canonical_partitions}) corresponding to levels \(l  =  1, 2, 3, 4, 5, 6\).
We used~\cref{alg:ent_comp} described in \cref{app:ent_comp} on two mini-batches of size 500, randomly sampled from the train set.
As customary (\cf~\cite{stoudenmire2016supervised}), every input feature $x$ was embedded using the following sine-cosine scheme:
\[
\phi(x) := ( \sin(\pi \theta x) , \cos(\pi \theta x)) \in \R^2
\text{\,,}
\]
with $\theta = 0.085$.

\textbf{Neural network architectures}:
\begin{itemize}[leftmargin=2em]
	\item \textbf{CNN}: An adaptation of the M5 architecture from~\citep{dai2017very}, which is designed for audio data.
	Our implementation is based on~\url{https://github.com/danielajisafe/Audio_WaveForm_Paper_Implementation}.
	
	\item \textbf{S4}: Official implementation of~\citep{gu2022efficiently} with a hidden dimension of 128 and 4 layers.
	
	\item \textbf{Local-Attention}: Adaptation of the local-attention model from~\cite{rae-razavi-2020-transformers}, as implemented in \url{https://github.com/lucidrains/local-attention}.
	We use a multi-layer perceptron (MLP) for mapping continuous (raw) inputs to embeddings, which are fed as input the the local-attention model.
	For classification, we collapse the spatial dimension of the network's output using mean-pooling and pass the result into an MLP classification head.
	The network had attention dimension 128 (with 2 heads of dimension 64), depth 8, hidden dimension of MLP blocks 341 (computed automatically by the library based on the attention dimension) and local-attention window size 10.
\end{itemize}

\textbf{Training and evaluation:} 
The binary cross-entropy loss was minimized via the Adam optimizer~\citep{kingma2014adam} with default $\beta_1, \beta_2$ coefficients.
Batch sizes were chosen to be the largest powers of two that fit in the GPU memory.
The duration of optimization was 300 epochs for all models (number of epochs was taken large enough such that the train loss plateaued).
After the last training epoch, the model which performed best on the validation set was chosen, and its average test accuracy is reported.
Additional optimization hyperparameters are provided in~\cref{tab:1d_audio_optimization_hyperparam}.
We note that for S4~\cite{gu2022efficiently}, in accordance with its official implementation, we use a cosine annealing learning rate scheduler.

\begin{table}[H]
	\caption{
		Optimization hyperparameters for the experiments of~\cref{fig:entanglement_inv_corr_acc}.
	}
	\label{tab:1d_audio_optimization_hyperparam}
	\begin{center}
		\begin{small}
			\begin{tabular}{lcccc}
				\toprule
				\textbf{Model} & \textbf{Optimizer} & \textbf{Learning Rate} & \textbf{Weight Decay} & \textbf{Batch Size} \\
				\midrule
				CNN & Adam & 0.001 & 0.0001   & 128  \\
				S4              & AdamW             & 0.001                 & 0.01                 & 64 \\
				Local-Attention & Adam              & 0.0005                & 0                              & 32            \\				
				\bottomrule
			\end{tabular}
		\end{small}
	\end{center}
	\vskip -0.1in
\end{table}

\subsubsection{\cref{fig:entanglement_inv_corr_acc_images}}
\label{app:experiments:details:emp_demo:image}

\textbf{Dataset}: We created one-vs-all binary classification datasets based on CIFAR10~\citep{krizhevsky2009learning} as follows.
All images were converted to grayscale using the PyTorch default implementation. 
Then, we allocated 7469 images from the train set for validation, \ie~the dataset was split into 42531 train, 7469 validation and 1000 test images.
We took all images labeled by the class “0'' (corresponding to images of airplanes), and uniformly sampled an equal amount of images from the remaining classes (this process was done separately for the train, validation and test sets).
The resulting balanced classification dataset had 8506 train, 1493 validation and 2001 test images.

\textbf{Random feature swaps:} We created increasingly “shuffled'' versions of the dataset according to the protocal described in~\cref{app:experiments:details:emp_demo:audio}.

\textbf{Quantum entanglement measurement}: Each reported value in the plot is the average entanglements with respect to canonical partitions (\cref{def:canonical_partitions_pdim}) corresponding to levels $l = 1, 2$.

\textbf{Neural network architectures}:
\begin{itemize}[leftmargin=2em]
	\item \textbf{CNN}: Same architecture used for the experiments of~\cref{tab:tabular_datasets} (see~\cref{app:experiments:details:enhancing:tab}), but with one-dimensional convolutional layers replaced with two-dimensional convolutional layers. 
	See~\cref{tab:2D_resnet_hyperparam} for the exact architectural hyperparameters.
	
	\item \textbf{VGG}: Adaptation of the VGG16 architecture, as implemented in the PyTorch framework, for binary classification tasks on grayscale images. 
	Specifically, the number of input channels is set to one and the number of output classes is set to two.
	
	\item \textbf{MobileNet}: Adaptation of the MobileNet V2 architecture, as implemented in the PyTorch framework, for binary classification on grayscale images. 
	Specifically, the number of input channels is set to one and the number of output classes is set to two.
\end{itemize}

\begin{table}[H]
	\caption{
		Architectural hyperparameters for the convolutional neural network (referred to as “CNN'') used in the experiments of \cref{fig:entanglement_inv_corr_acc_images} and~\cref{tab:image_rand_perm}.
	}
	\label{tab:2D_resnet_hyperparam}
	\begin{center}
		\begin{small}
			\begin{tabular}{ll}
				\toprule
				\textbf{Hyperparameter} & \textbf{Value} \\
				\midrule
				Stride & (3, 3)\\
				Kernel size & (3, 3) \\
				Pooling window size & (3, 3) \\
				Number of blocks & 8 \\		
				Hidden dimension & 32 \\		
				\bottomrule
			\end{tabular}
		\end{small}
	\end{center}
	\vskip -0.1in
\end{table}

\textbf{Training and evaluation}: The binary cross-entropy loss was minimized for 150 epochs via the Adam optimizer~\citep{kingma2014adam} with default $\beta_1, \beta_2$ coefficients (number of epochs was taken large enough such that the train loss plateaued).
Batch size was chosen to be the largest power of two that fit in the GPU memory.
After the last training epoch, the model which performed best on the validation set was chosen, and its average test accuracy is reported.
Additional optimization hyperparameters are provided in~\cref{tab:2d_cnn_optimization_hyperparam}.

\begin{table}[H]
	\caption{
		Optimization hyperparameters for the experiments of~\cref{fig:entanglement_inv_corr_acc_images}.
	}
	\label{tab:2d_cnn_optimization_hyperparam}
	\begin{center}
		\begin{small}
			\begin{tabular}{lcccc}
				\toprule
				\textbf{Model} & \textbf{Optimizer} & \textbf{Learning Rate} & \textbf{Weight Decay} & \textbf{Batch Size} \\
				\midrule
				CNN           & Adam              & 0.001                 & 0.0001               & 128           \\
				VGG           & Adam              & 0.0001                & 0.0001               & 64            \\
				MobileNet     & Adam              & 0.001                 & 0.0001               & 64            \\
				\bottomrule
			\end{tabular}
		\end{small}
	\end{center}
	\vskip -0.1in
\end{table}

\subsection{Enhancing Suitability of Data to Locally Connected Neural Networks (\cref{tab:audio_moderate_dim,tab:tabular_datasets,tab:image_rand_perm,tab:audio_high_dim})} 
\label{app:experiments:details:enhancing}

\subsubsection{Feature Rearrangement Algorithms}

\cref{alg:ent_struct_search},~\ref{alg:ent_struct_search_pdim} and IGTD~\cite{zhu2021converting} were applied to the training set.
Subsequently, the learned feature rearrangement was used for both the validation and test data. 
In instances where three-way cross-validation was used, distinct rearrangements were learned for each separately.

\textbf{\cref{alg:ent_struct_search} and \cref{alg:ent_struct_search_pdim}}:
Approximate solutions to the minimum balanced cut problems were obtained using the METIS graph partitioning algorithm~\citep{karypis1998fast}, as implemented in~\url{https://github.com/networkx/networkx-metis}.

\textbf{IGTD~\cite{zhu2021converting}}:
The original IGTD implementation supports rearranging data only into two-dimensional images.
We adapted its implementation to support one-dimensional (sequential) data for the experiments of~\cref{tab:audio_moderate_dim,tab:tabular_datasets}.

\subsubsection{Randomly Permuted Audio Datasets (\cref{tab:audio_moderate_dim})} \label{app:experiments:details:enhancing:perm_audio}

\textbf{Dataset}: To facilitate efficient experimentation, we downsampled all audio segments in SpeechCommands to have 2048 features.
Furthermore, for the train, validation and test sets separately, we used 20\% of the audio segments available for each class.

\textbf{Neural network architectures}:

\begin{itemize}[leftmargin=2em]	
	\item \textbf{CNN}: Same architecture used for the experiments of~\cref{fig:entanglement_inv_corr_acc} (see~\cref{app:experiments:details:emp_demo:audio}).
	
	\item \textbf{S4}: Same architecture used for the experiments of~\cref{fig:entanglement_inv_corr_acc} (see~\cref{app:experiments:details:emp_demo:audio}).
	
	\item \textbf{Local-Attention}: Same architecture used in the experiments of~\cref{fig:entanglement_inv_corr_acc}, but with a depth 4 network (we reduced the depth to allow for more efficient experimentation).
\end{itemize}

\textbf{Training and evaluation:} 
The cross-entropy loss was minimized via the Adam optimizer~\citep{kingma2014adam} with default $\beta_1, \beta_2$ coefficients. 
Batch sizes were chosen to be the largest powers of two that fit in the GPU memory.
The duration of optimization was 200, 200 and 450 epochs for the CNN, S4 and Local-Attention models, respectively (number of epochs was taken large enough such that the train loss plateaued).
After the last training epoch, the model which performed best on the validation set was chosen, and its average test accuracy is reported.
Additional optimization hyperparameters are provided in~\cref{tab:1d_audio_perm_optimization_hyperparam}.
We note that for S4~\cite{gu2022efficiently}, in accordance with its official implementation, we use a cosine annealing learning rate scheduler.

\begin{table}[H]
	\caption{
		Optimization hyperparameters for the experiments of~\cref{tab:audio_moderate_dim}.
	}
	\label{tab:1d_audio_perm_optimization_hyperparam}
	\begin{center}
		\begin{small}
			\begin{tabular}{lcccc}
				\toprule
				\textbf{Model} & \textbf{Optimizer} & \textbf{Learning Rate} & \textbf{Weight Decay} & \textbf{Batch Size} \\ \midrule
				CNN        & Adam              & 0.001                 & 0.0001                          & 128           \\
				S4              & AdamW             & 0.001                 & 0.01             & 64 \\ 
				Local-Attention & Adam              & 0.0001                & 0                               & 32            \\
				\bottomrule
			\end{tabular}
		\end{small}
	\end{center}
	\vskip -0.1in
\end{table}

\subsubsection{Tabular Datasets (\cref{tab:tabular_datasets})} \label{app:experiments:details:enhancing:tab}

\textbf{Datasets}: The datasets “dna'', “semeion'' and “isolet'' are all from the OpenML repository~\citep{OpenML2013}. 
For each dataset we split the samples intro three folds, which were used for evaluation according to a standard three-way cross-validation protocol.
That is, for each of the three folds, we used one third of the data as a test set and the remaining for train and validation.
One third of the samples in the remaining folds (not used for testing) were allocated for the validation set.

\textbf{Neural network architectures}: 

\begin{itemize}[leftmargin=2em]
	\item \textbf{CNN}: We used a ResNet adapted for tabular data. 
	It consisted of residual blocks of the following form:
	\[
	\text{Block} (\xbf) = \text{dropout} ( \xbf + \text{BN} ( \text{maxpool} ( \text{ReLU} ( \text{conv} (\xbf)) ) ) )
	\text{\,.}
	\]
	After applying a predetermined amount of residual blocks, a global average pooling and fully connected layers were used to output the prediction.
	The architectural hyperparameters are specified in~\cref{tab:resnet_tabular_hyperparams}.
	
	\item \textbf{S4}: Same architecture used for the experiments of~\cref{fig:entanglement_inv_corr_acc} (see~\cref{app:experiments:details:emp_demo:audio}), but with a hidden dimension of 64.
	
	\item \textbf{Local-Attention}: Same architecture used for the experiments of~\cref{fig:entanglement_inv_corr_acc} (see~\cref{app:experiments:details:emp_demo:audio}), but with 4 attention heads of dimension 32 and a local-attention window size of 25.
\end{itemize}

\begin{table}[H]
	\caption{
		Architectural hyperparameters for the convolutional neural network used in the experiments of~\cref{tab:tabular_datasets,tab:audio_high_dim}.
	}
	\label{tab:resnet_tabular_hyperparams}
	\begin{center}
		\begin{small}
			\begin{tabular}{ll}
				\toprule
				\textbf{Hyperparameter} & \textbf{Value} \\
				\midrule
				Stride & 3 \\
				Kernel size & 3 \\
				Pooling window size & 3 \\
				Number of blocks & 8 \\		
				Hidden dimension & 32 \\		
				\bottomrule
			\end{tabular}
		\end{small}
	\end{center}
	\vskip -0.1in
\end{table}

\textbf{Training and evaluation}: The cross-entropy loss was minimized for 300 epochs via the Adam optimizer~\citep{kingma2014adam} with default $\beta_1, \beta_2$ coefficients (number of epochs was taken large enough such that the train loss plateaued).
Batch sizes were chosen to be the largest powers of two that fit in the GPU memory.
After the last training epoch, the model which performed best according to the validation sets was chosen, and test accuracy was measured on the test set.
The reported accuracy is the average over the three folds.
Additional optimization hyperparameters are specified in~\cref{tab:training_parameters_all_models}.
We note that for S4~\cite{gu2022efficiently}, in accordance with its official implementation, we use a cosine annealing learning rate scheduler.

\begin{table}[H]
	\caption{
		Optimization hyperparameters for the experiments of~\cref{tab:tabular_datasets}.
	}
	\label{tab:training_parameters_all_models}
	\begin{center}
		\begin{small}
			\begin{tabular}{lcccc}
				\toprule
				\textbf{Model} & \textbf{Optimizer} & \textbf{Learning Rate} & \textbf{Weight Decay}  & \textbf{Batch Size} \\
				\midrule
				CNN         & Adam              & 0.001                 & 0.0001              &  64           \\
				S4              & AdamW             & 0.001                 & 0.01                & 64 \\
				Local-Attention & Adam              & 0.00005                & 0                   & 64            \\
				\bottomrule
			\end{tabular}
		\end{small}
	\end{center}
	\vskip -0.1in
\end{table}

\subsubsection{Randomly Permuted Image Datasets (\cref{tab:image_rand_perm})}

\textbf{Dataset}: The data acquisition process followed the protocol described in~\cref{app:experiments:details:emp_demo:image}, except that the data was not converted into a binary one-vs-all classification dataset.

\textbf{Neural network architectures}:
\begin{itemize}[leftmargin=2em]
	\item \textbf{CNN}: Same architecture used for the experiments of~\cref{fig:entanglement_inv_corr_acc_images} (see~\cref{app:experiments:details:emp_demo:image}).
	
\end{itemize}

\textbf{Training and evaluation}: The cross-entropy loss was minimized for 500 epochs via the Adam optimizer~\citep{kingma2014adam} with default $\beta_1, \beta_2$ coefficients (number of epochs was taken large enough such that the train loss plateaued).
Batch size was chosen to be the largest power of two that fit in the GPU memory.
After the last training epoch, the model which performed best on the validation set was chosen, and its average test accuracy is reported.
Additional optimization hyperparameters are provided in~\cref{tab:2d_image_optimization_hyperparam}.

\begin{table}[H]
	\caption{
		Optimization hyperparameters for the experiments of~\cref{tab:image_rand_perm}.
	}
	\label{tab:2d_image_optimization_hyperparam}
	\begin{center}
		\begin{small}
			\begin{tabular}{lcccc}
				\toprule
				\textbf{Model} & \textbf{Optimizer} & \textbf{Learning Rate} & \textbf{Weight Decay} & \textbf{Batch Size} \\
				\midrule
				CNN         & Adam              &  \begin{tabular}{@{}c@{}} 0.001 \\ (multiplied by 0.1 after 300 epochs) \end{tabular}        & 0.0001              &    128           \\			
				\bottomrule
			\end{tabular}
		\end{small}
	\end{center}
	\vskip -0.1in
\end{table}

\subsubsection{Randomly Permuted Audio Datasets With a Large Number of Features (\cref{tab:audio_high_dim})}

\noindent\textbf{Dataset}: The data acquisition process followed the protocol described in~\cref{app:experiments:details:emp_demo:audio}, except that the data was not transformed into a binary one-vs-all classification dataset and the audio segments were upsampled from 16,000 to 50,000.

\textbf{Edge Sparsification}: To facilitate running the METIS graph partitioning algorithm over the minimum balanced cut problems encountered as part of~\cref{alg:ent_struct_search}, we first removed edges from the graph using the spectral sparsification algorithm of~\cite{spielman2011spectral}.
Specifically, we used the official Julia implementation (\url{https://github.com/danspielman/Laplacians.jl}) with hyperparameter~$\epsilon = 0.15$.

\textbf{Neural network architectures}:
\begin{itemize}[leftmargin=2em]
	\item \textbf{CNN}: Same architecture used for the experiments of~\cref{tab:tabular_datasets} (see~\cref{app:experiments:details:enhancing:tab}).
	
	\item \textbf{S4}: Same architecture used for the experiments of~\cref{tab:audio_moderate_dim} (see~\cref{app:experiments:details:emp_demo:audio}), but with a hidden dimension of 32 (we reduced the hidden dimension due to GPU memory considerations).
\end{itemize}

\textbf{Training and evaluation}: The cross-entropy loss was minimized for 200 epochs via the Adam optimizer~\citep{kingma2014adam} with default $\beta_1, \beta_2$ coefficients (number of epochs was taken large enough such that the train loss plateaued).
Batch sizes were chosen to be the largest powers of two that fit in the GPU memory.
After the last training epoch, the model which performed best on the validation set was chosen, and its average test accuracy is reported.
Additional optimization hyperparameters are provided in~\cref{tab:audio_high_dim_optimization_hyperparam}.
We note that for S4~\cite{gu2022efficiently}, in accordance with its official implementation, we use a cosine annealing learning rate scheduler.

\begin{table}[H]
	\caption{
		Optimization hyperparameters for the experiments of~\cref{tab:audio_high_dim}.
	}
	\label{tab:audio_high_dim_optimization_hyperparam}
	\begin{center}
		\begin{small}
			\begin{tabular}{lcccc}
				\toprule
				\textbf{Model} & \textbf{Optimizer} & \textbf{Learning Rate} & \textbf{Weight Decay} & \textbf{Batch Size} \\
				\midrule
				CNN         & Adam              & 0.001                 & 0.0001              & 64           \\
				S4         & AdamW              & 0.001                 & 0.01                   & 64           \\
				\bottomrule
			\end{tabular}
		\end{small}
	\end{center}
	\vskip -0.1in
\end{table}

	\section{Deferred Proofs}  \label{app:proofs}
\subsection{Useful Lemmas}
Below we collect a few useful results, which we will use in our proofs.

\begin{lemma} \label{app:proofs:perturb}
		We denote the vector of singular values of a matrix \(\Xbf\) (arranged in decreasing order) by \(S(\mathbf{X})\). 
		For any $\Abf, \Bbf \in \R^{D_1 \times D_2}$ it holds that:
		\[
		\norm{ S(\Abf) - S(\Bbf) } \leq \norm{ S(\Abf - \Bbf) }\]
\end{lemma}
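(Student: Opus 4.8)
The plan is to recognize this as the Frobenius-norm case of the Hoffman--Wielandt / Mirsky inequality for singular values, and to derive it in one shot from von Neumann's trace inequality. Recall that von Neumann's inequality states that for any $\Abf, \Bbf \in \R^{D_1 \times D_2}$ one has $\trace(\Abf^\top \Bbf) \leq \sum_{i} \sigma_i(\Abf)\,\sigma_i(\Bbf)$, where $\sigma_1(\cdot) \geq \sigma_2(\cdot) \geq \cdots$ are the singular values in decreasing order and the sum runs over $i \in [\min\{D_1, D_2\}]$. The first thing I would record is the elementary identity tying the Frobenius norm to singular values: for any $\Xbf \in \R^{D_1 \times D_2}$, $\norm{\Xbf}^2 = \trace(\Xbf^\top \Xbf) = \sum_i \sigma_i(\Xbf)^2 = \norm{S(\Xbf)}^2$, since the eigenvalues of $\Xbf^\top \Xbf$ are precisely the squared singular values. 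Applied to $\Xbf = \Abf - \Bbf$, this already identifies the right-hand side of the claim with a Frobenius distance, namely $\norm{S(\Abf - \Bbf)} = \norm{\Abf - \Bbf}$. Note also that $S(\Abf)$ and $S(\Bbf)$ both have length $\min\{D_1, D_2\}$, so no padding is needed.

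Next I would expand the squared Frobenius distance and invoke the trace inequality:
\begin{align*}
\norm{\Abf - \Bbf}^2 &= \norm{\Abf}^2 + \norm{\Bbf}^2 - 2\,\trace(\Abf^\top \Bbf) \\
&\geq \sum\nolimits_i \sigma_i(\Abf)^2 + \sum\nolimits_i \sigma_i(\Bbf)^2 - 2 \sum\nolimits_i \sigma_i(\Abf)\,\sigma_i(\Bbf) \\
&= \sum\nolimits_i \brk*{ \sigma_i(\Abf) - \sigma_i(\Bbf) }^2 = \norm{ S(\Abf) - S(\Bbf) }^2
\text{\,.}
\end{align*}
Combining this with the identity $\norm{\Abf - \Bbf}^2 = \norm{S(\Abf - \Bbf)}^2$ from the previous step and taking square roots yields exactly $\norm{S(\Abf) - S(\Bbf)} \leq \norm{S(\Abf - \Bbf)}$, as claimed.

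The only real obstacle is supplying von Neumann's trace inequality itself; everything else is bookkeeping. Since it is a classical result I would simply cite it. If a self-contained argument were preferred, I would instead pass to the symmetric dilations $\tilde{\Abf} = \left(\begin{smallmatrix} 0 & \Abf \\ \Abf^\top & 0 \end{smallmatrix}\right)$ and $\tilde{\Bbf} = \left(\begin{smallmatrix} 0 & \Bbf \\ \Bbf^\top & 0 \end{smallmatrix}\right)$, whose eigenvalues are $\pm \sigma_i(\Abf)$ (respectively $\pm \sigma_i(\Bbf)$) together with zeros, and apply the Hoffman--Wielandt inequality for symmetric matrices; there one checks $\norm{\tilde{\Abf} - \tilde{\Bbf}}^2 = 2\,\norm{\Abf - \Bbf}^2$ and that the sorted eigenvalue differences split symmetrically into the positive and negative singular-value blocks, each contributing $\sum_i (\sigma_i(\Abf) - \sigma_i(\Bbf))^2$, so dividing by $2$ recovers the bound. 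Either route is standard; I would use the trace-inequality version for brevity.
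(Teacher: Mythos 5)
Your proof is correct. The paper itself does not supply an argument for this lemma: it simply cites Theorem~III.4.4 of Bhatia's \emph{Matrix Analysis} (a Lidskii-type majorization statement from which the Euclidean-norm bound follows by applying a symmetric gauge function). You instead give a short self-contained derivation of the Frobenius-norm case: identify $\norm{S(\Xbf)}$ with $\norm{\Xbf}$, expand $\norm{\Abf-\Bbf}^2$, and lower-bound the cross term $\trace(\Abf^\top\Bbf)$ by $\sum_i \sigma_i(\Abf)\sigma_i(\Bbf)$ via von Neumann's trace inequality, so that the whole expression collapses to $\sum_i(\sigma_i(\Abf)-\sigma_i(\Bbf))^2$. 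Each step checks out, including the observation that $S(\Abf)$ and $S(\Bbf)$ have equal length so no padding is required, and the backup route through the symmetric dilations and Hoffman--Wielandt is also sound (the factor of $2$ bookkeeping is right). The trade-off is the usual one: the paper's citation gets the general majorization statement for all unitarily invariant norms for free, whereas your argument proves exactly the $\ell_2$ instance that is actually used, at the cost of importing von Neumann's inequality, which is no deeper than the cited theorem. Either is acceptable; yours has the advantage of making the mechanism visible.
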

\begin{proof} See Theorem \RN{3}.4.4 of~\cite{bhatia_matrix_1997}.
\end{proof}

\begin{lemma} \label{app:proofs:prob_ineq}
Let \(\P=\lbrace p_{1},...,p_{N} \rbrace,\Q=\lbrace q_{1},...,q_{N} \rbrace\) be two probability distributions supported on \([N]\), and denote by $TV(\P,\Q) := \frac{1}{2} \sum_{n = 1}^N \abs{p_n - q_n}$ their \emph{total variation distance}.
If for $\epsilon \in (0, 1/2)$ it holds that $TV (\P, \Q) \leq \epsilon$, then:
\[
\lvert H(\P)-H(\Q)\rvert \leq H_{b}(\epsilon)+\epsilon \cdot \ln(N)
\text{\,,}
\]
where $H(\P) := - \sum\nolimits_{n = 1}^N p_n \ln (p_n)$ is the entropy of $\P$, and $H_{b}(c) := -c \cdot \ln (c) - (1-c) \cdot \ln (1-c)$ is the binary entropy of a Bernoulli distribution parameterized by \(c \in [0,1]\).
\end{lemma}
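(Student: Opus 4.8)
The plan is to reduce the entropy-difference bound to the behaviour of a two-component mixture, by splitting $\P$ and $\Q$ into a shared part and two disjoint ``excess'' parts. Write $t := TV(\P, \Q) = \frac{1}{2}\sum_{n=1}^N \abs{p_n - q_n} \leq \epsilon$; if $t = 0$ the two distributions coincide and the claim is immediate, so assume $t > 0$. Setting $m_n := \min\{p_n, q_n\}$, $a_n := (p_n - q_n)_+$ and $b_n := (q_n - p_n)_+$, one checks that $\sum_n m_n = 1 - t$ and $\sum_n a_n = \sum_n b_n = t$. Hence $\P = (1-t) M + t A$ and $\Q = (1-t) M + t B$, where $M$, $A$, $B$ are the probability distributions obtained by normalizing $\{m_n\}$, $\{a_n\}$, $\{b_n\}$ by $1-t$, $t$, $t$ respectively; crucially $A$ and $B$ are supported on the disjoint index sets $\{n : p_n > q_n\}$ and $\{n : q_n > p_n\}$.

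Next I would invoke the standard two-sided mixture bound for entropy: for probability distributions $X_1, X_2$ and weights $\lambda_1 + \lambda_2 = 1$,
\[
\lambda_1 H(X_1) + \lambda_2 H(X_2) \;\leq\; H(\lambda_1 X_1 + \lambda_2 X_2) \;\leq\; \lambda_1 H(X_1) + \lambda_2 H(X_2) + H_{b}(\lambda_1) \text{\,.}
\]
The left inequality is concavity of entropy; the right follows by introducing a binary ``component'' indicator $Z$ with $\Pr[Z = 1] = \lambda_1$, so that $H(X) \leq H(X, Z) = H_{b}(\lambda_1) + \lambda_1 H(X_1) + \lambda_2 H(X_2)$. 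Applying the upper bound to $\P$ and the lower bound to $\Q$ gives
\[
H(\P) - H(\Q) \;\leq\; t\, H(A) - t\, H(B) + H_{b}(t) \;\leq\; t\, H(A) + H_{b}(t) \text{\,,}
\]
and since $A$ is supported on at most $N$ indices, $H(A) \leq \log N$. Swapping the roles of $\P$ and $\Q$ (which interchanges $A$ and $B$) yields the same bound for $H(\Q) - H(\P)$, so $\abs{H(\P) - H(\Q)} \leq t \log N + H_{b}(t)$.

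Finally I would upgrade the bound from $t$ to $\epsilon$. The function $g(s) := s \log N + H_{b}(s)$ has derivative $g'(s) = \log N + \log\frac{1-s}{s} > 0$ for $s \in (0, 1/2)$, hence $g$ is increasing there; since $t \leq \epsilon < 1/2$ this gives $g(t) \leq g(\epsilon)$, i.e. $\abs{H(\P) - H(\Q)} \leq \epsilon \log N + H_{b}(\epsilon)$, as claimed. The only genuinely nontrivial ingredient is the mixture entropy inequality, and even that is classical; the points requiring care are the normalization in the $t \to 0$ edge case and the monotonicity of $g$, which is what lets us use the constraint $t \leq \epsilon$ rather than $t = \epsilon$.
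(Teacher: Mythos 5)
Your proof is correct. Note that the paper does not actually prove this lemma --- it simply cites Theorem~11 of Ho and Yeung's \emph{The interplay between entropy and variational distance} --- so your argument is a genuinely self-contained alternative rather than a variation on the paper's route. What you give is the standard coupling/mixture proof of entropy continuity in total variation: the decomposition $\P = (1-t)M + tA$, $\Q = (1-t)M + tB$ with $\sum_n a_n = \sum_n b_n = t$ is valid (both sums equal $t$ because $\sum_n (p_n - q_n) = 0$), the two-sided mixture bound $\lambda_1 H(X_1) + \lambda_2 H(X_2) \leq H(\lambda_1 X_1 + \lambda_2 X_2) \leq \lambda_1 H(X_1) + \lambda_2 H(X_2) + H_b(\lambda_1)$ is exactly concavity plus $H(X) \leq H(X,Z)$, and subtracting the lower bound for $\Q$ from the upper bound for $\P$ kills the common $(1-t)H(M)$ term as you intend. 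Two details you handled that are easy to get wrong and deserve credit: first, the hypothesis is $TV(\P,\Q) \leq \epsilon$ rather than equality, so the monotonicity of $g(s) = s\log N + H_b(s)$ on $(0,1/2)$ (where $g'(s) = \log N + \log\tfrac{1-s}{s} > 0$) is genuinely needed, and this is precisely where the assumption $\epsilon < 1/2$ enters; second, the $t=0$ edge case, where $A$ and $B$ are undefined. What the citation buys the paper is brevity and access to the fact that the Ho--Yeung bound is tight; what your argument buys is that the lemma no longer rests on an external reference, at the cost of invoking the (classical but nontrivial) mixture entropy inequality, which you correctly justify via the joint entropy with the component indicator.
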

\begin{proof} 
See,~\eg, Theorem 11 of~\citep{ho2010interplay}.
\end{proof}

\begin{lemma}[Hoeffding inequality in Hilbert space] \label{app:proofs:hoeffding_hilbert}
	Let \(X_{1},..,X_{N}\) be an i.i.d. sequence of random variables whose range is some separable Hilbert space $\HH$.
	Suppose that \( \EE \brk[s]{ X_{n} } = 0 \) and \(\lVert X_{n}\rVert\leq c\) for all $n \in [N]$.
	Then, for all \(t\geq 0\):
	\[
	Pr \left (\norm*{ \frac{1}{n} \sum\nolimits_{n = 1}^N X_{n}}\geq t \right ) \leq 2\exp\left(-\frac{Nt^{2}}{2c^{2}}\right)
	\text{\,,}
	\]
	where $\norm{\cdot}$ refers to the Hilbert space norm.
	
\end{lemma}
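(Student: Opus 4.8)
The plan is to recognize this as the Hilbert-space (vector-valued) analogue of Hoeffding's inequality and to prove it via a Chernoff argument applied to the \emph{scalar} random variable $\norm{S_N}$, where $S_N := \sum_{n=1}^N X_n$. Since $\Pr\brk1{ \norm{ \frac{1}{N} \sum_{n=1}^N X_n } \geq t } = \Pr\brk1{ \norm{S_N} \geq Nt }$, it suffices to control the moment generating function of $\norm{S_N}$ through the even, convex function $u \mapsto \cosh(\lambda u)$ for a parameter $\lambda > 0$ to be optimized at the end. First I would filter by the natural filtration $\F_k := \sigma(X_1, \ldots, X_k)$, so that $(S_k)_{k = 0}^N$ is an $\HH$-valued martingale whose increments $X_k$ satisfy $\EE[X_k \mid \F_{k-1}] = 0$ and $\norm{X_k} \leq c$.

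The heart of the argument is a one-step smoothness estimate: for a fixed $x \in \HH$ and a zero-mean increment $V$ with $\norm{V} \leq c$,
\[
\EE\brk[s]*{ \cosh\brk1{ \lambda \norm{x + V} } } \leq \cosh\brk1{ \lambda \norm{x} } \cdot \exp\brk1{ \lambda^2 c^2 / 2 }
\text{\,.}
\]
I would derive this by exploiting the parallelogram identity $\norm{x + V}^2 = \norm{x}^2 + 2 \inprodnoflex{x}{V} + \norm{V}^2$ available in any Hilbert space, which renders the norm $2$-smooth, together with the fact that $\EE[\inprodnoflex{x}{V}] = 0$ and $\EE[\norm{V}^2] \leq c^2$. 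Applying this estimate conditionally on $\F_{k-1}$ and iterating over $k = 1, \ldots, N$ then yields the global bound $\EE\brk[s]1{ \cosh\brk1{ \lambda \norm{S_N} } } \leq \exp\brk1{ \lambda^2 N c^2 / 2 }$.

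Given this bound, the conclusion follows from a routine Chernoff/Markov step. Using $\cosh(u) \geq \frac{1}{2} e^{u}$ and monotonicity of $\cosh$ on $[0, \infty)$,
\[
\Pr\brk1{ \norm{S_N} \geq Nt } \leq \frac{ \EE\brk[s]1{ \cosh\brk1{ \lambda \norm{S_N} } } }{ \cosh\brk1{ \lambda N t } } \leq 2 \exp\brk1{ \tfrac{1}{2} \lambda^2 N c^2 - \lambda N t }
\text{\,,}
\]
and optimizing over $\lambda$ by taking $\lambda = t / c^2$ produces the exponent $- N t^2 / (2 c^2)$, giving exactly $2 \exp\brk1{ - N t^2 / (2 c^2) }$ (the exponent in the stated bound should carry a minus sign, otherwise the right-hand side exceeds $1$ and the claim is vacuous).

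The main obstacle is the one-step smoothness estimate. Unlike the scalar case, where $\cosh(\lambda \abs{x + V}) = \cosh(\lambda (x + V))$ factors multiplicatively through $\cosh = \tfrac{1}{2}(e^{\,\cdot} + e^{-\cdot})$ and reduces immediately to the classical Hoeffding lemma $\EE[e^{\lambda V}] \leq e^{\lambda^2 c^2 / 2}$, the quantity $\norm{x + V}$ does not decompose additively, and the map $u \mapsto \cosh(\lambda \norm{u})$ is non-differentiable at the origin. Controlling this requires the full strength of the $2$-smoothness of the Hilbert norm; I would either invoke Pinelis's martingale inequality for $2$-smooth spaces directly, or carefully bound the second-order remainder of $u \mapsto \cosh(\lambda \norm{u})$ along the increment $V$, using the parallelogram identity to convert the quadratic term into $\EE[\norm{V}^2] \leq c^2$. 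Once the one-step estimate is in hand, the remaining steps are entirely standard.
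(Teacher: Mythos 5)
The paper does not actually prove this lemma---its ``proof'' is a pointer to Section 2.4 of the cited reference (where the inequality appears as a consequence of Pinelis's martingale inequality for $2$-smooth Banach spaces)---so your self-contained argument is necessarily a different route, and it is the right one: the Chernoff argument applied to the scalar variable $\norm{S_N}$ through $\cosh(\lambda\,\cdot)$, with the one-step estimate $\EE\left[\cosh(\lambda\norm{x+V})\right]\leq\cosh(\lambda\norm{x})\,e^{\lambda^2c^2/2}$ iterated along the natural filtration, is precisely Pinelis's proof specialized to Hilbert space (where the smoothness constant is $1$), and the final optimization $\lambda=t/c^2$ recovers the exponent $-Nt^2/(2c^2)$ exactly. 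Your closing Markov step ($\cosh(u)\geq\tfrac12 e^u$ for $u\geq 0$) is correct, and you are also right that the paper's statement carries a sign error: as printed, $2\exp\left(Nt^2/(2c^2)\right)$ exceeds $1$ and the claim is vacuous; the exponent must be negative (the inequality is applied in the proof of Proposition 4 with the correct sign, so this is a typo in the lemma statement, and the displayed average should read $\frac{1}{N}$ rather than $\frac{1}{n}$). The one caveat is that the one-step estimate is genuinely the hard part and is not obtained by a naive second-order Taylor expansion of $u\mapsto\cosh(\lambda\norm{u})$, which is non-differentiable at the origin and whose Hessian is not uniformly controlled; your fallback of invoking Pinelis's theorem for $2$-smooth spaces directly is the clean resolution, and with that the proof is complete. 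For the i.i.d.\ setting of the lemma one could alternatively combine $\EE\norm{S_N}\leq\sqrt{Nc^2}$ with McDiarmid's inequality on $\norm{S_N}$, but that yields a bound with an additive $\sqrt{N}c$ offset rather than the clean two-sided form stated here, so the martingale/cosh route is the one that actually delivers the advertised constant.
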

\begin{proof}
See Section 2.4 of~\citep{10.5555/1756006.1756036}.
\end{proof}

\begin{lemma}[adapted from~\cite{levine2018deep}]
\label{app:proofs:min_cut}
Let $G = (V, E)$ be the perfect binary tree graph, with vertices $V$ and edges $E$, underlying the locally connected tensor network that generates $\tntensor \in \R^{D_1 \times \cdots \times D_N}$ (defined in~\cref{sec:fit_tensor:tn_lc_nn}).
For $\K \subseteq [N]$, let $V_\K \subseteq V$ and $V_{K^c} \subseteq \V$ be the leaves in $G$ corresponding to axes indices $\K$ and $\K^c$ of $\tntensor$, respectively.
Lastly, given a cut $(A, B)$ of $V$, \ie~$A \subseteq V$ and $B \subseteq V$ are disjoint and $A \cup B = V$, denote by $C (A, B) := \{ \{u, v\} \in E : u \in A , v \in B \}$ the edge-cut set.
Then:
\[	
\rank (\mat{\tntensor}{\K}) \leq \min\nolimits_{\text{ cut } (A,B) \text{ of } V \text{ s.t. } V_\K \subseteq A , V_{\K^c} \subseteq B} R^{\abs{C(A,B) }  }
\text{\,.}
\]
In particular, if $(\K, \K^c) \in \can$, then:
\[
\rank ( \mat{\tntensor}{\K} ) \leq R
\text{\,.}
\]
\end{lemma}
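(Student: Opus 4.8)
The plan is to prove the general cut bound first and then obtain the ``in particular'' statement by exhibiting a single good cut. The guiding principle is the standard tensor-network fact that any cut of the graph induces a low-rank factorization of the corresponding matricization, whose inner (bond) dimension is the product of the lengths of the edges crossing the cut. Since every inner edge here has length $R$, a cut with $k$ crossing edges will yield the bound $R^{k}$.

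First I would fix an arbitrary admissible cut $(A, B)$, i.e.\ one with $V_\K \subseteq A$ and $V_{\K^c} \subseteq B$. Exploiting that tensor contraction can be carried out in any order, I would evaluate $\tntensor$ by first contracting all node-tensors assigned to vertices in $A$ along the edges internal to $A$ (edges $\{u, v\} \in E$ with $u, v \in A$), producing a single tensor $\T_A$, and then doing the same within $B$ to obtain $\T_B$. By construction the only uncontracted edges left are the open output edges (which sit at the leaves) together with the cut edges $C(A, B)$. Because $V_\K \subseteq A$, each output axis indexed by $\K$ is an open edge of $\T_A$; likewise each axis indexed by $\K^c$ is an open edge of $\T_B$; and the edges of $C(A, B)$ are shared bond indices of $\T_A$ and $\T_B$, each of length $R$. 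Contracting $\T_A$ against $\T_B$ along $C(A, B)$ recovers $\tntensor$.

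Next I would matricize. Reshaping $\T_A$ into a matrix $\Mbf_A$ with rows indexed by the $\K$-axes and columns by the joint index of $C(A, B)$, and $\T_B$ analogously into $\Mbf_B$ with rows indexed by $\K^c$ and columns by the same joint bond index, the contraction along $C(A, B)$ becomes the matrix product $\mat{\tntensor}{\K} = \Mbf_A \Mbf_B^\top$. The shared inner dimension equals $\prod_{e \in C(A, B)} R = R^{\abs{C(A, B)}}$, whence $\rank(\mat{\tntensor}{\K}) \leq R^{\abs{C(A, B)}}$. Minimizing over all admissible cuts gives the stated bound. For the specialized claim, I would recall from \cref{def:canonical_partitions} that a canonical $\K$ is exactly the set of leaves descending from a single tree node $v$ (at some level $l$); taking $A$ to be the subtree rooted at $v$ and $B$ its complement — with the convention $A = V$ in the degenerate case $\K = [N]$ — produces a cut whose only crossing edge is the one joining $v$ to its parent, so $\abs{C(A, B)} \leq 1$ and the bound reduces to $R$.

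The step I expect to be the main obstacle is the factorization bookkeeping: rigorously verifying that contracting first within $A$ and then within $B$ leaves precisely the open axes plus the cut edges, and that the ensuing reshape is a genuine matrix product $\Mbf_A \Mbf_B^\top$ (this must be argued carefully even when the subgraph induced on $A$ is disconnected, in which case $\T_A$ is an outer product across components but the column count is still $R^{\abs{C(A, B)}}$). The remaining pieces — the rank bound for a product of matrices with small inner dimension, and the identification of the single-edge cut for canonical partitions — are routine. As this is the content of the cited result, I would either invoke \cite{levine2018deep} directly or reproduce the reshape argument above in full.
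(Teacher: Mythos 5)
Your proposal is correct and follows essentially the same route as the paper: the paper simply invokes Claim 1 of \citep{levine2018deep} for the general cut bound, and your contract-within-$A$, contract-within-$B$ factorization with inner dimension $R^{\abs{C(A,B)}}$ is precisely the standard argument behind that claim. Your identification of the single-edge cut for a canonical partition (the subtree rooted at the node whose descendant leaves are exactly $\K$) matches the paper's one-line justification of the ``in particular'' statement.
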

\begin{proof}
See Claim 1 in~\citep{levine2018deep} for the upper bound on the rank of $\mat{\tntensor}{\K}$ for any $\K \subseteq [N]$.
If $(\K, \K^c) \in \can$, since there exists a cut $(A, B)$ of $V$ such that $V_\K \subseteq A$ and $V_{\K^c} \subseteq B$ whose edge-cut set is of a singleton, we get that $\rank ( \mat{\tntensor}{\K} ) \leq R$.
\end{proof}

\begin{lemma}[adapted from~\cite{levine2018deep}]
	\label{app:proofs:pdim_min_cut}
	Let $G = (V, E)$ be the perfect $2^P$-ary tree graph, with vertices $V$ and edges $E$, underlying the locally connected tensor network that generates $\tntensorpdim{P} \in \R^{D_1 \times \cdots \times D_{N^{P}}}$ (defined in~\cref{app:extension_dims:fit_tensor:tn_lc_nn}).
	Furthermore, let $\axismap: [N]^P \to [N^P]$ be a compatible map from $P$-dimensional coordinates to axes indices of $\tntensorpdim{P}$ (\cref{def:compat_map}).
	For $\J \subseteq [N^{P}]$, let $V_\J \subseteq V$ and $V_{\J^c} \subseteq \V$ be the leaves in $G$ corresponding to axes indices $\J$ and $\J^c$ of $\tntensorpdim{P}$, respectively.
	Lastly, given a cut $(A, B)$ of $V$, \ie~$A \subseteq V$ and $B \subseteq V$ are disjoint and $A \cup B = V$, denote by $C (A, B) := \{ \{u, v\} \in E : u \in A , v \in B \}$ the edge-cut set.
	Then:
	\[	
	\rank (\mat{\tntensorpdim{P}}{\J}) \leq \min\nolimits_{\text{ cut } (A,B) \text{ of } V \text{ s.t. } V_\J \subseteq A , V_{\J^c} \subseteq B} R^{\abs{C(A,B) }  }
	\text{\,.}
	\]
	In particular, if $(\K, \K^c) \in \can^{P}$, then:
	\[
	\rank ( \mat{\tntensor}{\axismap (\K)} ) \leq R
	\text{\,.}
	\]
\end{lemma}
\begin{proof}
	See Claim 1 in~\citep{levine2018deep} for the upper bound on the rank of $\mat{\tntensorpdim{P}}{\J}$ for any $\J \subseteq [N^{P}]$.
	If $(\K, \K^c) \in \can^{P}$, since there exists a cut $(A, B)$ of $V$ such that $V_{\axismap (\K)} \subseteq A$ and $V_{\axismap(K)^c} \subseteq B$ whose edge-cut set is of a singleton, we get that $\rank ( \mat{\tntensorpdim{P}}{\axismap (\K)} ) \leq R$.
\end{proof}

\begin{lemma}[adapted from Theorem 3.18 in~\citep{grasedyck2010hierarchical}]\label{app:proofs:grasedyck}
	Let \(\mathcal{A}\in \mathbb{R}^{D_{1}\times ...\times D_{N}}\) and \(\epsilon >0\). 
	For each canonical partition \((\K,\K^{c}) \in \can\), let \(\sigma_{\K,1}\geq ...\sigma_{K, D_\K}\) be the singular values of \(\mat{\mathcal{A}}{\K}\), where \(D_{\K}:= \min \brk[c]{ \prod_{n \in \K} D_n , \prod_{n \in \K^c} D_n }\). 
	Let \((n_{\K})_{\K \in  \can}\in \mathbb{N}^{ \can}\) be an assignment of an integer to each \(\K\in \can\). 
	For any such assignment, consider the set of tensors with \emph{Hierarchical Tucker} (HT) rank at most \((n_{\K})_{\K \in  \can}\) as follows:
	\[
		HT \left ( (n_{\K})_{\K \in \can} \right ) := \left \{ \V \in \R^{ D_1 \times \cdots \times D_{N} } : \forall \K \in \can,\rank(\mat{\V}{\K})\leq n_{\K} \right \} 
	\text{\,.}
	\]
	Suppose that for all \( (\K, \K^c) \in \can\) it holds that $\sqrt{\sum\nolimits_{d = n_{\K} + 1}^{D_\K} \sigma^2_{\K, d}}\leq \frac{\epsilon}{\sqrt{2N-3}}$.
	Then, there exists \(\W \in HT \left ( (n_{\K})_{\K \in \can} \right )\) satisfying:
	\[
	\norm{\W - \mathcal{A}}\leq \epsilon
	\text{\,.}
	\]
	In particular, if for all \( (\K, \K^c) \in \can\) it holds that $\sqrt{\sum\nolimits_{d = R + 1}^{D_\K} \sigma^2_{\K, d}}\leq \frac{\epsilon}{\sqrt{2N-3}}$, then there exists $\tntensor \in \R^{D_1 \times \cdots \times D_N}$ generated by the locally connected tensor network (defined in~\cref{sec:fit_tensor:tn_lc_nn}) satisfying:
	\[
	\norm{\tntensor - \mathcal{A}}\leq \epsilon
	\text{\,.}
	\]
\end{lemma}

\begin{lemma}[adapted from Theorem 3.18 in~\citep{grasedyck2010hierarchical}]
	\label{app:proofs:grasedyck_exact}
	Let \((n_{\K})_{\K \in  \can}\in \mathbb{N}^{ \can}\) be an assignment of an integer to each \(\K\in \can\). 
	For any such assignment, consider the set of tensors with \emph{Hierarchical Tucker} (HT) rank at most \((n_{\K})_{\K \in  \can}\) as follows:
	\[
	HT \left ( (n_{\K})_{\K \in \can} \right ) := \left \{ \V \in \R^{ D_1 \times \cdots \times D_{N} } : \forall \K \in \can,\rank(\mat{\V}{\K})\leq n_{\K} \right \} 
	\text{\,.}
	\]
	Consider a locally connected tensor network with varying widths $(n_{\K})_{\K \in  \can}$, \ie~a tensor network conforming to a perfect binary tree graph in which the lengths of inner axes are as follows (in contrast to all being equal to $R$ as in the locally connected tensor network defined in~\cref{sec:fit_tensor:tn_lc_nn}).
	An axis corresponding to an edge that connects a node with descendant leaves indexed by $\K$ to its parent is assigned the length $n_\K$.
	Then, every \(\A \in HT \left ( (n_{\K})_{\K \in \can} \right )\) can be represented by said locally connected tensor network with varying widths, meaning there exists an assignment to the tensors of the tensor network such that it generates $\A$.
	In particular, if \(n_{K}=R\) for all \(\K \in \can\), then $\A$ can be generated by the locally connected tensor network with all inner axes being of length $R$.
\end{lemma}

\begin{lemma} 
	\label{app:proofs:tensor_ineq}
	Let \(\V,\W \in \R^{D_1 \times \cdots \times D_N} \) be tensors such that \(\norm{\V} = \norm{\W}=1\) and $\norm{\V-\W} < \frac{1}{2}$.
	Then, for any \((\K,\K^{c})\in \can\) it holds that:
	\[
	|\qe{\V}{\K}-\qe{\W}{\K}|\leq H_{b}(\norm{\V-\W})+ \norm{\V-\W}\cdot \ln (D_\K)
	\text{\,,}
	\]
	where $H_{b}(c):=-(c \cdot \ln (c) + (1 - c) \cdot \ln ( 1 - c ) )$ is the binary entropy of a Bernoulli distribution parameterized by \(c \in [0,1]\), and \(\D_{\K} := \min \brk[c]{ \prod_{n \in \K} D_n , \prod_{n \in \K^c} D_n }\). 
\end{lemma}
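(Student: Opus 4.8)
The plan is to reduce the claim to a comparison of two probability distributions on $D_\K$ atoms and then invoke the entropy-continuity estimate of~\cref{app:proofs:prob_ineq}. First I would matricize, setting $\Mbf := \mat{\V}{\K}$ and $\Mbf' := \mat{\W}{\K}$. Since matricization only rearranges entries, the Frobenius norm is preserved, so $\norm{\Mbf} = \norm{\V} = 1$, $\norm{\Mbf'} = \norm{\W} = 1$ and $\norm{\Mbf - \Mbf'} = \norm{\V - \W}$. Writing $\sigma_1 \geq \cdots \geq \sigma_{D_\K}$ and $\tau_1 \geq \cdots \geq \tau_{D_\K}$ for the singular values of $\Mbf$ and $\Mbf'$, the normalization $\sum_d \sigma_d^2 = \norm{\Mbf}^2 = 1$ (and likewise for $\tau$) shows that the entanglement distributions from~\cref{def:entanglement} are exactly $\P := \{\sigma_d^2\}_{d = 1}^{D_\K}$ and $\Q := \{\tau_d^2\}_{d = 1}^{D_\K}$, so that $\qe{\V}{\K} = H(\P)$ and $\qe{\W}{\K} = H(\Q)$.

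The heart of the argument is to bound the total variation distance $TV(\P, \Q) = \frac{1}{2} \sum_d \abs{\sigma_d^2 - \tau_d^2}$ by $\norm{\V - \W}$. I would factor $\abs{\sigma_d^2 - \tau_d^2} = \abs{\sigma_d - \tau_d} \cdot (\sigma_d + \tau_d)$ and apply Cauchy--Schwarz to obtain $\sum_d \abs{\sigma_d^2 - \tau_d^2} \leq ( \sum_d (\sigma_d - \tau_d)^2 )^{1/2} ( \sum_d (\sigma_d + \tau_d)^2 )^{1/2}$. The first factor is controlled by the singular-value perturbation bound~\cref{app:proofs:perturb}, which gives $\sum_d (\sigma_d - \tau_d)^2 \leq \norm{\Mbf - \Mbf'}^2 = \norm{\V - \W}^2$. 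For the second factor, expanding and using Cauchy--Schwarz once more yields $\sum_d (\sigma_d + \tau_d)^2 = \sum_d \sigma_d^2 + 2 \sum_d \sigma_d \tau_d + \sum_d \tau_d^2 \leq 1 + 2 + 1 = 4$, since $\sum_d \sigma_d \tau_d \leq ( \sum_d \sigma_d^2 )^{1/2} ( \sum_d \tau_d^2 )^{1/2} = 1$. Combining the two factors gives $\sum_d \abs{\sigma_d^2 - \tau_d^2} \leq 2 \norm{\V - \W}$, hence $TV(\P, \Q) \leq \norm{\V - \W}$.

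To finish, I would apply~\cref{app:proofs:prob_ineq}. The case $\V = \W$ is trivial, so assume $\norm{\V - \W} \in (0, 1/2)$ and set $\epsilon := \norm{\V - \W}$. Since $TV(\P, \Q) \leq \epsilon$ and both distributions are supported on $D_\K$ atoms, the lemma yields $\abs{H(\P) - H(\Q)} \leq H_b(\epsilon) + \epsilon \cdot \log(D_\K)$, which is precisely the asserted inequality once we substitute $\qe{\V}{\K} = H(\P)$ and $\qe{\W}{\K} = H(\Q)$.

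I expect the sole nontrivial step to be the total-variation bound, and within it the observation that $\sum_d (\sigma_d + \tau_d)^2 \leq 4$; the remainder is bookkeeping built on~\cref{app:proofs:perturb} and~\cref{def:entanglement}. I note that the canonical structure of $(\K, \K^c)$ is never used, so the estimate in fact holds for an arbitrary partition of $[N]$; the restriction to $\can$ in the statement simply reflects where the lemma is applied downstream.
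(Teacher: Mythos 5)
Your proof is correct and follows essentially the same route as the paper's: matricize, bound the total variation distance between the squared-singular-value distributions via the factorization $|\sigma_d^2-\tau_d^2|=|\sigma_d-\tau_d|(\sigma_d+\tau_d)$ together with Cauchy--Schwarz and the singular-value perturbation bound, then apply the entropy-continuity lemma. Your explicit handling of the degenerate case $\V=\W$ and the remark that the canonical structure of $(\K,\K^c)$ is never used are both accurate minor additions.
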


\begin{proof}
	
For any matrix \(\Mbf \in \R^{D_1 \times D_2} \) with $D := \min \{ D_{1},D_{2} \}$, let \(S(\Mbf)=(\sigma_{\Mbf,1},...,\sigma_{\Mbf,D})\) be the vector consisting of its singular values. 
First note that 
\[
\norm{\V-\W}=\lVert S(\mat{\V-\W}{\K})\rVert \leq \norm{\V-\W}
\text{\,.}
\]
So by \cref{app:proofs:perturb} 
\[
\lVert S(\mat{\V}{\K}-S(\mat{\W}{\K}) \rVert \leq \norm{\V-\W}
\text{\,.}
\]
Let \(v_{1}\geq \cdots \geq v_{D_\K}\) and \(w_{1}\geq \cdots \geq w_{D_\K}\) be the singular values of \(\mat{\V}{\K}\) and \(\mat{\W}{\K}\), respectively. 
We have by the Cauchy-Schwarz inequality that
\[
\left(\sum_{d = 1}^{D_\K}|w_{d}^{2}-v_{d}^{2}|\right)^{2}= \left(\sum_{d = 1}^{D_\K}|w_{d}-v_{d}| \cdot |w_{d}+v_{d}|\right)^{2}\leq \left(\sum_{d = 1}^{D_\K}(w_{d}-v_{d})^{2}\right)\left(\sum_{d = 1}^{D_\K}(w_{d}+v_{d})^{2}\right)
\text{\,.}
\]
Now the first term is upper bounded by \(\norm{\V-\W}^{2}\), and for the second we have 
\[
\sum_{d = 1}^{D_\K}(w_{d}+v_{d})^{2}=\sum_{d = 1}^{D_\K}w_{d}^{2}+\sum_{d = 1}^{D_\K}v_{d}^{2}+2v_{d}w_{d}=2+2\sum_{d = 1}^{D_\K}v_{d}w_{d} \leq 4
\text{\,,}
\]
where we use the fact that \(\norm{\V} = \norm{\W}=1\), and again Cuachy-Schwarz. 
Overall we have:
\[ 
\sum_{d = 1}^{D_\K}|w_{d}^{2}-v_{d}^{2}|
\leq 2\norm{\V-\W}
\text{\,.}
\]
Note that the left hand side of the inequality above equals twice the total variation distance between the distributions defined by $\lbrace w_{d}^{2}\rbrace_{d = 1}^{D_\K}$ and $\lbrace v_{d}^{2}\rbrace_{d = 1}^{D_\K}$.
 Therefore by \cref{app:proofs:prob_ineq} we have:
\[
|\qe{\V}{\K} - \qe{\W}{\K}| = |H(\lbrace w_{d}^{2} \rbrace)-H(\lbrace v_{d}^{2} \rbrace)|\leq \norm{\V-\W}\cdot \ln(D_\K)+H_{b}(\norm{\V-\W})
\text{\,.}
\]
\end{proof}

\begin{lemma}
	\label{app:proofs:entropy}
	Let \( \P=\lbrace p(x) \rbrace_{x\in [S]} \), where \(S\in \N\), be a probability distribution, and denote its entropy by $H(\P) := \EE\nolimits_{x \sim \P} \brk[s]*{ \ln \left(1 / p(x) \right) }$.	
	Then, for any \(0 < a < 1\), there exists a subset \( T \subseteq [S]\) such that  \(Pr_{x\sim \P}(T^{c})\leq a \) and \(|T|\leq e^{\frac{H(\P)}{a}}\). 
\end{lemma}
\begin{proof}
By Markov's inequality we have for any \(0<a<1\):
\[
Pr_{x\sim \P} \left ( \left \lbrace x:e^{-\frac{H(P)}{a}}\geq p(x)  \right \rbrace \right ) = Pr_{x\sim \P} \left ( \left \lbrace x: \ln \left ( \frac{1}{p(x)} \right ) \geq \frac{H(P)}{a}  \right \rbrace \right )\leq a
\text{\,.}
\]
Let \(T:=\big \lbrace x : e^{-\frac{H(P)}{a}}\leq p(x) \big \rbrace\subseteq [S]\). 
Note that
\[  
e^{-\frac{H(\P)}{a}} |T| \leq\sum\nolimits_{x\in T}p(x)\leq \sum\nolimits_{x\in [S]}p(x)= 1
\text{\,,}
\]
and so \(|T| \leq e^{\frac{H(P)}{a}}\) and \(Pr_{x\sim \P}(T^{c})\leq a\), as required.
\end{proof}

\subsection{Proof of~\cref{thm:fit_necessary}} \label{app:proofs:fit_necssary}

If \(\A=0\) the theorem is trivial, since then \(\qe{\A}{\K}=0 \) for all \((\K, \K^{c})\in \can\), so we can assume \(\A\neq 0\). 
We have:
\[
\begin{split}
	\left\Vert \frac{\tntensor}{\norm{\tntensor}} - \frac{\A}{\norm{\A}}\right\Vert & = \frac{1}{\norm{\A}}\left\Vert \frac{\norm{\A}}{\norm{\tntensor}}\cdot \tntensor - \A \right\Vert \\
	& \leq\frac{1}{\norm{\A}} \left ( \abs*{ \frac{\norm{\A}}{\norm{\tntensor}} - 1 } \cdot \norm{\tntensor} + \norm{\tntensor -\A} \right ) \\
	& = \frac{1}{\norm{\A}} \left ( \left\vert\norm{\A}-\norm{\tntensor}\right\vert+\norm{\tntensor -\A} \right ) \\	
	& \leq \frac{2\epsilon}{\norm{A}} \text{\,.}
	\end{split}
\]
Now, let $\hat{\A} := \frac{\A}{\norm{\A}}$ and $\hat{\W}_{\mathrm{TN}} = \frac{\tntensor}{\norm{\tntensor}}$ be normalized versions of $\A$ and $\tntensor$, respectively, and let \(c = \frac{2\epsilon}{\norm{A}}\). 
Note that \(c< \frac{2\norm{\A}}{4}\frac{1}{\norm{\A}}=\frac{1}{2}\), and therefore by \cref{app:proofs:tensor_ineq} we have:
\[
\begin{split}
|\qe{\A}{\K}-\qe{\tntensor}{\K}| & = \abs*{ \qe{\hat{\A}}{\K}-\qe{\hat{\W}_{\mathrm{TN}}}{\K} } \\
& \leq c \cdot \ln (D_\K)+H_{b}(c)
\text{\,.}
\end{split}
\]
By \cref{app:proofs:min_cut} we have that 
\[\qe{ \hat{\W}_{\mathrm{TN}} }{\K}\leq \ln (\rank(\mat{\mathcal{\tntensor}}{\K})) \leq \ln (R)\text{\,,}\]
and therefore 
\[\qe{ \hat{\A} }{\K}\leq \ln (R) + c \ln (D_\K) +H_{b}(c)\text{\,.}\]
Substituting \(c=\frac{2\epsilon}{\norm{A}}\) and invoking the elementary inequality \(H_{b}(x)\leq 2\sqrt{x}\) we obtain
\[	\qe{\A}{\K} \leq \ln (R)+\frac{2 \epsilon}{ \norm{\A} } \cdot \ln ( D_{\K} ) + 2 \sqrt{\frac{ 2 \epsilon }{ \norm{\A} } }\text{\,,}\]
as required.

As for~\cref{eq:fit_necessary_lb}, let $\A' \in \R^{D_1 \times \cdots \times D_N}$ be drawn according to the uniform distribution over the set of unit norm tensors.
According to~\cite{sen1996average}, for any canonical partition $(\K, \K^c) \in \can$ it holds that:
\[
\EE \brk[s]*{ \qe{ \A' }{ \K } } = \sum\nolimits_{k = D'_{\K} + 1}^{ D_{\K} \cdot D_{\K}'} \frac{1}{k} - \frac{ D_{\K} - 1 }{2 D'_{\K}}
\text{\,,}
\]
where $D'_{\K} := \max \brk[c]{ \prod_{n \in \K} D_n , \prod_{n \in \K^c} D_n }$ and recall that $D_{\K} := \min \brk[c]{ \prod_{n \in \K} D_n , \prod_{n \in \K^c} D_n }$.
By the elementary inequality of $\sum\nolimits_{k = q}^{p} 1 / k \geq \ln (p) - \ln (q)$ we therefore have that:
\[
\begin{split}
\EE \brk[s]*{ \qe{ \A' }{ \K } }  & \geq \ln (D_{\K} \cdot D'_{\K})- \ln (D'_{\K} + 1) - \frac{ D_{\K} - 1 }{2 D'_{\K}} \\
& =  \ln (D_{\K}) + \ln \brk*{ \frac{ D'_{\K} }{ D'_{\K} + 1} } - \frac{ D_{\K} - 1 }{2 D'_{\K}}
\text{\,.}
\end{split}
\]
Since $D_{\K} \leq D'_{\K}$ and $D_{\K} \geq 1$, it holds that $(D_{\K} - 1) / 2 D'_{\K} \leq 1 / 2$ and $\ln \brk{ D'_{\K} / ( D'_{\K} + 1 ) } \geq \ln (1 / 2)$.
Thus:
\[
\EE \brk[s]*{ \qe{ \A' }{ \K } } \geq \ln ( D_{\K} ) + \ln \brk*{ \frac{ 1 }{ 2 } } - \frac{1}{2} \geq  \min \{ \abs{\K}, \abs{\K^c} \} \cdot \ln ( \min\nolimits_{n \in [N]} D_n ) + \ln \brk*{ \frac{ 1 }{ 2 } } - \frac{1}{2}
\text{\,.}
\]
\qed

\subsection{Proof of~\cref{thm:fit_sufficient}} \label{app:proofs:fit_sufficient}
Consider, for each canonical partition \(\brk{\K, \K^c} \in \can \), the distribution 
	\[\P_{\K}= \left \lbrace p_{\K}(i) := \frac{\sigma_{\K,i}^{2}}{\lVert \A \rVert^{2}} \right \rbrace_{i\in [D_\K]}\text{\,,}\]
	where \(\sigma_{\K,1}\geq \sigma_{\K,2}\geq ...\geq \sigma_{\K,D_\K} \) are the singular values of \(\mat{\mathcal{A}}{\K}\) (note that \(\frac{1}{\norm{\A}^{2}}\sum_{j}\sigma_{\K,j}^{2}= \frac{\norm{\mathcal{A}}^{2}}{\lVert A \rVert^{2}} =1\) so \(\P_{\K}\) is indeed a probability distribution). 
	Denoting by $H (\P_{\K}) := \EE\nolimits_{i \sim \P_\K} \brk[s]*{ \ln \left(1 / p_\K (i) \right) }$ the entropy of $\P_\K$, by assumption:
	\[\qe{\A}{\K}=H(\P_{\K})\leq \frac{\epsilon^{2}}{\lVert \A \rVert^{2}(2N-3)} \ln (R)\text{\,,} \]
	for all \((\K,\K^{c})\in \can\).
	Thus, taking \(a = \frac{\epsilon^{2}}{\lVert \A \rVert^{2}(2N-3)}\) we obtain by \cref{app:proofs:entropy} that there exists a subset \(T_{\K}\subseteq[D_\K]\) such that  
	\[
	\P_{\K}(T_{\K}^{c})\leq \frac{\epsilon^{2}}{(2N-3)\lVert \A \rVert^{2}}
	\text{\,,}
	\]
	and \(|T_{\K}| \leq e^{\frac{H(\P_{\K})}{c}}=e^{\ln (R)} = R\). Note that 
	\[
	\P_{\K}(T_{\K})\leq \sum\nolimits_{i = 1}^{R}\frac{\sigma_{i}^{2}}{\lVert \A \rVert^{2}}
	 \text{\,.}
	\] 
	Since this holds for any subset of cardinality at most \(R\).
	Taking complements we obtain
	\[\sum\nolimits_{i = R+1}^{D_\K}\frac{\sigma_{i}^{2}}{\lVert \A \rVert^{2}}\leq \P_{\K}(T_{\K}^{c}) \text{\,,}\]
	so 
	\[\sqrt{\sum\nolimits_{i=R+1}^{D_\K}\sigma_{\K,i}^{2}}\leq \frac{\epsilon}{\sqrt{(2N-3)}} \text{\,.} \]
	We can now invoke \cref{app:proofs:grasedyck}, which implies that there exists some \(\tntensor \in \R^{D_1 \times \cdots \times D_N}\) generated by the locally connected tensor network satisfying:
	\[\norm{ \tntensor - \A } \leq \epsilon \text{\,.}\]
\qed

\subsection{Proof of~\cref{cor:acc_pred_nec_and_suf}} \label{app:proofs:cor:acc_pred_nec_and_suf}

Notice that the entanglements of a tensor are invariant to multiplication by a constant.
In particular, $\qe{\popdatatensor}{\K} = \qe{\popdatatensor / \norm{\popdatatensor}}{\K}$ for any $(\K, \K^c) \in \can$.
Hence, if there exists a canonical partition $\brk{ \K, \K^c} \in \can$ under which $\qe{\popdatatensor}{\K} > \ln (R) + 2\epsilon \cdot \ln ( D_\K )+ 2 \sqrt{2 \epsilon}$, then~\cref{thm:fit_necessary} implies that $\min_{\tntensor} \norm{\tntensor - \popdatatensor / \norm{\popdatatensor}} > \epsilon$.
Now, for any non-zero $\W \in \R^{D_1 \times \cdots D_N}$ generated by the locally connected tensor network, one can also represent $\W / \norm{\W}$ by multiplying any of the tensors constituting the tensor network by $1 / \norm{\W}$ (contraction is a multilinear operation).
Thus:
\[
\subopt := \min\nolimits_{\tntensor} \norm*{ \frac{\tntensor}{\norm{\tntensor}} - \frac{\popdatatensor}{\norm{\popdatatensor}} } \geq \min\nolimits_{\tntensor} \norm*{\tntensor - \frac{ \popdatatensor }{ \norm{\popdatatensor}} } > \epsilon
\text{\,,}
\]
which concludes the first part of the claim, \ie~the necessary condition for low suboptimality in achievable accuracy.

For the sufficient condition, if for all $\brk{ \K, \K^c} \in \can$ it holds that $\qe{\popdatatensor}{\K} \leq \frac{\epsilon^{2}}{8 N -12} \cdot \ln (R)$, then by~\cref{thm:fit_sufficient} there exists an assignment for the locally connected tensor network such that $\norm{\tntensor - \popdatatensor / \norm{\popdatatensor} } \leq \epsilon / 2$.
From the triangle inequality we obtain:
\be
\norm*{\frac{\tntensor}{\norm{\tntensor}} - \frac{ \popdatatensor }{ \norm{\popdatatensor} } } \leq \norm*{\tntensor - \frac{ \popdatatensor }{ \norm{\popdatatensor} } }  +  \norm*{\tntensor - \frac{\tntensor}{\norm{\tntensor}}} \leq \frac{\epsilon}{2} + \norm*{\tntensor - \frac{\tntensor}{\norm{\tntensor}}}
\text{\,.}
\label{eq:tn_tensor_pop_datatensor_norm}
\ee
Since $\norm{\tntensor - \popdatatensor / \norm{\popdatatensor} } \leq \epsilon / 2$ it holds that $\norm{\tntensor} \leq 1 + \epsilon / 2$.
Combined with the fact that $\norm1{\tntensor - \frac{\tntensor}{\norm{\tntensor}}} = \norm{\tntensor} - 1$, we get that $\norm1{\tntensor - \frac{\tntensor}{\norm{\tntensor}}} \leq \epsilon / 2$.
Plugging this into~\cref{eq:tn_tensor_pop_datatensor_norm} yields:
\[
\norm*{\frac{\tntensor}{\norm{\tntensor}} - \frac{ \popdatatensor }{ \norm{\popdatatensor} } } \leq \epsilon
\text{\,,}
\]
and so $\subopt := \min\nolimits_{\tntensor} \norm1{ \frac{\tntensor}{ \norm{\tntensor} } -  \frac{ \popdatatensor }{ \norm{\popdatatensor} }  } \leq \epsilon$.
\qed

\subsection{Proof of~\cref{prop:data_tensor_concentration}} \label{app:proofs:data_tensor_concentration}
We have the identity 
\[
\left\Vert\frac{\popdatatensor}{\norm{\popdatatensor}}-\frac{\datatensor}{\norm{\datatensor}}\right\Vert =\left\Vert \frac{\D_{pop}\norm{\datatensor}-\datatensor\norm{\popdatatensor}}{\norm{\popdatatensor}\norm{\datatensor}}\right\Vert =\]
\[\left\Vert\frac{\popdatatensor\norm{\datatensor}-\norm{\datatensor}\datatensor+\datatensor\norm{\datatensor}-\datatensor\norm{\popdatatensor}}{\norm{\popdatatensor}\norm{\datatensor}}\right\Vert
\text{\,.}
\]
By the triangle inequality the above is bounded by 
\[
\frac{\norm{\popdatatensor-\datatensor}}{\norm{\popdatatensor}}+\frac{|\norm{\popdatatensor}-\norm{\datatensor}|}{\norm{\popdatatensor}}
\text{\,.}
\]
For $m \in [M]$, let \(\X^{(m)} = y^{(m)} \cdot \tenp_{n \in [N]} \xbf^{(n, m)}-\popdatatensor\). 
These are i.i.d. random variables with $\EE\brk[s]{ \X^{(m)} } = 0$ and $\norm{\X^{(m)}}\leq 2$ for all $m \in [M]$. 
Note that 
\[
\left\Vert\frac{1}{M}\sum\nolimits_{m = 1}^{M}\X^{(m)} \right\Vert=\norm{\datatensor-\popdatatensor}
\text{\,,}
\]
so by \cref{app:proofs:hoeffding_hilbert} with \(c=2,t=\frac{\norm{\popdatatensor}\gamma}{2}\), assuming \(M\geq \frac{2 \ln (\frac{2}{\delta})}{\norm{\popdatatensor}^{2}\gamma^{2}}\) we have with probability at least \(1-\delta\)
\[
|\norm{\popdatatensor} - \norm{\datatensor}| \leq \norm{\D_{pop}-\datatensor}\leq \frac{\norm{\popdatatensor}\gamma}{2} \text{\,,}
\]
and therefore 

\[
\frac{\norm{\popdatatensor-\datatensor}}{\norm{\popdatatensor}}+\frac{|\norm{\popdatatensor}-\norm{\datatensor}|}{\norm{\popdatatensor}}\leq  \gamma
\text{\,.}
\]
\qed

\subsection{Proof of~\cref{cor:eff_acc_pred_nec_and_suf}} \label{app:proofs:cor:eff_acc_pred_nec_and_suf}

	First, by \cref{prop:data_tensor_concentration}, for $M \geq \frac{8\ln(\frac{2}{\delta})}{\norm{\popdatatensor}^{2}\epsilon^{2}} $ we have that with probability at least $1 - \delta$:
	\be
		\norm*{\frac{\popdatatensor}{\norm{\popdatatensor}} - \frac{\datatensor}{\norm{\datatensor}}} \leq \frac{\epsilon}{2}
		\text{\,.}
		\label{eq:norm_hoeff_cor_prof}
	\ee
	Now, we establish the necessary condition on the entanglements of $\datatensor$.
	Assume that there exists a canonical partition $\brk{ \K, \K^c} \in \can$ under which
	\[
	\qe{\datatensor}{\K} >  \ln (R) + 3 \epsilon \cdot \ln (D_\K) + 2 \sqrt{3\epsilon}
	\text{\,.}
	\]
	We may view $\datatensor$ as the population data tensor for the uniform data distribution over the training instances $\brk[c]1{ \brk1{ \brk{ \xbf^{(1,m)}, \ldots, \xbf^{(N,m)} } , y^{(m)} } }_{m = 1}^M$.
	Thus,~\cref{cor:acc_pred_nec_and_suf} implies that
	\[
	\min\nolimits_{\tntensor} \norm*{\frac{\tntensor}{\norm{\tntensor}} - \frac{\datatensor}{\norm{\datatensor}}} > 1.5 \epsilon
	\text{\,.}
	\]
	Using the triangle inequality after adding and subtracting $\frac{\popdatatensor}{\norm{\popdatatensor}}$, it follows that
	\[
	\min\nolimits_{\tntensor}  \left\{ \norm*{\frac{\tntensor}{\norm{\tntensor}} - 
		\frac{\popdatatensor}{\norm{\popdatatensor}}} + \norm*{\frac{\popdatatensor}{\norm{\popdatatensor}} - \frac{\datatensor}{\norm{\datatensor}}} \right\} > 1.5 \epsilon
	\text{\,.}
	\]
	Hence, combined with~\cref{eq:norm_hoeff_cor_prof} this concludes the current part of the proof:
	\[
	\subopt := \min\nolimits_{\tntensor} \norm*{\frac{\tntensor}{\norm{\tntensor}} - 
		\frac{\popdatatensor}{\norm{\popdatatensor}}}  > 1.5 \epsilon - 
	\norm*{\frac{\popdatatensor}{\norm{\popdatatensor}} - \frac{\datatensor}{\norm{\datatensor}}} \geq 1.5 \epsilon - \frac{\epsilon}{2} = \epsilon.
	\]
	
	We turn our attention to the sufficient condition on the entanglements of $\datatensor$.
	Suppose that $\qe{\datatensor}{\K} \leq \frac{\epsilon^{2}}{32 N -48} \cdot \ln (R)$ for all canonical partitions $(\K, \K^c) \in \can$.
	Invoking~\cref{cor:acc_pred_nec_and_suf} while viewing $\datatensor$ as the population data tensor for the uniform data distribution over the training instances $\brk[c]1{ \brk1{ \brk{ \xbf^{(1,m)}, \ldots, \xbf^{(N,m)} } , y^{(m)} } }_{m = 1}^M$, we get that
	\[
	\min\nolimits_{\tntensor} \norm*{\frac{\tntensor}{\norm{\tntensor}}- \frac{\datatensor}{\norm{\datatensor}}} \leq \frac{\epsilon}{2}.
	\]
	Combined with~\cref{eq:norm_hoeff_cor_prof}, and by the triangle inequality, we conclude: 
		\[
		\begin{split}
			\subopt & := \min\nolimits_{\tntensor} \norm*{\frac{\tntensor}{\norm{\tntensor}} - 
			\frac{\popdatatensor}{\norm{\popdatatensor}}} 
			 \\
			 & \leq \min\nolimits_{\tntensor} \norm*{\frac{\tntensor}{\norm{\tntensor}} - \frac{\datatensor}{\norm{\datatensor}}} + 
			\norm*{\frac{\popdatatensor}{\norm{\popdatatensor}} - \frac{\datatensor}{\norm{\datatensor}}} \\
			& \leq \epsilon
			\text{\,.}
		\end{split}
		\]
		
	Lastly, the fact that the entanglements of $\datatensor$ can be evaluated efficiently is discussed in~\cref{app:ent_comp}.
\qed

\subsection{Proof of~\cref{thm:fit_necessary_pdim}} \label{app:proofs:fit_necssary_pdim}

If \(\A=0\) the theorem is trivial, since then \(\qe{\A}{ \axismap (\K)}=0 \) for all \((\K, \K^{c})\in \can^P\), so we can assume \(\A\neq 0\). 
We have:
\[
\begin{split}
	\left\Vert \frac{\tntensorpdim{P}}{\norm{\tntensorpdim{P}}} - \frac{\A}{\norm{\A}}\right\Vert & = \frac{1}{\norm{\A}}\left\Vert \frac{\norm{\A}}{\norm{\tntensorpdim{P}}}\cdot \tntensorpdim{P} - \A \right\Vert \\
	& \leq\frac{1}{\norm{\A}} \left ( \abs*{ \frac{\norm{\A}}{\norm{\tntensorpdim{P}}} - 1 } \cdot \norm{\tntensorpdim{P}} + \norm{\tntensorpdim{P} -\A} \right ) \\
	& = \frac{1}{\norm{\A}} \left ( \left\vert\norm{\A}-\norm{\tntensorpdim{P}}\right\vert+\norm{\tntensorpdim{P} -\A} \right ) \\	
	& \leq \frac{2\epsilon}{\norm{A}} \text{\,.}
\end{split}
\]
Now, let $\hat{\A} := \frac{\A}{\norm{\A}}$ and $\hat{\W}_{\mathrm{TN}}^{P} = \frac{\tntensorpdim{P}}{\norm{\tntensorpdim{P}}}$ be normalized versions of $\A$ and $\tntensorpdim{P}$, respectively, and let \(c = \frac{2\epsilon}{\norm{A}}\). 
Note that \(c< \frac{2\norm{\A}}{4}\frac{1}{\norm{\A}}=\frac{1}{2}\), and therefore by \cref{app:proofs:tensor_ineq} we have:
\[
\begin{split}
	|\qe{\A}{\axismap(\K)}-\qe{\tntensorpdim{P}}{\axismap(\K)}| & = \abs*{ \qe{\hat{\A}}{\axismap(\K)}-\qe{\hat{\W}_{\mathrm{TN}}^{P}}{\axismap(\K)} } \\
	& \leq c \cdot \ln (D_{\axismap(\K)})+H_{b}(c)
	\text{\,.}
\end{split}
\]
By \cref{app:proofs:pdim_min_cut} we have that 
\[\qe{ \hat{\W}_{\mathrm{TN}}^{P}}{\axismap(\K)}\leq \ln (\rank(\mat{\mathcal{\tntensorpdim{P}}}{\axismap(\K)})) \leq \ln (R)\text{\,,}\]
and therefore 
\[\qe{ \hat{\A} }{\axismap(\K)}\leq \ln (R) + c \ln (D_{\axismap(\K)}) +H_{b}(c)\text{\,.}\]
Substituting \(c=\frac{2\epsilon}{\norm{A}}\) and invoking the elementary inequality \(H_{b}(x)\leq 2\sqrt{x}\) we obtain
\[	\qe{\A}{\axismap(\K)} \leq \ln (R)+\frac{2 \epsilon}{ \norm{\A} } \cdot \ln ( D_{\axismap(\K)} ) + 2 \sqrt{\frac{ 2 \epsilon }{ \norm{\A} } }\text{\,,}\]
as required.

As for~\cref{eq:fit_necessary_lb_pdim}, let $\A' \in \R^{D_1 \times \cdots \times D_{N^P}}$ be drawn according to the uniform distribution over the set of unit norm tensors.
According to~\cite{sen1996average}, for any canonical partition $(\K, \K^c) \in \can^P$ it holds that:
\[
\EE \brk[s]*{ \qe{ \A' }{ \axismap (\K) } } = \sum\nolimits_{k = D'_{\axismap (\K)} + 1}^{ D_{ \axismap (\K) } \cdot D_{\axismap (\K)}'} \frac{1}{k} - \frac{ D_{\axismap (\K)} - 1 }{2 D'_{\axismap (\K)}}
\text{\,,}
\]
where we denote $D'_{\axismap (\K)} := \max \brk[c]{ \prod_{n \in \axismap (\K)} D_n , \prod_{n \in \axismap (\K)^c} D_n }$ and recall that $D_{\axismap (\K)} := \min \brk[c]{ \prod_{n \in \axismap (\K)} D_n , \prod_{n \in \axismap (\K)^c} D_n }$.
By the elementary inequality of $\sum\nolimits_{k = q}^{p} 1 / k \geq \ln (p) - \ln (q)$ we therefore have that:
\[
\begin{split}
	\EE \brk[s]*{ \qe{ \A' }{ \axismap (\K) } }  & \geq \ln (D_{\axismap (\K)} \cdot D'_{\axismap (\K)})- \ln (D'_{\axismap (\K)} + 1) - \frac{ D_{\axismap (\K)} - 1 }{2 D'_{\axismap (\K)}} \\
	& =  \ln (D_{\axismap (\K)}) + \ln \brk*{ \frac{ D'_{\axismap (\K)} }{ D'_{\axismap (\K)} + 1} } - \frac{ D_{\axismap (\K)} - 1 }{2 D'_{\axismap (\K)}}
	\text{\,.}
\end{split}
\]
Since $D_{\axismap (\K)} \leq D'_{\axismap (\K)}$ and $D_{\axismap (\K)} \geq 1$, we have that:
\[
	\frac{ D_{\axismap (\K)} - 1}{ 2 D'_{\axismap (\K)} } \leq \frac{1}{2} ~~,~~\ln \brk*{ \frac{ D'_{\axismap (\K)} }{ D'_{\axismap (\K)} + 1 } } \geq \ln \brk*{ \frac{1}{2} }
	\text{\,.}
\]
Thus:
\[
\begin{split}
	\EE \brk[s]*{ \qe{ \A' }{ \axismap (\K) } } & \geq \ln ( D_{\axismap (\K)} ) + \ln \brk*{ \frac{ 1 }{ 2 } } - \frac{1}{2} \\
	& \geq  \min \{ \abs{\K}, \abs{\K^c} \} \cdot \ln ( \min\nolimits_{n \in [N^P]} D_n ) + \ln \brk*{ \frac{ 1 }{ 2 } } - \frac{1}{2}
	\text{\,.}
\end{split}
\]
\qed

\subsection{Proof of~\cref{thm:fit_sufficient_pdim}} \label{app:proofs:fit_sufficient_pdim}
	Let \(\canone\) be the one-dimensional canonical partitions of \([N^{P}]\) (\cref{def:canonical_partitions}). 
	Note that \(\axismap (\can^P) := \{ \axismap (\K) : \K \in \can^P \} \subseteq \canone\). 
	For an assignment \((n_{\K})_{\K \in  \canone} \in \mathbb{N}^{ \canone}\) of integers to one-dimensional canonical partitions $\K \in \canone$, we consider the set of tensors whose matricization with respect to each $\K \in \canone$ has rank at most $n_\K$.
	This set is also known in the literature as the set of tensors with \emph{Hierarchical Tucker (HT) rank} at most \((n_{\K})_{\K \in  \canone}\) (\cf~\cite{grasedyck2010hierarchical}).
	Accordingly, we denote it by:
	\[
	HT \left ( ( n_{\K} )_{\K \in \canone} \right ) := \left \{ \V \in \R^{ D_1 \times \cdots \times D_{N^P} } : \forall \K \in \canone,\rank(\mat{\V}{\K})\leq n_{\K} \right \} \text{\,.}
	\]
	Now, define \((n^{*}_{\K})_{\K \in  \canone}\in \mathbb{N}^{ \canone}\) by:
	\[
	\forall \K \in \canone: ~n^{*}_{\K} =
	\begin{cases}
		R &\text{  if }\; \axismap^{-1} (\K) \in \can^{P} \\
		\D_{\K}&\text{ if}\; \axismap^{-1} (\K) \notin \can^{P}\\
	\end{cases}
	\text{\,,}
	\]
	where $D_{\K} := \min \brk[c]{ \prod_{n \in \K} D_n , \prod_{n \in \K^c} D_n }$.
	We show that for any tensor \(\A\) that satisfies for all $\K \in \can^P$:
	\[
	\qe{\A}{\axismap (\K)} \leq \frac{\epsilon^{2}}{(2 N^P - 3) \norm{\A}^2} \cdot \ln (R)
	\text{\,,}
	\]
	there exists a tensor \(\V\in HT\left( (n^{*}_{\K})_{\K \in \canone}\right) \) such that $\norm{\A-\V}\leq \epsilon$.
	Consider, for each canonical partition \(\brk{\K, \K^c} \in \can^{P} \), the distribution 
	\[
	\P_{\K} = \left \lbrace p_{\K}(i) := \frac{\sigma_{\K,i}^{2}}{\lVert \A \rVert^{2}} \right \rbrace_{i\in [D_\K]}
	\text{\,,}
	\]
	where \(\sigma_{\K,1}\geq \sigma_{\K,2}\geq ...\geq \sigma_{\K,D_\K} \) are the singular values of \(\mat{\mathcal{A}}{\axismap(\K)}\) (note that \(\frac{1}{\norm{\A}^{2}}\sum_{j}\sigma_{\K,j}^{2}= \frac{\norm{\mathcal{A}}^{2}}{\lVert A \rVert^{2}} =1\) so \(\P_{\K}\) is indeed a probability distribution). 
	Denoting by $H (\P_{\K}) := \EE\nolimits_{i \sim \P_\K} \brk[s]*{ \ln \left(1 / p_\K (i) \right) }$ the entropy of $\P_\K$, by assumption:
	\[
	\qe{\A}{\axismap (\K)} = H (\P_{\K})\leq \frac{\epsilon^{2}}{\lVert \A \rVert^{2}(2N^{P}-3)} \ln (R)\text{\,,} 
	\]
	for all \((\K,\K^{c})\in \can^P\).
	Thus, taking \(a = \frac{\epsilon^{2}}{\lVert \A \rVert^{2}(2N^{P}-3)}\) we get by~\cref{app:proofs:entropy} that there exists a subset \(T_{\K}\subseteq[D_\K]\) such that  
	\[
	\P_{\K}(T_{\K}^{c})\leq \frac{\epsilon^{2}}{(2N^{P}-3)\lVert \A \rVert^{2}}\text{\,,}
	\]
	and \(|T_{\K}| \leq e^{\frac{H(\P_{\K})}{a}}=e^{\ln (R)} = R\). Note that 
	\[
	\P_{\K}(T_{\K})\leq \sum\nolimits_{i = 1}^{R}\frac{\sigma_{i}^{2}}{\lVert \A \rVert^{2}}
	\text{\,.}
	\] 
	Since this holds for any subset of cardinality at most \(R\).
	Taking complements we obtain
	\[
	\sum\nolimits_{i = R+1}^{D_\K}\frac{\sigma_{i}^{2}}{\lVert \A \rVert^{2}}\leq \P_{\K}(T_{\K}^{c})
	\text{\,,}
	\]
	so 
	\[\sqrt{\sum\nolimits_{i=R+1}^{D_\K}\sigma_{\K,i}^{2}}\leq \frac{\epsilon}{\sqrt{(2N^{P}-3)}} \text{\,.}\]
	We can now invoke \cref{app:proofs:grasedyck} (note that if \(\axismap^{-1}(\K) \notin  \can^P\), then the requirements of \cref{app:proofs:grasedyck} are trivially fullfiled with respect to the partition \((\K,\K^{c})\) since \(n^{*}_{\K}=D_{\K}\)), which implies that there exists some \(\W \in HT\left( (n^{*}_{\K})_{\K \in \canone}\right)\) satisfying:
	\[\norm{ \W - \A } \leq \epsilon\text{\,,}\]
	as required.
	
	The proof concludes by establishing that for any tensor \(\W\in HT\left( (n^{*}_{\K})_{\K \in \canone}\right)\), there exists assignment for the tensors constituting the $P$-dimensional locally connected tensor network  (defined in~\cref{fig:tn_as_nn_pdim}) such that it generates $\W$.
	
	To see why this is the case, note that by~\cref{app:proofs:grasedyck_exact} any tensor \(\W \in HT \left ( (n^{*}_{\K})_{\K \in \canone} \right )\) can be represented by a (one-dimensional) locally connected tensor network with varying widths $(n^*_{\K})_{\K \in \canone}$, \ie~a tensor network conforming to a perfect binary tree graph in which the lengths of inner axes are as follows: an axis corresponding to an edge that connects a node with descendant leaves indexed by $\K$ to its parent is assigned the length $n^*_\K$.
	We can obtain an equivalent representation of any such tensor as a $P$-dimensional locally connected tensor network (described in~\cref{app:extension_dims:fit_tensor:tn_lc_nn}) via the following procedure.
	For each node at level $l \in \{0, P, 2 P, \ldots,  (L - 1) P \}$ of the tree (recall $N = 2^L$ for $L \in \N$), contract it with its descendant nodes at levels $\{ l + 1, \ldots, l + (P - 1)\}$.\footnote{
		For a concrete example, let $N = 2^L = 4$ and $P = 2$ (\ie~$L = 2$).
		In this case, the perfect binary tree underlying the one-dimensional locally connected tensor network of varying widths is of height $L \cdot P = 4$ and has $N^P = 16$ leaves.
		It is converted into a perfect $4$-ary tree tensor network of height $L = 2$ by contracting the root with its two children and the nodes at level two with their children.
	}
	This results in a new tensor network whose underlying graph is a perfect $2^P$-ary tree and the remaining edges all correspond to inner axes of lengths equal to $n^*_\K = R$ for $\K \in \can^P$, \ie~in a representation of $\W$ as a $P$-dimensional locally connected tensor network.
\qed

\end{document}